\definecolor{mydarkblue}{rgb}{0,0.08,0.45}
\newtheorem{theorem}{Theorem}
\newtheorem{assumption}{Assumption}
\newtheorem{lemma}{Lemma}
\theoremstyle{definition}
\newtheorem{example}{Example}
\newtheorem{definition}{Definition}
\DeclareMathOperator{\mb}{mb}
\DeclareMathOperator{\pa}{pa}
\DeclareMathOperator{\ch}{ch}
\DeclareMathOperator{\sps}{sps}
\DeclareMathOperator{\des}{des}
\DeclareMathOperator{\rank}{rank}
\newcommand{\Sb}{\mathbf{S}}
\newcommand{\I}{\mathbf{I}}
\newcommand{\A}{\mathbf{B}}
\newcommand{\B}{\mathbf{A}}
\newcommand{\C}{\mathbf{C}}
\newcommand{\D}{\mathbf{D}}
\newcommand{\W}{\mathbf{W}}
\newcommand{\X}{\mathbf{X}}
\newcommand{\E}{\mathbf{E}}
\newcommand{\G}{\mathcal{G}}
\newcommand{\M}{\mathbf{M}}
\newcommand{\U}{\mathbf{U}}
\newcommand{\K}{\mathbf{K}}
\newcommand{\Y}{\mathbf{Y}}
\newcommand{\Z}{\mathbf{Z}}
\newcommand{\Pb}{\mathbf{P}}
\newcommand{\Hb}{\mathbf{H}}
\newcommand{\PRT}{\mathbf{\Gamma}}
\newcommand\indep{\protect\mathpalette{\protect\independenT}{\perp}}
\def\independenT#1#2{\mathrel{\rlap{$#1#2$}\mkern2mu{#1#2}}}
\begin{document}

\runningauthor{Haoyue Dai, Ignavier Ng, Yujia Zheng, Zhengqing Gao, Kun Zhang}

\twocolumn[

\aistatstitle{Local Causal Discovery with Linear non-Gaussian Cyclic Models}

\aistatsauthor{ Haoyue Dai$^{*1}$ \And Ignavier Ng$^{*1}$ \And Yujia Zheng$^{1}$ \And Zhengqing Gao$^{2}$ \And Kun Zhang$^{1,2}$}

\aistatsaddress{\hspace{-6em}$^{1}$Carnegie Mellon University\And \hspace{-6em}$^{2}$Mohamed bin Zayed University of Artificial Intelligence } ]

\begin{abstract}
\vspace{-0.6em}
Local causal discovery is of great practical significance, as there are often situations where the discovery of the global causal structure is unnecessary, and the interest lies solely on a single target variable. Most existing local methods utilize conditional independence relations, providing only a partially directed graph, and assume acyclicity for the ground-truth structure, even though real-world scenarios often involve cycles like feedback mechanisms. In this work, we present a general, unified local causal discovery method with linear non-Gaussian models, whether they are cyclic or acyclic. We extend the application of independent component analysis from the global context to independent subspace analysis, enabling the exact identification of the equivalent local directed structures and causal strengths from the Markov blanket of the target variable. We also propose an alternative regression-based method in the particular acyclic scenarios. Our identifiability results are empirically validated using both synthetic and real-world datasets.\looseness=-1
\end{abstract}
\vspace{-1.0em}
\section{INTRODUCTION}\label{sec:intro}
\vspace{-0.1em}

Causal discovery aims to identify causal relations among variables from data. In many real-world scenarios, it is not essential to determine the causal structure across all variables. Rather, the primary interest is often in unveiling the causes and effects related to specific target variables. Allocating resources to estimate a global structure for such narrowed objectives can be computationally excessive. This is exemplified in scRNA-seq data, where attempting global causal discovery to derive the gene regulatory network amongst approximately $20$k genes is not only computationally expensive but also often redundant \citep{levine2005gene}.
Local causal discovery, emphasizing the causal relations of a target variable and its neighbors, stands out as a more grounded and efficient approach. Additionally, techniques like divide-and-conquer and parallelization, when applied through local causal discovery, can often enhance the efficiency of identifying the global causal structure~\citep{ma2023local}.

\looseness=-1
Building on this motivation, several studies have delved into the discovery of local structures within a select subset of variables \citep{margaritis1999bayesian, yin2008partial, zhou2010discover, niinimaki2012local, wang2014discovering, gao2015local,gao2017local, ling2020using, ng2021reliable, yu2021feature, gupta2023local}. The distinction in this line of research lies in the estimation approaches
used to estimate these local structures, such as parent-child sets.
These approaches range from testing conditional independence relations to employing likelihood-based score functions. With appropriate tests or scores, they can offer nonparametric guarantees. Yet, without parametric assumptions, both independence tests and score functions cannot uniquely determine all directions, leading to some edges being undirected.

\looseness=-1
Moreover, most existing work in local causal discovery assume that there are no cycles in the ground-truth structure. This constrains its applicability given that cycles frequently appear in real-world contexts. These cycles can arise from various origins, including feedback mechanisms in biological systems \citep{benito2007transcriptional}, electrical engineering \citep{mason1953feedback}, or economic processes \citep{haavelmo1943statistical}. Such cyclic relationships can have profound implications, reshaping our understanding of the systems under consideration. Furthermore, in local context, one often cannot make the assumption of global acyclicity, since there is no way for the acyclicity beyond the considered subset of variables to be testable. While there has been a steady progress on causal discovery with cycles \citep{spirtes1995directed,richardson1996discovering, lacerda2008discovering, hyttinen2012learning, mooij2013cyclic, ghassami2020characterizing}, none have offered methodologies with theoretical guarantees in the context of local search.

\textbf{Contributions.} \ \
To our knowledge, this work is the first to tackle local causal discovery in cyclic models, crucial for gene regulatory networks with prevalent feedback loops and numerous genes. Moreover, we allow intersecting cycles, a known challenging case. By leveraging non-Gaussianity, our approach determines causal directions and strengths, standing in contrast to most previous local methods that only identify partially directed edges. Notably, this work offers a unified perspective on acyclic~\citep{shimizu2006lingam,shimizu2011directlingam} and cyclic~\citep{lacerda2008discovering} cases within the local context. We establish identifiability guarantees for all proposed methods, and our theoretical results have been validated in both synthetic and real-world data.\looseness=-1

\vspace{-0.1em}
\section{Problem Setup}\label{sec:problem_setting}
\subsection{Notations, Definitions, and the Goal}
Let $\G=(\mathcal{V},\mathcal{E})$ be a directed graph with the vertex set $\mathcal{V}=[d]\coloneqq\{1,2,\dots,d\}$ and the edge set $\mathcal{E}$. Denote a directed edge from vertex $i$ to vertex $j$ as $i\rightarrow j$.

Random variables $\X = (X_i)_{i=1}^d$ are generated by a linear non-Gaussian (LiNG) structural equation model (SEM)~\citep{lacerda2008discovering} w.r.t. the graph $\G$, described in the matrix form as
\begin{equation}\label{eq:linear_sem_adj}
\X = \A \X + \E,
\end{equation}
where $\E= (E_i)_{i=1}^d$ are mutually independent non-Gaussian \textit{exogenous noise} components, and $\A$ is the \textit{adjacency matrix}, with the entry $\A_{j,i}$ representing the \textit{direct causal effect} of $X_i$ on $X_j$. $\A_{j,i}\neq 0$ if and only if $i\rightarrow j\in \mathcal{E}$. Solving for $\X$ in~\cref{eq:linear_sem_adj} gives
\begin{equation}\label{eq:linear_sem_mixing}
\X = \B \E, \text{ with } \B\coloneqq (\I-\A)^{-1},
\end{equation}
i.e., $\X$ can also be expressed directly as a linear combination of the noises, through the \textit{mixing matrix} $\B$. Following~\citep{lacerda2008discovering}, we allow cycles in $\G$ under some mild assumptions (see \cref{sec:isa_ling}), and interpret $\X$ as the equilibrium of the dynamic system.

For a vertex $i\in \mathcal{V}$, denote its \textit{Markov blanket} (MB) in $\G$ as $\mb_\G(i)\coloneqq \pa_\G(i)\cup\ch_\G(i)\cup\sps_\G(i)$, the union of its \textit{parents} $\pa_\G(i)\coloneqq \{j\in\mathcal{V}: j\rightarrow i \in \mathcal{E}\}$, \textit{children} $\ch_\G(i)\coloneqq \{j\in\mathcal{V}: i\rightarrow j \in \mathcal{E}\}$, and \textit{spouses} $\sps_\G(i)\coloneqq \{j\in \mathcal{V}\backslash (\pa_\G(i)\cup\ch_\G(i)): \ch_\G(i) \cap \ch_\G(j) \neq \emptyset\}$. Assuming faithfulness, $\mb_\G(i)$ corresponds to the minimal set of variables conditioned on which all other variables are independent of $X_i$. Consequently, it is an appropriate starting point for the local search on vertex $i$.\looseness=-1

For a target vertex $T$, we provide a method in~\cref{sec:markov_blanket} to efficiently estimate $\mb_\G(T)$ from $\X$ even in the presence of cycles. Specifically, we generalize the method developed by \citet{loh2014high} in the acyclic case based on inverse covariance matrix of the distributions. Hence in the following main results (\cref{sec:inv_direct_lingam,sec:isa_ling}), we assume the oracle $\mb_\G(T)$ is available, and focus on the problem of further discovering causal effects related to $T$ from $T,\mb_\G(T),$ and their corresponding variables.\looseness=-1

\subsection{LiNG SEM and its Global Estimation}\label{subsec:global_cyclic_ling}

Our definition of a linear non-Gaussian (LiNG) cyclic model precisely follows~\citep{lacerda2008discovering}. We allow cycles in $\G$, interpret $\X$ as the equilibrium of the dynamic system. We allow overlapped cycles, but only assume that there are no ``self-loops'', i.e., $\A$ has all zeros in the diagonal, because by trivial scaling, any equilibrium even with self-loops can be equivalently entailed by another LiNG model without self-loops, as long as the self-loop strengths $\A_{i,i}\neq 1$. Moreover, we assume no cycles with strength exactly $1$, i.e., $\A$ has no eigenvalues of $1$, rendering $\I-\A$ invertible. See Section 1.2 of~\citep{lacerda2008discovering} for details.\looseness=-1

Recall that~\cref{eq:linear_sem_mixing}, $\X=\B\E$, is in the exact form of independent component analysis (ICA)~\citep{comon1994independent,hyvarinen2000independent}, where observed data $\X$ (signals) is an unknown linear invertible mixture of unknown non-Gaussian independent components $\E$ (blind sources). When all the variables in $\X$ are involved, namely, with \textit{causal sufficiency}, ICA can estimate a \textit{demixing matrix} $\W$ to separate $\X$ into independent components $\W\X$. It is shown that $\W$ identifies $\B^{-1}=\I-\A$ up to rows permutation and scaling. Interestingly, with the structural constraint of zero diagonals in $\A$ (i.e., diagonal ones in $\B^{-1}$), these indeterminacies can be further reduced. A row permutation is called \textit{admissible} if it makes $\W$ have diagonal ones with corresponding scaling. When $\G$ is acyclic, ICA-LiNGAM~\citep{shimizu2006lingam} shows that the admissible permutation is unique, resulting in the exact identification of $\A$. This is because for acyclic $\G$, its $\A$ can be simultaneously row and column permuted to be strictly lower triangular.\looseness=-1

\citet{lacerda2008discovering} generalizes ICA-LiNGAM to cyclic cases with almost a same algorithmic procedure: it begins with an ICA on $\X$ to obtain a demixing matrix $\W$, and then identifies the \textit{admissible} row permutations. The key distinction is that, in the presence of cycles in $\G$, there can be multiple admissible permutations. Denote the set of adjacency matrices recovered by all admissible permutations as $\mathcal{B}$, i.e.,\looseness=-1

\begin{restatable}{definition}{DEFLINGEQUIVCLASS}\label{def:LiNG_equiv_class}
For a LiNG model $\X=\A\X+\E$, denote\looseness=-1
    \begin{equation*}\label{eq:LiNG_equiv_class}
    \mathcal{B} \coloneqq \{\A': \A' = \I - \Pb^\pi\D\B^{-1}, \operatorname{diag}(\A')=\mathbf{0}\},
\end{equation*}
where $\B^{-1}=\I-\A$, $\D$ is an $d$-dim scaling matrix, and $\Pb^\pi$ is a permutation matrix (see~\cref{sec:isa_ling} for details) with $\pi$ enumerating permutations of $\mathcal{V}=[d]$.
\end{restatable}

Two different LiNG models from $\mathcal{B}$ entail a same equilibrium distribution and are termed \textit{distributionally equivalent}, though they share different graph structures; see Figure \ref{fig:cyclic_intro_examples} in Appendix \ref{sec:examples} for an example. Note that with linearity and non-Gaussianity, no two different acyclic SEMs are distributionally equivalent, guaranteeing the unique identification of $\A$ in LiNGAM, but there are different cyclic SEMs that are distributionally equivalent, and thus the true $\A$ can be identified up to an equivalence class. \citet{lacerda2008discovering} shows that $\mathcal{B}$, defined above from all admissible permutations, characterizes exactly the LiNG equivalence class for $\X$.\looseness=-1

\section{LOCAL LING DISCOVERY}\label{sec:isa_ling}
We develop a local causal discovery method based on independent subspace analysis (ISA), which enables the exact identification of the equivalent local directed structures and causal strengths from the MB of the target variable. We first explain how the commonly used ICA approach for discovering global causal structure \citep{shimizu2006lingam} fails in local context. We then describe the key identifiability result of ISA that we exploit, and provide a specific characterization of the ISA solution. Finally, we describe our proposed Local ISA-LiNG method, which involves (1) performing ISA on the local variables, (2) finding \textit{admissible} permutations on the ISA solutions, and (3) identifying local structures and coefficients from the permuted solutions. We prove that, interestingly, with only local variables, our proposed algorithm can identify exactly what can be identified globally with all variables.\looseness=-1

\subsection{Independent Subspace Analysis}\label{subsec:isa_definition}
Having introduced the cyclic LiNG and its ICA-based global estimation method, we now turn to our local case. When only a subset of variables (e.g., a target $T$ and its $\mb_\G(T)$) is involved, the main challenge lies in causal insufficiency: with hidden confounders, ICA cannot demix mutually independent components.\looseness=-1

\begin{example}\label{example:hidden_confounder_ica}
    In~\cref{fig:isa_ling_examples}(i), consider a target $T=4$ with $\mb_\G(T)=\{2,3\}$. With a confounder $X_1$ outside of $T$'s MB, ICA is not applicable on $\{X_2,X_3,X_4\}$, as these three signals mix four sources ($\{E_1,E_2,E_3,E_4\}$), and no invertible matrix $\W\in Gl(3)$ can separate out any three mutually independent components.\looseness=-1
\end{example}
Such an issue is typically pronounced in overcomplete ICA (OICA)~\citep{hyvarinen2000independent}, where the number of observed signals is less than the number of mixed sources. There are indeed work on LiNGAM with hidden confounders using OICA~\citep{hoyer2008estimation}, but OICA is known to be both computationally and statistically ineffective. In this work, our methods do not involve OICA, away from the difficulties of trying to separate out that many mutually independent sources from only a few signals. Instead, we only seek the separation ``as independent as possible'', and show that it is informative enough. To achieve this, independence subspace analysis (ISA)~\citep{hyvarinen2000emergence,theis2006general} comes into play.\looseness=-1

\begin{definition}\label{def:irreducibility}
    An $m$-dim random vector $\Z$ is called \textit{irreducible} if it contains no lower-dim independent components, i.e., no invertible matrix $\W\in Gl(m)$ can decompose $\W\Z = (\Z'_1,\Z'_2)$ into independent $\Z'_1 \indep \Z'_2$.\looseness=-1
\end{definition}

\begin{definition}\label{def:ISA_theis}
    For an $m$-dim random vector $\Y$, an invertible matrix $\W$ is called an \textit{independent subspace analysis (ISA)} solution of $\Y$ if $\W\Y = (\Z^\intercal_1, \dots, \Z^\intercal_k)^\intercal$ consists of mutually independent, irreducible random vectors $\Z_i$. The corresponding partition $\PRT_\W$ of indices $[m]$ is called the \textit{ISA partition} associated with $\W$.\looseness=-1
\end{definition}
Given a random vector $\Y$ with existing covariance and no Gaussian components, \citet{theis2006general} shows that an ISA solution of $\Y$ exists and, similar to ICA, is unique except for general scaling and permutation.

Before stating the result of ISA, we first introduce some notations. For a permutation $\pi:\{1,\dots,m\}\mapsto \{1,\dots,m\}$ of $m$ elements, let $\pi_i$ be the $i$-th element in $\pi$, and $\pi[j]$ be the index of element $j$ in $\pi$, i.e., $\pi_{\pi[j]}=j$. For an ordered subset $\Sb\subset [m]$, denote $\pi_\Sb \coloneqq (\pi_i: i\in\Sb)$ and $\pi[\Sb] \coloneqq (\pi[j]: j\in\Sb)$, where $(\cdot)$ means ordered sets. Define the $m\times m$ permutation matrix $\Pb^\pi$ by $\Pb^\pi_{i,j}=1$ if $\pi_i = j$ and $0$ otherwise. Given a partition $\PRT$ of $[m]$, an $m\times m$ block diagonal matrix $\D$ is said to be a \textit{general scaling matrix} consistent with $\PRT$, if $\forall \Sb \in \PRT$, $\rank(\D_{\Sb,\Sb})=|\Sb|$, and $\D_{\Sb,[m]\backslash\Sb}=\mathbf{0}$. Here the notation like $\D_{\Sb_1,\Sb_2}$ means the submatrix of $\D$ with rows and columns indexed by ordered sets $\Sb_1$ and $\Sb_2$ respectively. Subscripts on random vectors denotes indexing similarly, e.g., $\X_\Sb=(X_i:i\in\Sb)$. We have:\looseness=-1

\begin{theorem}[Indeterminacy of ISA; Theorem 1.8 of~\citep{theis2006general}]\label{thm:indeterminacies_of_isa}
    Given an $m$-dim random vector $\Y$, if both $\W_1$ and $\W_2$ are ISA solutions of $\Y$ with partitions $\PRT_{\W_1}, \PRT_{\W_2}$, then there exists a permutation $\pi$ of $[m]$ and a general scaling matrix $\D$ consistent with $\PRT_{\W_1}$ s.t. $\W_2 = \Pb^\pi \D \W_1$, and $\forall \Sb_1 \in \PRT_{\W_1}$, $\exists \Sb_2 \in \PRT_{\W_2}$, with $\Sb_2$ and $\pi[\Sb_1]$ having the same elements.\looseness=-1
\end{theorem}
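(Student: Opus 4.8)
The statement is the ``uniqueness up to scaling and permutation'' of the ISA decomposition, and the plan is to study the invertible matrix $\M\coloneqq\W_2\W_1^{-1}$, which satisfies $\M(\W_1\Y)=\W_2\Y$, and to show that it is, blockwise, a permutation of a block-diagonal matrix. First I would reduce to the case $\W_1=\I$: replacing $\Y$ by $\W_1\Y$ preserves the hypotheses (a covariance still exists and there is still no Gaussian component), makes $\I$ an ISA solution with partition $\PRT_{\W_1}$, and makes $\M$ an ISA solution with partition $\PRT_{\W_2}$ (post-multiplication by $\W_1^{-1}$ is a partition-preserving bijection of ISA solutions). So we may assume $\Y=(\Y_\Sb)_{\Sb\in\PRT_{\W_1}}$ is a stack of mutually independent irreducible blocks and $\M\Y=(\tilde\Z_\T)_{\T\in\PRT_{\W_2}}$ is another such stack; it then suffices to prove $\M=\Pb^\pi\D$ with $\D$ a general scaling matrix consistent with $\PRT_{\W_1}$ and $\pi$ sending each $\Sb\in\PRT_{\W_1}$, via $\pi[\cdot]$, onto a block of $\PRT_{\W_2}$, since then $\W_2=\M\W_1=\Pb^\pi\D\W_1$.

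View $\M$ as a block matrix with row-blocks indexed by $\PRT_{\W_2}$ and column-blocks by $\PRT_{\W_1}$, so that $\tilde\Z_\T=\sum_{\Sb}\M_{\T,\Sb}\Y_\Sb$. The engine is a vector-valued extension of the Darmois--Skitovich theorem (of the kind used by \citet{theis2006general}): if two linear images $\sum_\Sb A_\Sb\U_\Sb$ and $\sum_\Sb B_\Sb\U_\Sb$ of mutually independent random vectors are mutually independent, and each $\U_\Sb$ is irreducible and carries no Gaussian component, then for every $\Sb$ either $A_\Sb=\mathbf 0$ or $B_\Sb=\mathbf 0$. Applying this to every pair $\tilde\Z_\T\indep\tilde\Z_{\T'}$ with $\T\ne\T'$ shows that each column-block $\Sb$ of $\M$ has a nonzero sub-block in \emph{at most} one row-block; since $\M$ is invertible (no zero column) it is in \emph{exactly} one, defining a map $\sigma\colon\PRT_{\W_1}\to\PRT_{\W_2}$ with $\M_{\T,\Sb}\ne\mathbf 0\iff\T=\sigma(\Sb)$. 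Running the identical argument on $\M^{-1}$ — whose column-blocks are indexed by $\PRT_{\W_2}$ and which satisfies $\Y_\Sb=\sum_\T(\M^{-1})_{\Sb,\T}\tilde\Z_\T$ — yields a map $\tau\colon\PRT_{\W_2}\to\PRT_{\W_1}$ with the analogous support property.

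Comparing the two blockwise-sparse matrices with $\M\M^{-1}=\M^{-1}\M=\I$ forces $\sigma$ and $\tau$ to be mutually inverse bijections, so $|\PRT_{\W_1}|=|\PRT_{\W_2}|$; moreover the unique nonzero block $\M_{\sigma(\Sb),\Sb}$ in each column-block acquires both a left and a right inverse from the identity blocks of $\M\M^{-1}$ and $\M^{-1}\M$, hence is square and invertible and $|\sigma(\Sb)|=|\Sb|$. Choosing a permutation $\pi$ of $[m]$ with $\pi[\Sb]=\sigma(\Sb)$ for every $\Sb\in\PRT_{\W_1}$ (possible since these sets have equal size and $\{\sigma(\Sb)\}$ partitions $[m]$), one checks that $\D\coloneqq(\Pb^\pi)^{\intercal}\M$ is block-diagonal with respect to $\PRT_{\W_1}$ with invertible diagonal blocks $\D_{\Sb,\Sb}=\M_{\sigma(\Sb),\Sb}$ — i.e.\ a general scaling matrix consistent with $\PRT_{\W_1}$ — and $\M=\Pb^\pi\D$. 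Unwinding the definitions gives $\pi[\Sb_1]=\sigma(\Sb_1)\in\PRT_{\W_2}$ for each $\Sb_1\in\PRT_{\W_1}$, which is precisely the claimed block correspondence.

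The main obstacle is the vector-valued Darmois--Skitovich step, and in particular obtaining the conclusion at the level of whole blocks rather than of individual coordinates; this is exactly where both hypotheses enter, since a \emph{reducible} block could be split as $(\U',\U'')$ with $\U'\indep\U''$ and its pieces routed to different output blocks, while a Gaussian piece could be ``cancelled'' between two output blocks, and neither phenomenon is otherwise excluded. I would either cite the known vector-valued Skitovich--Darmois theory that \citet{theis2006general} relies on, or prove the needed consequence by invoking the classical scalar theorem along one-dimensional projections (forbidding any non-Gaussian scalar direction of a block from appearing in two outputs) and then using irreducibility to upgrade this to the vanishing of a whole sub-block. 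The remaining ingredients — the reduction to $\W_1=\I$, the $\sigma$--$\tau$ bijection via $\M\M^{-1}=\I$, and the permutation bookkeeping — are routine.
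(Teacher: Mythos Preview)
The paper does not prove this theorem; it is stated as Theorem~1.8 of \citet{theis2006general} and used as a black box (the text after the theorem simply says ``ISA can be identified up to such indeterminacies, and can be estimated as efficiently as square ICA~\citep{theis2006general}''). There is therefore no in-paper proof to compare against.

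Evaluated on its own, your proposal follows the standard route for such uniqueness results and is essentially the argument one would expect from \citet{theis2006general}: reduce to $\W_1=\I$, study the transfer matrix $\M=\W_2\W_1^{-1}$ blockwise, and use a Darmois--Skitovich-type separation argument to force each column-block to hit exactly one row-block. Your identification of the crux --- that the vector-valued Darmois--Skitovich step is where both irreducibility and the absence of Gaussian components are genuinely consumed --- is correct, and the bookkeeping via $\M\M^{-1}=\I$ to promote the map $\sigma$ to a size-preserving bijection is clean. The only place I would tighten is your fallback plan of proving the vector step ``along one-dimensional projections'': the scalar Darmois--Skitovich theorem tells you that any coordinate direction appearing in two independent outputs must be Gaussian, but upgrading ``every scalar projection that leaks is Gaussian'' to ``the whole sub-block vanishes'' needs a little more than invoking irreducibility --- you would want to argue that a nonzero $A_\Sb$ and nonzero $B_\Sb$ would give a nontrivial splitting of the image of $\U_\Sb$ into independent pieces (or a Gaussian marginal), contradicting irreducibility together with the no-Gaussian-component hypothesis. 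Citing the vector-valued Skitovich--Darmois result used in \citet{theis2006general} is the cleaner option.
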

ISA can be identified up to such indetermincaies, and can be estimated as efficiently as square ICA~\citep{theis2006general}. ICA can then be viewed as a special case of ISA, where all subspaces are of one-dimension.\looseness=-1

\subsection{One Specific ISA Characterization}\label{subsec:BMM_inv}
Given a vertex subset $\Sb \subset \mathcal{V}$ and the corresponding variables $\X_\Sb$, \cref{subsec:isa_definition} shows that although ICA on $\X_\Sb$ may not be applicable, an ISA solution  exists and is unique up to some indeterminacies. However, what is such an ISA solution? In the causally sufficient (i.e, ICA) case, a demixing matrix $\B^{-1} = \I-\A$ follows naturally from~\cref{eq:linear_sem_mixing}, while this is less obvious in the ISA case. Below we give a specific characterization.\looseness=-1

\begin{figure}[t]
\centering
\includegraphics[width=0.32\textwidth]{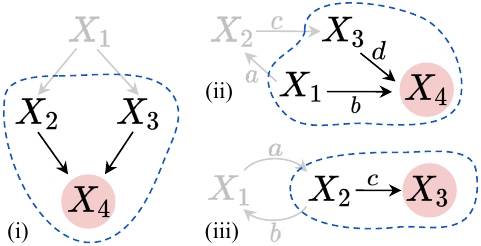}
\caption{For~\cref{example:hidden_confounder_ica,example:BSS_inv_vs_ASS,example:BSS_inv_no_diagonal_ones}. On each $\G$, the target $T$ is colored red, and its $\mb_\G(T)$ is circled by blue and colored dark. Same marks apply henceforth.\looseness=-1}
\label{fig:isa_ling_examples}
\end{figure}

\begin{restatable}[One characterization of ISA in LiNG model]{theorem}{THMONECHARACTERIZATIONOFISAINLING}\label{thm:BSS_inv_is_ISA}
    Assume $\X$ follows a LiNG SEM $\X=\B\E$. For any vertex subset $\Sb \subset \mathcal{V}$, the inverse of the principal submatrix of the mixing matrix $\B$ indexed by $\Sb$, denoted by $\B_{\Sb,\Sb}^{-1}$, is an ISA solution of $\X_\Sb$.
\end{restatable}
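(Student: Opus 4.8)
The plan is to exhibit an explicit block decomposition of $\W\X_\Sb$ with $\W := \B_{\Sb,\Sb}^{-1}$ and verify the two defining properties of an ISA solution: the blocks are mutually independent, and each is irreducible. Substituting $\X_\Sb = \B_{\Sb,\Sb}\E_\Sb + \B_{\Sb,\mathcal{V}\setminus\Sb}\E_{\mathcal{V}\setminus\Sb}$ gives $\W\X_\Sb = \E_\Sb + \M\,\E_{\mathcal{V}\setminus\Sb}$ with $\M := \B_{\Sb,\Sb}^{-1}\B_{\Sb,\mathcal{V}\setminus\Sb}$ (here I use that $\B_{\Sb,\Sb}$ is invertible, which the statement presupposes and which is generic), so coordinate $i \in \Sb$ of $\W\X_\Sb$ is its own exogenous noise $E_i$ plus a fixed linear combination of the ``external'' noises $\{E_k : k \in \mathcal{V}\setminus\Sb\}$. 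I would then form the undirected graph $H$ on vertex set $\Sb$ that joins $i$ and $j$ whenever some external noise enters both coordinates, i.e.\ $\M_{i,k} \neq 0 \neq \M_{j,k}$ for some $k$, take its connected components $\Sb = \Sb_1 \sqcup \cdots \sqcup \Sb_r$, and claim that this partition exhibits $\W$ as an ISA solution.

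Mutual independence of the blocks $(\W\X_\Sb)_{\Sb_1}, \dots, (\W\X_\Sb)_{\Sb_r}$ is immediate: each block is a deterministic function of $\E_{\Sb_t}$ together with the external noises it contains; distinct blocks contain disjoint sets of external noises (a shared one would create a cross-block edge of $H$) and the $\Sb_t$ themselves are disjoint, so the blocks are functions of disjoint subfamilies of the mutually independent components of $\E$. The substance is irreducibility of a single block $\V := (\W\X_\Sb)_{\Sb_t}$, which I would prove by contradiction using the Darmois--Skitovich theorem. Suppose an invertible $V$ splits $V\V$ into independent nonempty subvectors $\Z_1, \Z_2$. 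In the source basis $(E_\ell)$ the vector $V\V$ has mixing matrix $\widetilde A = V[\I \mid \M_0]$, where $\M_0$ collects the (necessarily nonzero) columns of $\M$ that enter $\V$; every column of $\widetilde A$ is nonzero, since the ones indexed by $\Sb_t$ are columns of the invertible $V$ and the remaining ones are nonzero combinations of these by linear independence. By Darmois--Skitovich (all $E_\ell$ non-Gaussian), each source feeds into only one of $\Z_1, \Z_2$, so each column of $\widetilde A$ lies entirely in the coordinate subspace $U_1$ carrying $\Z_1$ or the one $U_2$ carrying $\Z_2$. Applied to the columns of $V$ itself this splits that basis --- hence the index set $\Sb_t$ --- into nonempty parts $I_1 \sqcup I_2$ spanning $U_1$ and $U_2$; applied to an external noise $E_k$ and combined with the linear independence of the columns of $V$ indexed by $I_1$ (resp.\ $I_2$), it forces the support $\{i : \M_{i,k} \neq 0\}$ to lie wholly in $I_1$ or wholly in $I_2$. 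Thus no external noise connects $I_1$ to $I_2$, so $H$ restricted to $\Sb_t$ is disconnected, contradicting that $\Sb_t$ is a connected component. Hence $\V$ is irreducible, and $\W = \B_{\Sb,\Sb}^{-1}$ is an ISA solution of $\X_\Sb$.

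The only real obstacle I anticipate is this irreducibility step: turning ``$\V$ is reducible'' into the combinatorial claim that some external noise must bridge two parts of $\Sb_t$, which the Darmois--Skitovich argument rules out once it is paired with the linear independence of the columns of the hypothetical demixing matrix. The opening algebra and the mutual-independence argument are routine, and once the partition above is known to work, the conclusion is just the definition of an ISA solution.
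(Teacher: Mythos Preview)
Your proof is correct and follows essentially the same route as the paper's: both compute $\B_{\Sb,\Sb}^{-1}\X_\Sb = \E_\Sb + \M\,\E_{\mathcal{V}\setminus\Sb}$ (the identity block on the $\Sb$ columns is the crux), and both obtain irreducibility by contradiction via Darmois--Skitovich, arguing that a hypothetical split of a block forces the internal-noise rows to partition and every external noise to land entirely on one side. The one noteworthy difference is that you build the ISA partition explicitly as the connected components of the shared-external-noise graph $H$, which makes both the mutual-independence step and the final contradiction combinatorially transparent; the paper instead leaves the partition implicit (effectively the finest decomposition into independent pieces) and reaches the same contradiction through a rank/nullity count on the blocks $\C_{\M,\Pb_1},\C_{\M,\Pb_2}$---equivalent in content but less explicit about what the subspaces actually are.
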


\cref{thm:BSS_inv_is_ISA} characterizes a specific ISA matrix $\B_{\Sb,\Sb}^{-1}$ that separates $\X_\Sb$ ``as independent as possible'', i.e., $\B_{\Sb,\Sb}^{-1} \X_\Sb$ produces irreducible independent subspaces. The proof is in~\cref{proof:thm_BSS_inv_is_ISA}. However, before delving into further identification of $\B_{\Sb,\Sb}^{-1}$, let us first closely examine and understand what it represents.\looseness=-1

Recall that in ICA, the adjacency matrix $\A$ that represents the causal structure and strengths can be directly read off of the demixing characterization, $\B^{-1}=\I-\A$. However, in ISA, the local adjacencies may not be as so straightforward. $\B_{\Sb,\Sb}^{-1}$ is the Schur complement of $[d] \backslash \Sb$ block in $\I - \A$. Typically, $\B_{\Sb,\Sb}^{-1}$ does not equal $\I - \A_{\Sb,\Sb}$, and $\I - \A_{\Sb,\Sb}$ is not an ISA solution either:\looseness=-1

\begin{example}\label{example:BSS_inv_vs_ASS}

    In~\cref{fig:isa_ling_examples}(ii), consider $\Sb=(1,3,4)$, i.e., a target $T=4$ and its $\mb_\G(T)=\{1,3\}$. The ISA characterization $\B_{\Sb,\Sb}^{-1}$ separates $\X_\Sb$ into three independent irreducible subspaces ($1$-dim components):\looseness=-1  
    $$\B_{\Sb,\Sb}^{-1}\X_\Sb=\begin{bmatrix}1&0&0 \\ -ac&1&0\\ -b&-d&1\end{bmatrix}\begin{bmatrix}X_1 \\ X_3\\ X_4\end{bmatrix}=\begin{bNiceArray}{c}[margin]
    E_1 \\ cE_2+E_3\\ E_4\CodeAfter \SubMatrix{.}{1-1}{1-1}{\}}[xshift=3mm]\SubMatrix{.}{2-1}{2-1}{\}}[xshift=3mm]\SubMatrix{.}{3-1}{3-1}{\}}[xshift=3mm]
\end{bNiceArray} \ ,\\[-0.2em]$$
    but the local adjacencies $\I-\A_{\Sb,\Sb} \neq \B_{\Sb,\Sb}^{-1}$, and by
    $$(\I-\A_{\Sb,\Sb})\X_\Sb=\begin{bmatrix}1&0&0 \\ 0&1&0\\ -b&-d&1\end{bmatrix}\begin{bmatrix}X_1 \\ X_3\\ X_4\end{bmatrix}=\begin{bNiceArray}{c}[margin]
    X_1 \\ X_3\\ E_4\CodeAfter \SubMatrix{.}{1-1}{2-1}{\}}[xshift=3mm]\SubMatrix{.}{3-1}{3-1}{\}}[xshift=3mm]
\end{bNiceArray} \ ,\\[-0.2em]$$
    it is not an ISA solution, as it produces only two independent subspaces, of which the first one $(X_1^\intercal,X_3^\intercal)^\intercal$ is not irreducible with a decomposition $\begin{bmatrix}1&0\\-ac&1\end{bmatrix}$.\looseness=-1
\end{example}
Write the matrix inverse in block form we will have:
\begin{equation}\label{eq:BSS_inv_outside}
\I - \B_{\Sb,\Sb}^{-1} = \A_{\Sb,\Sb} + \A_{\Sb,\bar{\Sb}}(\I-\A_{\bar{\Sb},\bar{\Sb}})^{-1}\A_{\bar{\Sb},\Sb},
\end{equation}
where $\bar{\Sb}\coloneqq\mathcal{V}\backslash\Sb$. By~\cref{eq:BSS_inv_outside}, the $(i,j)$-th entry of $\I - \B_{\Sb,\Sb}^{-1}$ corresponds not only to the direct causal effect from $j$ to $i$, but also the total causal effect from $j$ to $i$ through all other variables outside of $\Sb$.

With this in mind, we note an issue on diagonals: while the global demixing matrix $\B^{-1}=\I-\A$ always has diagonal ones as we assume no self-loops, it may not be the case locally. Specifically, if $\G$ is acyclic, $\B_{\Sb,\Sb}^{-1}$ still has diagonal ones, but this does not hold for cyclic $\G$:\looseness=-1

\begin{example}\label{example:BSS_inv_no_diagonal_ones}
    Consider the LiNG SEM in~\cref{fig:isa_ling_examples}(iii).\looseness=-1
    $$\A=\begin{bmatrix}0&b&0 \\ a&0&0\\ 0&c&0\end{bmatrix}; \ \ \B=\frac{1}{1-ab}\begin{bmatrix}1&b&0 \\ a&1&0\\ ac&c&1-ab\end{bmatrix}.\\[-0.2em]$$
    Let $\Sb=(2,3)$, i.e., $T=3$ and its $\mb_\G(T)=\{2\}$,
    $$\B_{\Sb,\Sb}^{-1}=\begin{bmatrix}1-ab&0 \\ -c&1\end{bmatrix},\\[-0.2em]$$
    where the diagonal entry on $X_2$ is not one. This is because $X_2$ is on a cycle outside of $\Sb$, which, from the local view of $\Sb$, is equivalent to a self-loop on $X_2$. The strength of this ``self-loop'' is thus unidentifiable.
\end{example}
\subsection{Local LiNG Identification from ISA}\label{subsec:local_ling_identification_from_isa}
Having defined a specific ISA characterization $\B_{\Sb,\Sb}^{-1}$, we are now left with the task to post-process any general ISA solution to this specific one (and its equivalence class, if any). Recall that in the global ICA case, the adjacency matrix equivalence class $\mathcal{B}$ directly stems from row permutations on any demixing matrix $\W$. However, with ISA, we face more complex cross-rows indeterminacies (\cref{thm:indeterminacies_of_isa}). How to reduce them? Moreover, even if $\B_{\Sb,\Sb}^{-1}$ is exactly recovered, a challenge lies still in translating it back into LiNG model parameters, as it may not directly represent adjacencies and may be unidentifiable due to external paths (\cref{example:BSS_inv_vs_ASS,example:BSS_inv_no_diagonal_ones}). Then, what is identifiable locally, and how? We address these questions below.\looseness=-1

Consider a target vertex $T$ and its $\mb_\G(T)$. Let $\Sb$ be $\{T\}\cup \mb_\G(T)$ with $m$ elements. W.l.o.g., we rename vertices s.t. $\Sb$ reads $1$ to $m$, i.e., $\Sb=[m]\subset[d]=\mathcal{V}$. Assume a LiNG SEM $\X=\A\X+\E=\B\E$. Perform ISA on $\X_\Sb$, we obtain a solution $\W$ and the associated subspace partition $\PRT_\W$. By~\cref{thm:indeterminacies_of_isa,thm:BSS_inv_is_ISA}, $\W$ can be row-permuted and subspace-scaled into $\B_{\Sb,\Sb}^{-1}$.\looseness=-1

\subsubsection{Admissible Permutations}
So the first step is to ``re-permute'' $\W$. Since columns of $\W$ correspond exactly to $X_1$ through $X_m$, rows permutation of $\W$ can be seen as assigning names to each row, thereby forming their one-to-one correspondence to $X_1$ through $X_m$ also. Intuitively, rows within a same multi-dim subspace always correspond to variables with common hidden confounders and are thus mutually unidentifiable. However, different subspaces collectively, especially singleton subspaces (components), should be re-identified to their correct locations. Nonetheless, we first note that the nonzero-diagonal permutation as in ICA, is incorrect here:\looseness=-1

\begin{figure}[t]
\centering
\includegraphics[width=0.37\textwidth]{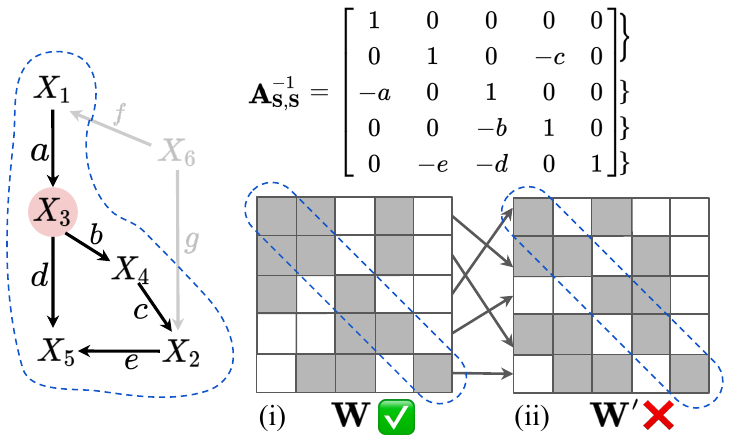}
\caption{For~\cref{example:permutation_diagonal_nonzeros_on_isa}: row permutation on ISA matrices with nonzero diagonals can be entirely incorrect.\looseness=-1}
\label{fig:permutation_diagonal_nonzeros_on_isa}
\end{figure}

\begin{example}\label{example:permutation_diagonal_nonzeros_on_isa}
    Consider an acyclic $\G$ as in~\cref{fig:permutation_diagonal_nonzeros_on_isa}. Let $\Sb$ be $(1,2,3,4,5)$, i.e., a target $T=3$ and its $\mb_\G(T)$. The true but unknown $\B_{\Sb,\Sb}^{-1}$ is provided for reference. We are only given an ISA output $\W$, as in (i), and its $\PRT_\W=\{(1,2),(3),(4),(5)\}$. Actually, $\W$ is just scaled from $\B_{\Sb,\Sb}^{-1}$ without permutation, though this is unknown. Comparing $\W$ to $\B_{\Sb,\Sb}^{-1}$, we notice that within the subspace of the 1st and 2nd rows, the nonzero entries are mixed by the general scaling. If we were to still follow the ``admissible'' criteria of nonzero diagonals as in ICA, we see that $\W$ is already satisfied (and is indeed correct). However, another permutation $\W'$ as in (ii), is also satisfied but is entirely incorrect (even on singletons' locations), leading to incorrect edges like $2\rightarrow 4$, $3\rightarrow 1$.\looseness=-1

    Why does incorrect permutation (ii) occur? Note that $\B_{\Sb,\Sb}^{-1}$ possesses a unique row permutation (itself) with nonzero diagonals, so the blame falls on the scaling to 1st and 2nd rows with more nonzeros. Fortunately, these spurious nonzeros reveal themselves via rank deficiency. In (ii), even though nonzero diagonals exist, the diagonal block of the first subspace, $\W'_{(2,4),(2,4)}$, is proportional to $[1, \ -c]$ and has rank 1. Inspired by this, we can eliminate spurious nonzeros by forcing not the nonzero diagonal entries as in ICA, but the invertible diagonal blocks, formally described below.\looseness=-1
\end{example}
\begin{definition}\label{def:admissible_block_permutations}
    Given an ISA solution $\W$ and the associated partition $\PRT_\W$, a permutation $\pi$ is called \textit{admissible}, if $\forall \Sb_i \in \PRT_\W$, $\rank((\Pb^\pi \W)_{\pi[\Sb_i], \pi[\Sb_i]})=|\Sb_i|$.\looseness=-1
\end{definition}
Admissible permutations defined in~\cref{def:admissible_block_permutations} are ``sound and complete''. See~\cref{proof:thm_correct_isa_ling} for formal definition and proof. Roughly speaking, all such admissible rows permutations correspond exactly to all rows permutations on $\B_{\Sb,\Sb}^{-1}$ with nonzero diagonals (viewing each subspace collectively), which then correspond exactly to the LiNG equivalence class on $\Sb$.

\subsubsection{Identifiable Local Causal Effects}
Having admissible permutations, now we proceed to identify local causal structures and coefficients. As demonstrated in~\cref{example:BSS_inv_vs_ASS,example:BSS_inv_no_diagonal_ones}, ISA matrices is not overall reliable. However, note that the misidentification of an $X_i$ on both examples can be attributed to an incoming path (either in a cycle or not) outside of $\Sb$. This immediately sparks us that if all of $i$'s parents are included in $\Sb$, such issues should not arise:

\begin{restatable}{lemma}{LEMMAWHENPARENTSAREINISA}\label{lemma:when_parents_are_in_ISA}
    Given an ISA solution $\W$ and $\PRT_\W$ on $\X_\Sb$, $\forall i\in \Sb$, if $\pa_\G(i)\subset \Sb$, then its exogenous noise component $E_i$ is separated out, i.e., $\exists j\in [m]$ s.t. $(j)\in \PRT_\W$ and $(\W\X_\Sb)_j = cE_i$ with a scaling factor $c$. Moreover, the incoming causal strengths to $X_i$ are identified up to $c$, i.e., the row vector $\W_j = c (\I - \A_{\Sb,\Sb})_i$.
\end{restatable}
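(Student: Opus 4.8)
The plan is to work entirely through the explicit ISA characterization $\B_{\Sb,\Sb}^{-1}$ supplied by~\cref{thm:BSS_inv_is_ISA}. In short: I would first show that when $\pa_\G(i)\subset\Sb$ the $i$-th coordinate of $\B_{\Sb,\Sb}^{-1}\X_\Sb$ is exactly $E_i$, then argue that this forces $\{i\}$ to be a singleton block of the ISA partition associated with $\B_{\Sb,\Sb}^{-1}$, and finally transport both conclusions to an arbitrary ISA solution $\W$ using the indeterminacy theorem~(\cref{thm:indeterminacies_of_isa}).

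For the first part, since every parent of $i$ lies in $\Sb$, the $i$-th row of $\A$ is supported on columns in $\Sb$, i.e.\ $\A_{i,\bar{\Sb}}=\mathbf 0$ with $\bar{\Sb}=\mathcal V\backslash\Sb$. Plugging this into~\cref{eq:BSS_inv_outside} kills the ``external path'' term in that row and yields $(\B_{\Sb,\Sb}^{-1})_i=(\I-\A_{\Sb,\Sb})_i$; multiplying by $\X_\Sb$ and invoking the structural equation $X_i=\sum_{k\in\pa_\G(i)}\A_{i,k}X_k+E_i$, whose right-hand side involves only variables in $\X_\Sb$, gives $(\B_{\Sb,\Sb}^{-1}\X_\Sb)_i=E_i$. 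Next, writing $\X_\Sb=\B_{\Sb,\Sb}\E_\Sb+\B_{\Sb,\bar{\Sb}}\E_{\bar{\Sb}}$ shows $\B_{\Sb,\Sb}^{-1}\X_\Sb=\E_\Sb+(\B_{\Sb,\Sb}^{-1}\B_{\Sb,\bar{\Sb}})\E_{\bar{\Sb}}$, so every coordinate other than the $i$-th is a function of $\{E_k:k\in\Sb\backslash\{i\}\}\cup\E_{\bar{\Sb}}$, which is jointly independent of $E_i$ by mutual independence of the noises. A nondegenerate one-dimensional component is irreducible, so since $\B_{\Sb,\Sb}^{-1}$ is an ISA solution, any block of $\PRT_{\B_{\Sb,\Sb}^{-1}}$ containing $i$ together with another index would split off $E_i$ and fail to be irreducible; hence $\{i\}\in\PRT_{\B_{\Sb,\Sb}^{-1}}$.

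To transport this, I would apply~\cref{thm:indeterminacies_of_isa} with $\W_1=\B_{\Sb,\Sb}^{-1}$ and $\W_2=\W$: there is a permutation $\pi$ and a general scaling matrix $\D$ consistent with $\PRT_{\B_{\Sb,\Sb}^{-1}}$ with $\W=\Pb^\pi\D\B_{\Sb,\Sb}^{-1}$, and the block $\{i\}$ maps to the block $\pi[\{i\}]=\{\pi[i]\}$ of $\PRT_\W$. Setting $j\coloneqq\pi[i]$ gives $(j)\in\PRT_\W$. Because $\{i\}$ is a singleton block, the $i$-th row of $\D$ equals $\D_{i,i}e_i^\intercal$ with $\D_{i,i}\neq0$, so row $j$ of $\W$ is row $\pi_j=i$ of $\D\B_{\Sb,\Sb}^{-1}$, namely $\W_j=\D_{i,i}(\B_{\Sb,\Sb}^{-1})_i=\D_{i,i}(\I-\A_{\Sb,\Sb})_i$. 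Taking $c\coloneqq\D_{i,i}$ then gives $\W_j=c(\I-\A_{\Sb,\Sb})_i$ and $(\W\X_\Sb)_j=\W_j\X_\Sb=cE_i$, which is exactly the claim.

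I expect the delicate point to be the singleton argument together with the correct orientation of~\cref{thm:indeterminacies_of_isa}: one must be sure that a coordinate equal to an exogenous noise really constitutes its own block of the ISA partition of that specific solution, and must take $\B_{\Sb,\Sb}^{-1}$ as $\W_1$ so that this singleton block and its scalar scaling are carried across cleanly rather than dispersed by $\pi$ and $\D$. Keeping the index conventions straight among $\pi$, $\pi[\cdot]$, the partition correspondence, and the local relabeling $\Sb=[m]$ is the other place where errors could creep in; everything else is routine matrix bookkeeping.
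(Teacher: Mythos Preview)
Your proposal is correct and essentially mirrors the paper's proof: both establish $(\B_{\Sb,\Sb}^{-1})_i=(\I-\A_{\Sb,\Sb})_i$ from $\pa_\G(i)\subset\Sb$, deduce $(\B_{\Sb,\Sb}^{-1}\X_\Sb)_i=E_i$, and then invoke the ISA indeterminacy~(\cref{thm:indeterminacies_of_isa}) to carry the conclusion to an arbitrary solution $\W$. The only cosmetic differences are that you obtain the row identity via~\cref{eq:BSS_inv_outside} rather than by direct manipulation of the structural equation, and you spell out the singleton-block argument and the transport step more explicitly than the paper does.
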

\cref{lemma:when_parents_are_in_ISA} becomes especially helpful in our case: by definition of MB, for any of $T$ and its children, all its parents are included in $\{T\}\cup \mb_\G(T)$, thus blocking
all confounding paths, enabling recovery of exogenous noise, and moreover, the exact causal strengths. Once all edges into $T$ and $T$'s children are identified, we've attained the goal of local causal discovery, as these edges include all edges to and from $T$. As for other variables in MB, e.g., parents and spouses, they may be entangled within subspaces and remain unidentifiable, but this does not pose a concern anymore.\looseness=-1

By~\cref{lemma:when_parents_are_in_ISA}, $T$ and its children produce independent components ($1$-dim subspaces) by ISA. But conversely, an unconfounded parent or spouse can also produce a $1$-dim subspace. Then, which of these components correspond exactly to our main focus, $T$ and its children? The answer can be read off of $T$'s column in $\W$:

\begin{restatable}{lemma}{LEMMATCOLUMNASTANDCHILDREN}\label{lemma:T_column_as_T_and_children}
    Given an ISA solution $\W$ and $\PRT_\W$ on $\X_\Sb$ for $\Sb=\{T\}\cup \mb_\G(T)$. Denote by $\mathbf{C}\coloneqq \operatorname{supp}(\W_{:,T}) = (i\in[m]: \W_{i,T}\neq 0)$. Then $\forall i\in\mathbf{C}$, $\W_i$ must produce a single component, i.e., $(i)\in\PRT_\W$. Moreover, $\{\pi[\C]: \pi \text{ admissible to }\W \} = \{\operatorname{supp}(\A'_{:,T}):\A' \in \mathcal{B}\}$.
\end{restatable}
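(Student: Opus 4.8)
The plan is to reduce the statement to the canonical ISA solution $\B_{\Sb,\Sb}^{-1}$ from \cref{thm:BSS_inv_is_ISA}, read off its $T$-th column explicitly, and then transport both assertions to a general ISA solution $\W$ via the indeterminacy in \cref{thm:indeterminacies_of_isa}. The starting computation is with \cref{eq:BSS_inv_outside}: in $\I-\B_{\Sb,\Sb}^{-1}=\A_{\Sb,\Sb}+\A_{\Sb,\bar{\Sb}}(\I-\A_{\bar{\Sb},\bar{\Sb}})^{-1}\A_{\bar{\Sb},\Sb}$, the ``external-path'' term feeds the $T$-th column only through factors $\A_{l,T}$ with $l\in\bar{\Sb}$, all of which vanish because every child of $T$ lies in $\mb_\G(T)\subseteq\Sb$, i.e.\ $T$ has no outgoing edge leaving $\Sb$. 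Hence $(\B_{\Sb,\Sb}^{-1})_{i,T}$ equals $1$ for $i=T$ and $-\A_{i,T}$ otherwise, so $\operatorname{supp}\big((\B_{\Sb,\Sb}^{-1})_{:,T}\big)=\{T\}\cup\ch_\G(T)$.

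Next I would show each such index is a singleton block. For $i\in\{T\}\cup\ch_\G(T)$ one has $\pa_\G(i)\subseteq\Sb$ — trivially for $i=T$, and for a child $i$ of $T$ because any parent $p$ of $i$ is either $T$ or, being a parent/child/spouse of $T$, lies in $\mb_\G(T)$, so $p\in\Sb$. Then \cref{lemma:when_parents_are_in_ISA} separates out $E_i$; since $E_i$ appears only in the $i$-th coordinate of $\B_{\Sb,\Sb}^{-1}\X_\Sb=\E_\Sb+(\text{a mixture of }\E_{\bar{\Sb}})$, the separated component must be coordinate $i$ itself, i.e.\ $(i)\in\PRT_{\B_{\Sb,\Sb}^{-1}}$. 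For a general $\W$, \cref{thm:indeterminacies_of_isa} and \cref{thm:BSS_inv_is_ISA} give $\W=\Pb^{\pi_0}\D\,\B_{\Sb,\Sb}^{-1}$ with $\D$ a general scaling consistent with $\PRT_{\B_{\Sb,\Sb}^{-1}}$; $\D$ is block-diagonal and invertible on each block, so it cannot move support across blocks, the $T$-th column of $\D\,\B_{\Sb,\Sb}^{-1}$ is still supported on singleton blocks, and $\Pb^{\pi_0}$ only relabels it. Thus $\C=\pi_0[\{T\}\cup\ch_\G(T)]$, and because $\pi_0$ carries singleton blocks of $\PRT_{\B_{\Sb,\Sb}^{-1}}$ to singleton blocks of $\PRT_\W$, every $i\in\C$ has $(i)\in\PRT_\W$. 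That is the first assertion.

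For the ``moreover'' I would invoke the soundness and completeness of admissible permutations (\cref{def:admissible_block_permutations}, established in \cref{proof:thm_correct_isa_ling}): the admissible $\pi$'s are in one-to-one correspondence with $\mathcal{B}$, the correspondence sending $\pi$ to the representative obtained from $\Pb^\pi\W$ by normalizing each (invertible) diagonal block. Then I would push the $T$-th column through this correspondence: the $T$-th column of $\Pb^\pi\W$ is supported exactly on $\pi[\C]$, all of them singleton-block indices; block-normalization acts by a scalar on each singleton block and by a within-block row recombination on the multidimensional ones, hence preserves this support; and $\C$ already contains (the image of) $T$, sitting in a singleton block whose normalized diagonal entry is $1$, which pins down how the $T$-diagonal appears in the representative. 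Ranging over all admissible $\pi$ — equivalently over all of $\mathcal{B}$ — then yields $\{\pi[\C]:\pi\text{ admissible to }\W\}=\{\operatorname{supp}(\A'_{:,T}):\A'\in\mathcal{B}\}$.

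The reductions to and computations with $\B_{\Sb,\Sb}^{-1}$ are routine; the main obstacle is the last step — correctly wielding the admissible-permutation/equivalence-class correspondence and, inside it, tracking the $T$-th column through the block-normalization, in particular checking that $T$ and each child of $T$ always land in a singleton block with normalized diagonal exactly $1$, so that the $T$-th column of the equivalence-class representative is transported faithfully. This is precisely the place where the external-self-loop phenomenon of \cref{example:BSS_inv_no_diagonal_ones} could a priori interfere, but it provably cannot for $T$ or $T$'s children, since all their parents lie in $\Sb$.
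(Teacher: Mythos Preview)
Your proposal is correct and takes essentially the same approach as the paper: both analyze the $T$-th column of $\B_{\Sb,\Sb}^{-1}$ via \cref{eq:BSS_inv_outside} using that all of $T$'s children lie in $\Sb$, invoke \cref{lemma:when_parents_are_in_ISA} for the singleton-block property, transport to a general $\W$ through the ISA indeterminacy (\cref{thm:indeterminacies_of_isa}), and connect admissible permutations to $\mathcal{B}$. The paper's proof is a one-paragraph sketch of exactly these steps; your proposal simply fleshes them out.
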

In essence, \cref{lemma:T_column_as_T_and_children} interprets the nonzero row indices on $T$'s column vector in $\W$ as $T$ and $T$'s children. Note that there can be multiple directed graphs in the LiNG equivalence class, leaving different choices of $\Sb$ subsets as $T$'s children. Any such choice can be interpreted by an admissible row permutation, and vice versa.

\subsubsection{The Local ISA-LiNG Algorithm}
Finally, we have the local ISA-LiNG~\cref{alg:local_isa_ling}. Below we give an illustrative example on how it works:

\begin{algorithm}[t]
	\caption{Local ISA-LiNG}
	\label{alg:local_isa_ling}
	\hspace*{0.02in} {\bf Input:}
	A target $T\in\mathcal{V}$, its oracle MB $\mb_\G(T)$, and data $\X$. Assume w.l.o.g. $\Sb\coloneqq\{T\}\cup\mb_\G(T)=[m]$\\
	\hspace*{0.02in} {\bf Output:}
	A set of directed weighted edge sets

 \begin{algorithmic}[1]
	\STATE Initialize the output set $\mathcal{K} \coloneqq \emptyset$;
    \STATE Obtain an ISA solution $\W$ with $\PRT_\W$ on $\X_\Sb$;
    \STATE Set $\mathbf{C}\coloneqq \operatorname{supp}(\W_{:,T}) = (i\in[m]: \W_{i,T}\neq 0)$;
    \FOR{any permutation $\pi$ admissible to $\W,\PRT_\W$}
        \STATE Initialize $\K\coloneqq \emptyset$;
        \STATE Set $\W' \coloneqq \Pb^\pi \W$;
        \STATE Set scaling matrix $\D$: $\forall \Sb_i \in \PRT_\W$, $\D_{\pi[\Sb_i],\pi[\Sb_i]} \coloneqq (\W'_{\pi[\Sb_i],\pi[\Sb_i]})^{-1}$ and $\D_{\pi[\Sb_i],[m] \backslash \pi[\Sb_i]} \coloneqq \mathbf{0}$;
        \STATE Set $\A' \coloneqq \I - \D \W'$;

        \FOR{$i\in\mathbf{C}$}
            \STATE \textbf{Assert} $(i) \in \PRT_\W$;
            \STATE Add to $\K$ a weighted edge denoted as $(j\rightarrow \pi[i], \A'_{\pi[i],j})$, for each $j\in[m]$ with $\A'_{\pi[i],j} \neq 0$;\looseness=-1
        \ENDFOR
        \STATE Set $\mathcal{K}\coloneqq \mathcal{K} \cup \{\K\}$;
    \ENDFOR
	\STATE {\bfseries Return} $\mathcal{K}$;
	\end{algorithmic}
\end{algorithm}

\begin{example}\label{example:isa_ling_procedure}
    Consider the example in~\cref{fig:isa_algorithm_example}. There are two graphs in the global equivalence class $\mathcal{B}$, as shown in the upper row. Let $\Sb$ be $(1,2,3,4,5)$, i.e., a target $T=3$ and its $\mb_\G(T)$. An ISA on $\X_\Sb$ gives $\W$ with nonzero patterns as in the lower left matrix, where specifically, the striped entries are nonzero but rank deficient (see~\cref{def:admissible_block_permutations}). The 3rd ($T$-th) column has three nonzero entries, corresponding to $T$ and its two children, which are yet unknown and can be different in different equivalent graphs. Two admissible rows permutations (the lower row) reveal their variable correspondences, with all edges into $T$ and its children (dark edges in the graphs) recovered correctly for all equivalent graphs with different global directed cycles, while note these local edges themselves are acyclic.\hspace{-2em}\looseness=-1
\end{example}

\begin{figure}[t]
\centering
\includegraphics[width=0.31\textwidth]{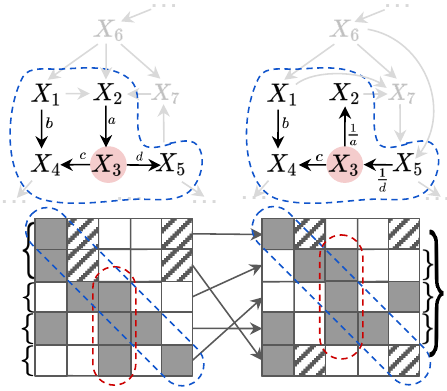}
\caption{For~\cref{example:isa_ling_procedure}, to illustrate~\cref{alg:local_isa_ling}.\looseness=-1}
\label{fig:isa_algorithm_example}
\vspace{-0.7em}
\end{figure}

\begin{restatable}[Correctness of local ISA-LiNG]{theorem}{THMCORRECTNESSOFLOCALISALING}\label{thm:correct_isa_ling}
    For any $T\in\mathcal{V}$, let $\mathcal{K}$ be set of weighted edge sets returned by~\cref{alg:local_isa_ling} on $T$, $\mb_\G(T)$, and $\X$. We have:\begin{align*}
    \mathcal{K} = \{ & \{(i\rightarrow j, \A'_{j,i}) : \forall j \in \{T\} \cup \ch_{\G'}(T), \forall i \in \pa_{\G'}(j)\} : \\ 
            & \forall \A'\in\mathcal{B}, \text{ and the graph } \mathcal{G}' \text{ defined by } \A'\}.
\end{align*}
\end{restatable}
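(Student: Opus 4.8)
The plan is to prove the two inclusions $\mathcal{K}\subseteq(\text{RHS})$ and $(\text{RHS})\subseteq\mathcal{K}$, both routed through a correspondence between the admissible permutations enumerated by \cref{alg:local_isa_ling} and the matrices of the global equivalence class $\mathcal{B}$. As a first reduction I would normalize the ISA output using \cref{thm:BSS_inv_is_ISA,thm:indeterminacies_of_isa}: every ISA solution $\W$ of $\X_\Sb$ equals $\Pb^{\sigma}\D_0\,\B_{\Sb,\Sb}^{-1}$ for a permutation $\sigma$ of $[m]$ and a general scaling $\D_0$ consistent with $\PRT_{\B_{\Sb,\Sb}^{-1}}$, with the two partitions matched through $\sigma$; substituting this into Lines~3--8 and re-indexing the loop by $\pi\mapsto\pi\circ\sigma$ (with $\D_0$ absorbed into the Line-7 scaling) leaves every emitted edge set unchanged, so it suffices to argue the case $\W=\B_{\Sb,\Sb}^{-1}$. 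I would also record here that the algorithm is well-posed: \cref{def:admissible_block_permutations} is exactly the condition making each diagonal block $\W'_{\pi[\Sb_i],\pi[\Sb_i]}$ invertible, so the scaling $\D$ in Line~7 exists, and \cref{lemma:T_column_as_T_and_children} guarantees $(i)\in\PRT_\W$ for every $i\in\mathbf{C}$, so the Assert in Line~10 never fires.

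The core step is the admissible-permutation/equivalence-class correspondence. Given $\A'\in\mathcal{B}$ with graph $\G'$, the SEM $\X=\A'\X+\E'$ entails the same distribution (the defining property of $\mathcal{B}$; see \citep{lacerda2008discovering}), so by \cref{thm:BSS_inv_is_ISA} the matrix $(\B'_{\Sb,\Sb})^{-1}$ with $\B'\coloneqq(\I-\A')^{-1}$ is again an ISA solution of $\X_\Sb$; by \cref{thm:indeterminacies_of_isa} it equals $\Pb^{\pi}\D\,\B_{\Sb,\Sb}^{-1}$ for some $\pi$ and general scaling $\D$, and $\pi$ is admissible because $(\B'_{\Sb,\Sb})^{-1}$ has invertible diagonal blocks relative to its own ISA partition (a sub-fact I would establish alongside). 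Conversely, for any admissible $\pi$ I would show that $\I-\D\W'$ computed in Line~8 agrees, on the rows indexed by $\mathbf{C}$, with the corresponding rows of $\A'_{\Sb,\Sb}$ for a genuine $\A'\in\mathcal{B}$: here \cref{lemma:T_column_as_T_and_children} supplies that the support of $T$'s column after re-permutation is $\pi[\mathbf{C}]$, which the lemma identifies with $\{T\}\cup\ch_{\G'}(T)$ for a bona fide $\A'\in\mathcal{B}$, and this support constraint together with ISA uniqueness pins down the relevant rows. Making this ``sound and complete'' statement about the rank/admissibility criterion precise is, I expect, the main obstacle: one must play the block-scaling freedom of \cref{thm:indeterminacies_of_isa} against the identity-diagonal-block normalization of Lines~7--8, and contend with the fact (\cref{example:BSS_inv_vs_ASS,example:BSS_inv_no_diagonal_ones}) that entries of $\B_{\Sb,\Sb}^{-1}$ can be contaminated by directed paths through $\bar{\Sb}$, so that $\I-\D\W'$ need not equal $\A'$ on rows \emph{outside} $\mathbf{C}$.

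With the correspondence in hand I would decode the emitted edges. Fix an admissible $\pi$ and let $\A'\in\mathcal{B}$, $\G'$ be the partner from the previous step. For each $i\in\mathbf{C}$, $(i)$ is a singleton subspace (\cref{lemma:T_column_as_T_and_children}) so the Line-7 block on row~$i$ is a scalar, and $\pi[i]\in\{T\}\cup\ch_{\G'}(T)$; for every such vertex all parents lie in $\Sb$ --- for $T$ because $\pa_{\G'}(T)\subseteq\mb_{\G'}(T)$, and for a child $c$ of $T$ because any other parent of $c$ is a spouse of $T$ --- using that the Markov blanket of $T$ is determined by the distribution and hence common to $\G$ and $\G'$. \cref{lemma:when_parents_are_in_ISA}, applied to the ISA solution $(\B'_{\Sb,\Sb})^{-1}$ of the equivalent SEM, then says row~$\pi[i]$ is a scalar multiple of $(\I-\A'_{\Sb,\Sb})_{\pi[i]}$; after the Line-7 rescaling this gives $(\I-\D\W')_{\pi[i],\cdot}=(\I-\A'_{\Sb,\Sb})_{\pi[i]}$, so the weighted edges added in Line~11 are exactly $\{(j\to\pi[i],\,\A'_{\pi[i],j}) : j\in\pa_{\G'}(\pi[i])\}$ with correct weights. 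Restricting attention to the $\mathbf{C}$-rows is precisely what sidesteps the contamination phenomena of \cref{example:BSS_inv_vs_ASS,example:BSS_inv_no_diagonal_ones}, which touch only rows outside $\mathbf{C}$.

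Finally I would assemble. As $\pi$ ranges over admissible permutations, the previous two steps attach to each one a matrix $\A'\in\mathcal{B}$ with graph $\G'$ such that the produced set is $\mathbf{K}=\{(j\to k,\A'_{k,j}) : k\in\{T\}\cup\ch_{\G'}(T),\, j\in\pa_{\G'}(k)\}$, giving $\mathcal{K}\subseteq(\text{RHS})$; conversely every $\A'\in\mathcal{B}$ is realized by some admissible $\pi$, giving $(\text{RHS})\subseteq\mathcal{K}$. One bookkeeping remark closes the argument: two admissible permutations that differ only by a reshuffle within a multi-dimensional subspace leave the (singleton) $\mathbf{C}$-rows and hence $\mathbf{K}$ unchanged, so the loop neither over- nor under-generates. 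Combining the inclusions yields the stated equality; the graph-theoretic parts (last two paragraphs) are routine given \cref{lemma:when_parents_are_in_ISA,lemma:T_column_as_T_and_children}, and the real work is the admissibility correspondence of the second paragraph.
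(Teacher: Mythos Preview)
Your proposal is correct and follows the same high-level skeleton as the paper (two inclusions via an admissibility/equivalence-class correspondence, then decode rows through \cref{lemma:when_parents_are_in_ISA,lemma:T_column_as_T_and_children}), but the technical core differs. The paper isolates the ``sound and complete'' statement about admissible permutations as a standalone linear-algebra lemma (their Lemma~5): for an arbitrary invertible matrix $\C$ and partition $\PRT$, the set of row permutations yielding nonzero diagonals on $\C$ is groupwise-equivalent to the set yielding full-rank diagonal blocks on any $\PRT$-scaled version of $\C$; the proof runs via the Leibniz determinant formula and the generalized Laplace expansion, and includes an explicit counterexample showing why the extra ``weak stability'' hypothesis (invertible diagonal blocks of $\C$) is needed for one direction. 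You instead route through the causal model: for each $\A'\in\mathcal{B}$ you re-apply \cref{thm:BSS_inv_is_ISA} to the distributionally equivalent SEM to obtain $(\B'_{\Sb,\Sb})^{-1}$ as another ISA solution of the \emph{same} $\X_\Sb$, then invoke ISA uniqueness (\cref{thm:indeterminacies_of_isa}). Your route is more semantic and reuses the stated theorems directly; the paper's route separates the combinatorics cleanly from the causal interpretation.

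One place the paper's lemma buys something concrete is your converse direction (admissible $\pi\Rightarrow\A'\in\mathcal{B}$): \cref{lemma:T_column_as_T_and_children} hands you an $\A'$ with $\operatorname{supp}(\A'_{:,T})=\pi[\mathbf{C}]$ as a \emph{set}, but you still need to match the specific bijection $i\mapsto\pi[i]$ on $\mathbf{C}$ to that $\A'$ --- two admissible permutations can share the image $\pi[\mathbf{C}]$ yet permute the singleton rows within it differently, producing different $\mathbf{K}$'s that must correspond to different members of $\mathcal{B}$. The paper's Lemma~5 (direction~1) handles exactly this, producing from any block-admissible $\tau$ a groupwise-equivalent nonzero-diagonal permutation, which on singleton blocks pins down the bijection and hence the $\A'$. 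Your sketch can be completed along your own lines (the ``ISA uniqueness pins down the relevant rows'' clause is where it would go), but that matching step deserves an explicit argument rather than an appeal to \cref{lemma:T_column_as_T_and_children} alone.
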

\vspace{-0.2em}
The local ISA-LiNG algorithm (\cref{alg:local_isa_ling}) correctly identifies all the causal effects into the target $T$ and all its children, for all LiNG models that equivalently entails the distribution of $\X$. That is, with only local variables, we identify exactly what can be identified globally. Note that this identification is unique (i.e., the returned $\mathcal{K}$ consists of a single item) if and only if none of $T$ and $\ch_{\G}(T)$ is part of any cycles in $\G$ (including the case where $\G$ is acyclic).

\subsection{With the Notion of Stability}

The $\mathcal{B}$ defined in~\cref{def:LiNG_equiv_class} characterizes the entire global LiNG equivalence class, yet not all models within it are ``stable''. In dynamical systems, stability refers to ``the dissipation of the effects of one-time noise in models''~\citep{lacerda2008discovering}. Applied to causal models, a model is ``stable'' when any infinitely long path (after traversing loops) result in zero causal effect. For example, in a simple cycle with two variables and two edges both carrying weights $2$, the model is unstable, with the cycle product of $4>1$ leading to explosion. When both weights are $0.5$, the model remains LiNG equivalent to the former one but achieves stability, with the cycle product of $0.25<1$. Formally, a global LiNG model is said to be \textit{stable} when its adjacency matrix $\A$ is convergent, i.e., $\lim_{t\rightarrow \infty} \A^t = \mathbf{0}$. Note that here the entry $(\A^t)_{i,j}$ represents the summed causal effect from $j$ to $i$ along all paths of length $t$.

In practical global causal discovery scenarios, an often-made assumption is the stability of the underlying LiNG model, and people usually focus on identifying the stable LiNG model(s), instead of the entire equivalence class $\mathcal{B}$. This is straightforward in ICA-LiNG~\citep{lacerda2008discovering}: as the entire $\mathcal{B}$ can be recovered first, we then only need to check the convergence of each item within $\mathcal{B}$. However, when we only have local variables, can we still recover the local part corresponding to the global stable model(s)?

The answer is affirmative but with constraints: our method can still correctly find stable solutions locally, as long as this local stable solution is identifiable. Denote the stable sub-equivalence class as $\mathcal{B}^* \coloneqq \{\A \in \mathcal{B}: \lim_{t\rightarrow \infty} \A^t = \mathbf{0}\}$. When cycles in the ground-truth $\G$ are disjoint, there exists a unique global stable model, i.e., $|\mathcal{B}^*| = 1$. Let $\A^*$ be this unique stable adjacency matrix, and $\G^*$ be the corresponding graph.
In this case, simply by adhering to an additional \textit{local stability} condition, the stable solution can be identified locally:\looseness=-1

\begin{restatable}[Identifying stable solutions locally, with disjoint cycles]{corollary}{COROLOCALSTABLE}\label{coro:local_stable}
    Suppose the cycles are disjoint in $\G$. Consider a modified version of~\cref{alg:local_isa_ling} in which the line 
        ``\textbf{if} $\A'$ is not convergent: \textbf{skip}''
    is added between lines 8 and 9. Then, this modified version of~\cref{alg:local_isa_ling} will yield a single local model, corresponding exactly to the unique global stable model. That is, the returned $\mathcal{K}$ consists of a single item $\K$, with\looseness=-1
    $$\K = \{(i\rightarrow j, \A^*_{j,i}) : \forall j \in \{T\} \cup \ch_{\G^*}(T), \forall i \in \pa_{\G^*}(j)\}.$$
\end{restatable}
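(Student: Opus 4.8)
The plan is to deduce everything from \cref{thm:correct_isa_ling}: the only effect of the added convergence test is to discard, among all permutations admissible to $\W$, exactly those whose associated matrix $\A' = \I - \D\Pb^\pi\W$ (computed in line 8) is not convergent. So it suffices to show that the admissible permutations surviving the test are precisely those that \cref{alg:local_isa_ling} maps to the edge set $\K$ attached to $\A^*$. Since $\A^*\in\mathcal{B}$, \cref{thm:correct_isa_ling} already guarantees that at least one admissible permutation produces this $\K$, so I must establish (i) that the $\A'$ of that permutation is convergent, and (ii) that any admissible permutation with convergent $\A'$ produces exactly this same $\K$.

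First I would set up the structure of $\mathcal{B}$ under disjoint cycles: since $C_1,\dots,C_r$ are vertex-disjoint they are exactly the non-singleton strongly connected components of $\G$, and the elements of $\mathcal{B}$ are obtained by independently choosing, for each $C_l$, either to keep it (cycle product $p_l$) or to ``reverse'' it in the LiNG sense (cycle product $1/p_l$); assuming no cycle has product of modulus $1$, $\A^*$ is the unique choice that is ``subcritical'' ($|\text{product}|<1$) on every cycle at once. Next I would pin down the directed cycles of the matrix $\A'$ from line 8 for a generic admissible $\pi$ corresponding to some $\hat\A\in\mathcal{B}$ with graph $\hat\G$. Using \cref{eq:BSS_inv_outside} and the fact (from the proof of \cref{thm:BSS_inv_is_ISA}) that $\D\Pb^\pi\W$ equals $\hat\B^{-1}_{\Sb,\Sb}$ up to the within-subspace block scaling the algorithm divides out, every closed walk in the digraph of $\A'$ lifts to a closed walk of $\hat\G$ that only rests at vertices of $\Sb$; since a closed walk stays inside one strongly connected component, each simple cycle of $\A'$ is the ``trace on $\Sb$'' of a single $C_l$ (the vertices $C_l\cap\Sb$ in cyclic order, consecutive ones joined by the effective edge collapsing the intervening $\bar\Sb$-segment) and carries product exactly $p_l$ (or $1/p_l$). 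As these traces sit on disjoint vertex sets, $\A'$ is convergent iff every trace that actually appears as a directed cycle of $\A'$ has product of modulus $<1$; call such $C_l$ the \emph{locally visible} cycles.

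To finish I would combine two observations. Since $\A^*$ is subcritical on all cycles it is in particular subcritical on the locally visible ones, so its $\A'$ is convergent: this gives (i). For (ii), I would argue that any cycle whose re-orientation changes the output $\K$ — i.e., changes the edges into $T$ or into a child of $T$, their weights, or the set $\ch_{\hat\G}(T)$ — is locally visible. The ingredients: (a) $T$ and every child of $T$ have all parents inside $\Sb$, so (by the argument behind \cref{lemma:when_parents_are_in_ISA}) they occupy singleton ISA subspaces and their incoming edges survive untouched in $\A'$; (b) reversing a cycle re-does only the rows of $\hat\B^{-1}$ indexed by that cycle, hence reversing any cycle disjoint from $\{T\}\cup\ch_{\hat\G}(T)$ leaves the rows of $\hat\A$ indexed by $\{T\}\cup\ch_{\hat\G}(T)$ and the column indexed by $T$ unchanged, hence leaves $\K$ unchanged; (c) consequently an output-relevant cycle $C_l$ meets $\{T\}\cup\ch_{\hat\G}(T)$, its trace passes through a singleton-subspace vertex with an intact incoming trace edge, and (under disjointness, which makes $C_l$ the only cycle through $T$ and forces its other trace edges to be present) the whole trace is a genuine directed cycle of $\A'$ with product $p_l$. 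Then convergence of $\A'$ forces $\hat\A$ to be subcritical on every output-relevant cycle, hence to agree with $\A^*$ there, hence to produce the same $\K$. Combining (i) and (ii), the modified algorithm returns the single set $\K = \{(i\rightarrow j,\A^*_{j,i}) : j\in\{T\}\cup\ch_{\G^*}(T),\ i\in\pa_{\G^*}(j)\}$.

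The step I expect to be the main obstacle is (c): ruling out that the trace of an output-relevant cycle is ``absorbed'' into a multidimensional ISA subspace — which the algorithm collapses to the identity, erasing exactly the trace edges inside it and thereby making an otherwise-relevant cycle invisible to the convergence test. Handling this requires using the Markov-blanket structure of $\Sb$ (all parents of $T$ and of its children are present) together with the strongly-connected-component structure of $\G$ to control which of the trace's vertices can be mutually confounded through $\bar\Sb$; this is the one place where disjointness of the cycles is used in an essential way beyond delivering $|\mathcal{B}^*|=1$, and it is where the proof needs the most care.
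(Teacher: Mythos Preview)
Your approach is essentially the same as the paper's: both reduce the corollary to the claim that, under disjoint cycles, the directed cycles appearing in the locally computed matrix are exactly the ``traces on $\Sb$'' of the global cycles with their products preserved, and both argue this via \cref{eq:BSS_inv_outside}. Your explicit decomposition into \emph{locally visible} versus \emph{output-relevant} cycles, together with the observation (your point (b)) that reversing a cycle disjoint from $\{T\}\cup\ch_{\hat\G}(T)$ leaves $\K$ unchanged, is a refinement the paper leaves implicit --- the paper simply asserts that local and global stability coincide on cycles meeting $\Sb$ and then passes to ``the correct local stable model among $\{T\}\cup\ch_{\G^*}(T)$'' in one sentence.

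Regarding your step (c): the concern that a multi-dimensional ISA subspace might ``absorb'' part of a cycle trace is real, and the paper's proof does not fully resolve it either. The paper works with $\A^{(\Sb)}\coloneqq\I-\B_{\Sb,\Sb}^{-1}$ and records (as its point 1) that vertices on a local cycle have unit diagonal in $\B_{\Sb,\Sb}^{-1}$, but it does not argue that such vertices sit in singleton subspaces --- which is what would be needed to ensure the algorithm's block scaling $\D$ leaves the relevant rows of $\A'$ equal to those of $\A^{(\Sb)}$. So your identification of (c) as the delicate step is accurate, and the ingredients you list (Markov-blanket structure of $\Sb$ plus the SCC structure forced by disjointness) are indeed what one would need; the paper's own proof is at the same level of sketchiness on this point.
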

\vspace{-0.4em}
\cref{coro:local_stable} is valid as here stability is determined sufficiently and necessarily by the cycle products, which is preserved locally. However, when some cycles in $\G$ intersect, the situation becomes more complex. Globally, there may be none or multiple global stable models in $\mathcal{B}^*$. Locally, while in this case, our \cref{alg:local_isa_ling} can still identify local correspondences of all equivalent solutions (\cref{thm:correct_isa_ling}), the exact identification of the global stable solutions from local variables alone becomes inherently impossible. Intuitively, this is because that external cycles appear as self-loops on the local variables. More details are in~\cref{sec:proofs}.\looseness=-1

\section{REGRESSION-BASED VARIANT}\label{sec:inv_direct_lingam}
In~\cref{sec:isa_ling} we propose a local ISA-based method suitable for both acyclic and cyclic graphs. In this section, we focus on the specific scenario where there are no cycles in $\G$, i.e., $\X$ follows a linear non-Gaussian \textit{acyclic} model (LiNGAM~\citep{shimizu2006lingam}), and propose an alternative local regression-based method. The relationship between this section and~\cref{sec:isa_ling} can be likened to that of Direct-LiNGAM~\citep{shimizu2011directlingam} and ICA-LiNG~\citep{lacerda2008discovering} in the global context, with the former utilizing non-Gaussianity by ICA, and the latter by Darmois-Skitovitch theorem~\citep{darmois1953analyse,skitovitch1953property}.\looseness=-1

Acyclicity renders the existence of a \textit{causal ordering}, i.e., vertices $\mathcal{V}$ can be ordered so that no later vertex has a direct edge onto any earlier variable. When all the variables in $\X$ are involved, namely, with \textit{causal sufficiency}, \citet{shimizu2011directlingam} gives the method named Direct-LiNGAM to uniquely identify the DAG $\G$ by estimating its causal ordering: Regress $X_j$ on $X_i$, if the residual is statistically independent with the regressor $X_i$, then $i$ is causally earlier than $j$. If such an independence holds for $X_i$ on all its pairwise regressions with the remaining $X_j$s, $i$ must be a \textit{root} vertex. Subroots are then recursively identified in a same way, forming the causal ordering.\looseness=-1

However, when only a subset of variables (as of here, $\{T\}\cup\mb_\G(T)$) is involved, Direct-LiNGAM does not work, as causal sufficiency is violated, and there can be no independent residual due to hidden confounders, just like the absent independent components in ICA.\looseness=-1
\begin{example}\label{example:hidden_confounder}
    In~\cref{fig:isa_ling_examples}(i), with a confounder $X_1$ outside of $T$'s MB, neither regressing $X_2$ on $X_3$ nor the converse results in independent residuals, making it impossible to identify any ``local root'' in $\mb_\G(T)$.
\end{example}
\vspace{-0.2em}
While identifying ``local roots'' is impossible, can we reverse our perspective from top-down to bottom-up and identify ``local leaves'' instead? Interestingly, the answer seems affirmative: In~\cref{example:hidden_confounder}, regressing $X_4$ on $\{X_2,X_3\}$, the residual is exactly the exogenous noise $E_4$ and is independent to $\{X_2,X_3\}$. Formally, for any vertex subset $\Sb\subset \mathcal{V}$, we denote the corresponding random vector as $\X_\Sb\coloneqq [X_i:i\in\Sb]^\intercal$. Perform ordinary least square error linear regression of a random variable $X_i$ on a random vector $\X_\Sb$, the asymptotic coefficients of fit is $\beta_{\Sb\rightarrow i}\coloneqq\operatorname{cov}(\X_\Sb, \X_\Sb)^{-1} \operatorname{cov}(\X_\Sb,X_i)$, where for $j\in\Sb$, $\beta_{\Sb\rightarrow i}^j$ is the coefficient on $X_j$. Denote the regression residual as $R_{\Sb\rightarrow i} \coloneqq X_i - \beta_{\Sb\rightarrow i}^\intercal \X_\Sb$. Denote $i$'s descendants in $\G$ as $\des_\G(i)$. We have:\looseness=-1

\begin{restatable}{lemma}{LEMMAINDEPENDENTNOISE}\label{lemma:independent_noise}
        For any $i\in\mathcal{V},\Sb\subset \mathcal{V}\backslash\{i\}$, if $R_{\Sb\rightarrow i} \indep \X_\Sb$, i.e., independent residual, then $\Sb \cap \des_\G(i) = \emptyset$.\looseness=-1
\end{restatable}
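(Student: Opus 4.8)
The plan is to pass to ``noise coordinates''. Using $\X=\B\E$ with $\B=(\I-\A)^{-1}=\sum_{t\ge 0}\A^{t}$ (a finite sum since $\A$ is nilpotent in the acyclic case), both the residual and every regressor become explicit linear forms in the independent non-Gaussian noises: writing $\beta^{j}$ for $\beta_{\Sb\to i}^{j}$, one has $X_j=\sum_k\B_{j,k}E_k$ for $j\in\Sb$, and $R_{\Sb\to i}=\sum_k r_kE_k$ with $r_k=\B_{i,k}-\sum_{j\in\Sb}\beta^{j}\B_{j,k}$. Since $\G$ is acyclic, a topological order makes $\A$ strictly lower triangular and hence $\B$ lower triangular with unit diagonal; together with faithfulness this yields the support rule $\B_{p,q}\neq 0\iff q\in\anc_\G(p)\cup\{p\}$, and in particular $\B_{p,p}=1$ and $\B_{i,k}=0$ for every strict descendant $k$ of $i$. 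I would record these facts first.

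Next I would argue by contradiction: assume $D:=\Sb\cap\des_\G(i)\neq\emptyset$. From $R_{\Sb\to i}\indep\X_\Sb$ we get $R_{\Sb\to i}\indep X_j$ for each $j\in\Sb$, so, both sides being linear combinations of the non-Gaussian $E_k$, the Darmois--Skitovitch theorem forces $r_k\,\B_{j,k}=0$ for every $k$ and every $j\in\Sb$. This has two consequences: (i) choosing any $j_0\in D$, faithfulness gives $\B_{j_0,i}\neq 0$, hence $r_i=0$; (ii) choosing $j=k$ for each $k\in D$ and using $\B_{k,k}=1$ gives $r_k=0$, which --- since $\B_{i,k}=0$ and since $\B_{j,k}\neq 0$ for $j\in\Sb$ can only occur for $j\in(\des_\G(k)\cup\{k\})\cap\Sb\subseteq D$ --- rewrites as $\sum_{j\in D}\beta^{j}\B_{j,k}=0$ for all $k\in D$.

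The last step turns this into a contradiction. The matrix $(\B_{j,k})_{j,k\in D}$ is a principal submatrix of $\B$; arranged in topological order it is lower triangular with unit diagonal, hence invertible, so the homogeneous system from (ii) forces $\beta^{j}=0$ for all $j\in D$ (equivalently, back-substitute in reverse topological order within $D$). But then, using once more that $\B_{j,i}\neq 0$ for $j\in\Sb$ only if $j\in D$, we recompute $r_i=\B_{i,i}-\sum_{j\in\Sb}\beta^{j}\B_{j,i}=1-\sum_{j\in D}\beta^{j}\B_{j,i}=1$, contradicting (i). Hence $D=\emptyset$, i.e.\ $\Sb\cap\des_\G(i)=\emptyset$.

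I expect the final step to be the crux: the independence constraints, via Darmois--Skitovitch, kill the $E_k$-coefficient of the residual for every $k$ in the descendant block $D$, and one must show these cancellations cannot coexist with also killing the $E_i$-coefficient. Acyclicity is exactly what makes this work, since it is what renders the relevant block of $\B$ (uni)triangular and therefore invertible; faithfulness is used only in consequence (i), to guarantee $\B_{j_0,i}\neq 0$ for a descendant $j_0$ (without it the right phrasing of the conclusion would be $\B_{j,i}=0$ for all $j\in\Sb$).
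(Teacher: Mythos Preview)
Your proof is correct and proceeds exactly along the line the paper indicates: the paper's own ``proof'' is a one-line reference to \citet{shimizu2011directlingam} and the Darmois--Skitovitch theorem, which is precisely the tool you deploy after passing to noise coordinates. Your argument is considerably more explicit than anything the paper provides --- in particular, the block-triangular inversion on $D$ that forces $\beta^{j}=0$ for descendants and the resulting contradiction at $r_i$ are not spelled out there, but they are the right way to carry the Darmois--Skitovitch idea through in the multivariate-regressor setting; your remark that faithfulness enters only through consequence~(i) is also accurate and consistent with the paper's standing assumption.
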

\vspace{-0.2em}
\cref{lemma:independent_noise} generalizes regressions in~\citep{shimizu2011directlingam} from single variables to multi-dim vectors, but with a same idea: independent residuals imply causal ordering. While as in~\cref{example:hidden_confounder}, independent residuals may be absent for ``local roots'' due to confounders (echoed as multi-dim subspaces in ISA), they must exist for ``local leaves'' (echoed as the $1$-dim components in ISA). This is because, again, as in~\cref{lemma:when_parents_are_in_ISA}, that parents of $T$ and its children are included in the MB, thus blocking all confounding paths, enabling recovery of exogenous noise and the exact causal strengths:\looseness=-1

\begin{restatable}{lemma}{LEMMACORRECTREGRESSIONCOEFS}\label{lemma:correct_regression_coefs}
    $\forall i,\Sb$ in $\mathcal{V}$, if $\pa_\G(i)\subset \Sb \subset \mathcal{V} \backslash \des_\G(i)$, then $\forall j\in\Sb$, $\beta_{\Sb\rightarrow i}^j = \A_{i,j}$, and $R_{\Sb\rightarrow i} = E_i$ (so $\indep \X_\Sb$).\looseness=-1
\end{restatable}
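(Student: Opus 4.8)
The plan is to use the two hypotheses for two separate purposes: $\pa_\G(i)\subset\Sb$ lets me rewrite the structural equation of $X_i$ using only $\X_\Sb$ and $E_i$, while $\Sb\subset\mathcal{V}\backslash\des_\G(i)$ guarantees that $E_i$ does not enter any $X_j$ with $j\in\Sb$, hence $E_i\indep\X_\Sb$. Once both facts are in place, the ordinary least squares fit is forced to return the structural coefficients, and the residual is exactly $E_i$.

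First I would write out the structural equation. From $\X=\A\X+\E$ together with the convention $\A_{i,j}\neq 0\iff j\rightarrow i\in\mathcal{E}$, we get $X_i=\sum_{j\in\pa_\G(i)}\A_{i,j}X_j+E_i$. Since $\pa_\G(i)\subset\Sb$, defining $a\coloneqq(\A_{i,j})_{j\in\Sb}$ (so $a^j=0$ for $j\in\Sb\backslash\pa_\G(i)$) gives the clean identity $X_i=a^\intercal\X_\Sb+E_i$.

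Second, I would prove $E_i\indep\X_\Sb$. Expanding $\B=(\I-\A)^{-1}=\sum_{t\geq 0}\A^t$ and using that $(\A^t)_{j,k}$ is a weighted sum over directed length-$t$ paths from $k$ to $j$, acyclicity gives $\B_{j,k}\neq 0$ only when $k=j$ or $k\in\anc_\G(j)$. For $j\in\Sb$ we have $j\notin\des_\G(i)$, i.e.\ $i\notin\{j\}\cup\anc_\G(j)$, so $\B_{j,i}=0$; therefore each $X_j=(\B\E)_j$, $j\in\Sb$, is a linear function of $\{E_k:k\neq i\}$ only. Since $(\X_\Sb,E_i)$ is thus a linear image of $(\E_{\mathcal{V}\backslash\{i\}},E_i)$ in which the first block depends on the first coordinate block alone, and the $E_k$ are mutually independent, we conclude $\X_\Sb\indep E_i$.

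Third, I would compute the regression. With non-degenerate noises, $\operatorname{cov}(\X,\X)=\B\operatorname{diag}(\operatorname{var}(E_1),\dots,\operatorname{var}(E_d))\B^\intercal$ is positive definite (as $\B$ is invertible), so the principal submatrix $\operatorname{cov}(\X_\Sb,\X_\Sb)$ is invertible and $\beta_{\Sb\rightarrow i}$ is well defined. By step two, $\operatorname{cov}(\X_\Sb,X_i)=\operatorname{cov}(\X_\Sb,\X_\Sb)\,a+\operatorname{cov}(\X_\Sb,E_i)=\operatorname{cov}(\X_\Sb,\X_\Sb)\,a$, hence $\beta_{\Sb\rightarrow i}=a$, i.e.\ $\beta_{\Sb\rightarrow i}^j=\A_{i,j}$ for every $j\in\Sb$; then $R_{\Sb\rightarrow i}=X_i-\beta_{\Sb\rightarrow i}^\intercal\X_\Sb=X_i-a^\intercal\X_\Sb=E_i$, which is independent of $\X_\Sb$ by step two. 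The proof is short; the only mildly technical points are justifying the support pattern of $\B$ from the path-counting form of $\A^t$ (routine in the acyclic case) and the invertibility of $\operatorname{cov}(\X_\Sb,\X_\Sb)$ — neither is a genuine obstacle. The conceptual core, that enclosing all parents of $i$ inside $\Sb$ blocks every confounding path between $\X_\Sb$ and $E_i$, is precisely the mechanism already used in \cref{lemma:when_parents_are_in_ISA}, and I would note that parallel explicitly.
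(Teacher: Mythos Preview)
Your proof is correct. The paper's own proof is a one-line deferral: it simply states that the result ``follows naturally from \cref{lemma:when_parents_are_in_ISA} (in the acyclic graph case),'' without spelling out how the ISA row identity $(\B_{\Sb,\Sb}^{-1})_i=(\I-\A_{\Sb,\Sb})_i$ translates into the regression statement. You instead give a self-contained argument that bypasses the ISA machinery entirely: write $X_i=a^\intercal\X_\Sb+E_i$ from $\pa_\G(i)\subset\Sb$, use the path expansion of $\B$ together with $\Sb\cap\des_\G(i)=\emptyset$ to get $\B_{j,i}=0$ and hence $E_i\indep\X_\Sb$, and then read off the OLS coefficients. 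The conceptual mechanism is the same one the paper points to (parents in $\Sb$ block all confounding between $E_i$ and $\X_\Sb$), and you rightly flag that parallel; what you gain is an argument that stands on its own and makes the role of the non-descendant hypothesis explicit, whereas the paper's route requires the reader to unpack \cref{lemma:when_parents_are_in_ISA} and connect ISA rows to regression residuals. One minor point: your claim ``$j\notin\des_\G(i)$, i.e.\ $i\notin\{j\}\cup\anc_\G(j)$'' silently uses $i\notin\Sb$ (so $j\neq i$) to handle the diagonal $\B_{i,i}=1$; this is implicit in the regression setup but worth stating.
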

\vspace{-0.2em}
\cref{lemma:correct_regression_coefs} holds generally for linear acyclic SEMs, echoing the local Markov property: given all its parents, a variable is independent of other non-descendants, enabling accurate estimation of direct effects to it. \cref{lemma:independent_noise,lemma:correct_regression_coefs} then readily leads to~\cref{alg:inverse_direct_lingam}.\looseness=-1

\begin{algorithm}[t] %
	\caption{Inverse Direct-LiNGAM}
	\label{alg:inverse_direct_lingam}
	\hspace*{0.02in} {\bf Input:}
	A target vertex $T\in\mathcal{V}$, its oracle MB $\mb_\G(T)$, and their corresponding variables in $\X$\\
	\hspace*{0.02in} {\bf Output:}
	A set of directed edges with weights

 \begin{algorithmic}[1]
	\STATE Initialize the remaining vertex set $\U\coloneqq \{T\}\cup \mb_\G(T)$, and the output edge set $\K \coloneqq \emptyset$;
	\WHILE{$\U \neq \emptyset$}
      \IF{$\exists k\in \U\backslash\{T\}$ s.t. $\beta_{\U\backslash\{k\}\rightarrow k}^T = 0$}
        \STATE Set $\U\coloneqq \U\backslash\{k\}$;
        \STATE \textbf{continue} to line 2;
      \ENDIF
      \STATE \textbf{Assert} $\exists j\in \U$ s.t. $R_{\U\backslash\{j\}\rightarrow j} \indep \X_{\U\backslash\{j\}}$;
      \STATE Set $j$ as any one found in line 7;
      \STATE Add to $\K$ an edge $i\rightarrow j$ with weight $\beta_{\U\backslash\{j\}\rightarrow j}^i$ for each $i\in\U\backslash\{j\}$ with $\beta_{\U\backslash\{j\}\rightarrow j}^i \neq 0$;
      \STATE \textbf{break} if $j=T$; Otherwise set $\U\coloneqq \U\backslash\{j\}$;
    \ENDWHILE
	\STATE {\bfseries Return} $\K$;
	\end{algorithmic}
\end{algorithm}

The basic idea of~\cref{alg:inverse_direct_lingam} is to recursively identify ``local leaves'', i.e., all $T$'s children until $T$, via an ``inverse causal ordering''. Independent residuals must exist for these variables, as all their parents are included locally (lines 7-9). Lines 3-6 serve to avoid errors due to spouses, which could be hidden confounded:\looseness=-1

\begin{example}\label{example:spouses_correction_inv1}
    In~\cref{fig:inv_direct_lingam_examples}(i), $T=1$, $\mb_\G(T)=\{2,3,4,5\}$. After $X_5$ is first identified and removed, the remaining two ``last leaves'' $X_3,X_4$ are confounded by $X_6$ hidden outside of $\mb_\G(T)$, and thus neither can produce independent residual. The iteration cannot proceed, unless these two spouses are removed.\looseness=-1
\end{example}
Even without confounders, edges produced by independent residuals may still be incorrect due to spouses:\looseness=-1
\begin{example}\label{example:spouses_correction_inv2}
    In~\cref{fig:inv_direct_lingam_examples}(ii), $T=1$, $\mb_\G(T)=\{2,4,5\}$. After $X_5$ is first identified and removed, though the ``last leaf'' $X_4$ produces independent residual regressing on $X_1,X_2$, due to hidden $X_3$, the coefficient on $X_1$ is nonzero, yielding an incorrect edge $1\rightarrow 4$. Correction requires removing this spouse $X_4$.\looseness=-1
\end{example}
\vspace{-0.2em}
\begin{figure}[t]
\centering
\includegraphics[width=0.40\textwidth]{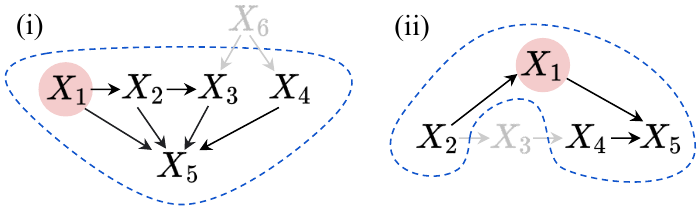}
\caption{Examples to illustrate~\cref{alg:inverse_direct_lingam}.}
\label{fig:inv_direct_lingam_examples}
\vspace{-0.6em}
\end{figure}

With spouses corrected, \cref{alg:inverse_direct_lingam} accurately estimate all edges into $T$ and its children, including edges adjacent to $T$ (the purpose of local search). Formally,\looseness=-1

\begin{restatable}[Correctness of~\cref{alg:inverse_direct_lingam}]{theorem}{THMCORRECTNESSOFINVDIRECTLINGAM}\label{thm:inv_direct_lingam_correct}
    For any $T\in\mathcal{V}$, let $\K$ be the weighted edge set returned by~\cref{alg:inverse_direct_lingam} on $T$, $\mb_\G(T)$, and $\X$. We have:
    $$\K = \{(i\rightarrow j, \A_{j,i}) : \forall j \in \{T\} \cup \ch_\G(T), \forall i \in \pa_\G(j)\}.$$
\end{restatable}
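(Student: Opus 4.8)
The strategy is to establish two loop invariants for Algorithm~\ref{alg:inverse_direct_lingam} and argue that the ``Assert'' on line~7 never fails, so that the algorithm terminates having processed exactly $\{T\}\cup\ch_\G(T)$ and having recorded, for each such vertex $j$, edges $i\to j$ with weights $\beta_{\U\setminus\{j\}\to j}^i$, which by Lemma~\ref{lemma:correct_regression_coefs} equal $\A_{j,i}$ whenever $\pa_\G(j)\subseteq\U\setminus\{j\}$.

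First I would set up the key invariant: at the start of each iteration of the \textbf{while} loop, $\U$ contains $T$ and all children of $T$ that have not yet been removed in line~10, and $\U$ never contains a proper descendant of $T$ (the original $\mb_\G(T)$ already excludes $\des_\G(T)\setminus\ch_\G(T)$, and removals only shrink $\U$). The crucial sub-claim is that whenever a vertex $k$ is removed in line~4 it is \emph{not} a child of $T$: the condition $\beta_{\U\setminus\{k\}\to k}^T=0$ must be shown to fail for $k\in\ch_\G(T)$. I would prove this by a direct computation on the regression of $X_k$ on $\X_{\U\setminus\{k\}}$, using that $T\to k$ is a true edge, that $T\in\U\setminus\{k\}$, and that under the structure all of $k$'s parents present in $\U$ are accounted for (or, where a parent is missing, showing the regression coefficient picks up the direct contribution $\A_{k,T}\neq 0$ through a path argument). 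Symmetrically, I would show that whenever line~9 records edges for a vertex $j$, either $j=T$ or $j\in\ch_\G(T)$ and $\pa_\G(j)\subseteq\U$ at that moment, so that Lemma~\ref{lemma:correct_regression_coefs} applies and the recorded weights are exactly $\A_{j,i}$ over $i\in\pa_\G(j)$.

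Next I would argue progress and the non-failure of the assert. Within the current $\U$, consider the induced DAG; among the vertices of $\U\setminus\{T\}$ that are not separated from $T$ by the line-3 test (i.e.\ that survive to line~7), take one, say $j$, that is a ``local leaf'': it has no child inside $\U$. Such a $j$ exists because $\U$ is finite and the induced subgraph on $\U$ is acyclic. For this $j$, $\pa_\G(j)\subseteq \mathcal{V}\setminus\des_\G(j)$ trivially, and I must show $\pa_\G(j)\subseteq\U$; this is where the Markov-blanket structure and the spouse-removal logic of lines~3--6 do the work — a parent of a child-of-$T$ is either in $\mb_\G(T)$ or is $T$ itself, hence was in the initial $\U$, and I need to check it was not prematurely removed. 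Then Lemma~\ref{lemma:correct_regression_coefs} gives $R_{\U\setminus\{j\}\to j}=E_j\indep\X_{\U\setminus\{j\}}$, so the assert holds. Iterating, $\U$ strictly decreases each pass (either line~4 or line~10 fires), so the loop halts, and it halts via the \textbf{break} in line~10 precisely when $j=T$, at which point every child of $T$ has either been recorded (line~9 in an earlier pass, as a local leaf) or removed as a spouse — but I would show a child of $T$ is never removed as a spouse, so all of $\ch_\G(T)$ and $T$ get their incoming edges recorded.

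The main obstacle I anticipate is the careful bookkeeping around spouses in lines~3--6: I must show (i) that the line-3 removal only ever deletes vertices that are \emph{not} $T$ and \emph{not} children of $T$ — i.e.\ spouses or parents whose marginal regression coefficient on $X_T$ vanishes — so no needed edge is lost, and (ii) that this removal is \emph{enough}, i.e.\ after all such removals a genuine local leaf with all parents present always remains, so the assert in line~7 cannot fail. Both require a precise analysis of how OLS coefficients behave under marginalization of confounders in a linear acyclic SEM (which direction a spurious nonzero can appear, and when $\beta^T_{\U\setminus\{k\}\to k}$ is forced nonzero versus can be zero), building on Lemmas~\ref{lemma:independent_noise} and~\ref{lemma:correct_regression_coefs}. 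Once that combinatorial/linear-algebraic case analysis is pinned down, the rest is the routine invariant-maintenance argument sketched above.
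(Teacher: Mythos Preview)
Your plan is essentially the paper's own argument, just organized as explicit loop invariants rather than the paper's terser three-case analysis on a ``last'' (sink) vertex $j\in\U$: case~$1^\circ$ ($j\in\{T\}\cup\ch_\G(T)$, where $\pa_\G(j)\subseteq\U$ so Lemma~\ref{lemma:correct_regression_coefs} applies and line~9 records exactly the true incoming edges), case~$2^\circ$ ($j$ a spouse whose common children with $T$ are already gone, so with $\pa_\G(T)\subset\U$ every $T$--$j$ path is blocked, $\beta^T=0$, and line~3 removes it), case~$3^\circ$ (a parent of $T$ is never a sink while $T\in\U$, so the loop breaks first). One small slip to fix: it is \emph{not} true that $\mb_\G(T)$ excludes $\des_\G(T)\setminus\ch_\G(T)$---a spouse of $T$ can be a non-child descendant of $T$---but this claim is not actually load-bearing in your plan, since all you need for Lemma~\ref{lemma:correct_regression_coefs} is the local-leaf condition (no child of $j$ in $\U$), which already forces $\U\setminus\{j\}\subseteq\mathcal{V}\setminus\des_\G(j)$.
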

\vspace{-0.4em}
\cref{thm:inv_direct_lingam_correct} is similar to~\cref{thm:correct_isa_ling}, except that the DAG can be uniquely identified. See~\cref{proof:thm_inv_direct_lingam_correct} for the proof, and~\cref{sec:acyclic_postprocessing} for also an alternative postprocessing of ISA, with the same ``ordering'' idea here.\looseness=-1

\vspace{-0.1em}
\section{EXPERIMENTS}\label{sec:experiments}
We assess the effectiveness of our method for cyclic and acyclic cases in \cref{sec:cyclic_experiments,sec:acyclic_experiments}, respectively. We provide an analysis of how our method performs under different sample sizes in \cref{sec:analysis_sample_sizes}, and an experiment on real data in \cref{sec:real_data}. The implementation details and running times are discussed in \cref{sec:supplementary_experiment_details,sec:running_time}, respectively.

\subsection{Cyclic Case}\label{sec:cyclic_experiments}
We conduct experiments to illustrate the output of our method, by adopting the left cyclic graph in \cref{fig:isa_algorithm_example} as ground truth. We simulate $2000$ samples from the LiNG SEM in \cref{eq:linear_sem_adj}, of which the nonzero weights of $\A$ are sampled uniformly from $[-0.9,-0.5]\cup[0.5, 0.9]$, and each exogenous noise $E_i$ is sampled uniformly from $[-c_i,c_i]$ to the power of $5$, with $c_i$ sampled randomly from $[0.75,1.25]$.

We first run the ICA-LiNG method by \citet{lacerda2008discovering} on all variables. To perform local causal discovery, we also run our Local ISA-LiNG method on target $T=3$ and its MB $\{1,2,4,5\}$. An example of the outputs by both methods, including the estimated edge weights, are provided in \cref{fig:cyclic_graph_outputs} in \cref{sec:supplementary_experiment_figures}. One observes that our method correct identifies the edges according to \cref{thm:correct_isa_ling}, and that the estimated edge weights are close to the true ones.

\textbf{With stability.} \ \  
To conduct a systematic validation, we restrict the cycles in the true graphs to be disjoint and the true $B$ matrices to be stable using an accept-reject approach; that is, the spectral radius of $B$ has to be strictly smaller than one. In this case, \cref{coro:local_stable} indicates that the stable solution can be uniquely identified locally. We simulate $50$-node directed cyclic graphs (DCGs) with maximum degree of $4$, and $2000$ samples from the LiNG SEM in \cref{eq:linear_sem_adj}. We use the same setup described above for the edge weights and noise distributions. To perform local causal discovery, we randomly select a target $T$ that is part of a cycle in the $50$-node DCGs. Due to the lack of local causal discovery baselines that handle cyclic graphs, we compare our method with those for acyclic cases, including GSBN \citep{margaritis1999bayesian}, Local A* \citep{ng2021reliable}, CMB \citep{gao2015local}, and LDECC \citep{gupta2023local}. Note that GSBN and Local A* require information of two-step MBs (i.e., $\mb_\G(T)$ and MB of each variable in $\mb_\G(T)$), which are not directly comparable to our method that requires only $\mb_\G(T)$; thus, we consider modifications of these methods, described in Appendix \ref{sec:supplementary_experiment_details}. We report the structural Hamming distance (SHD) of local DCG, which is explained in details in Appendix \ref{sec:metrics}.\looseness=-1

We provide the results for the methods using estimated MB in \cref{fig:lasso_mb_dag_shd_cyclic}, and using oracle MB in \cref{fig:oracle_mb_dag_shd_cyclic} in Appendix \ref{sec:supplementary_experiment_figures}. It is observed that our method achieves much lower SHD in both settings, thereby demonstrating its effectiveness for identifying the local structure.

\begin{figure*}[!t]
\hspace{-0.15em}
\begin{minipage}{0.33\textwidth}
\centering
\includegraphics[width=0.99\linewidth]{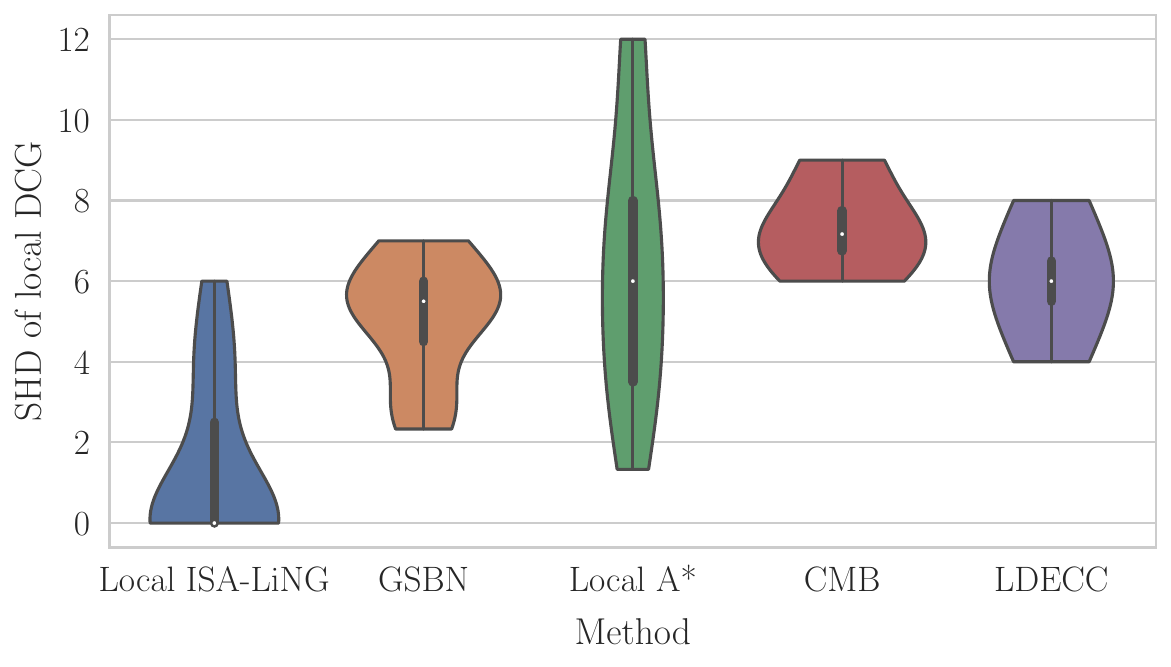}
\caption{{\small SHD of local DCG under estimated MB.}}
\label{fig:lasso_mb_dag_shd_cyclic}
\end{minipage}
\hspace{0.15em}
\begin{minipage}{0.33\textwidth}
\centering
\includegraphics[width=0.99\linewidth]{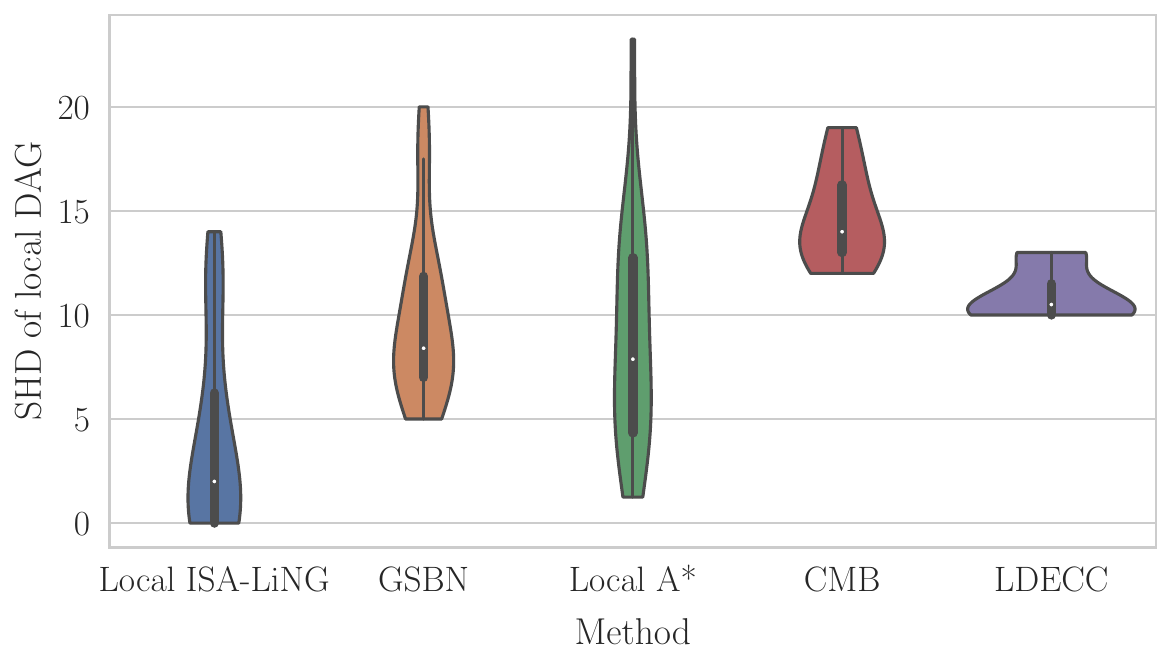}
\caption{{\small SHD of local DAG under estimated MB.}}
\label{fig:lasso_mb_dag_shd_degree_3_acyclic}
\end{minipage}
\hspace{0.15em}
\begin{minipage}{0.3\textwidth}
\centering
\includegraphics[width=0.99\linewidth]{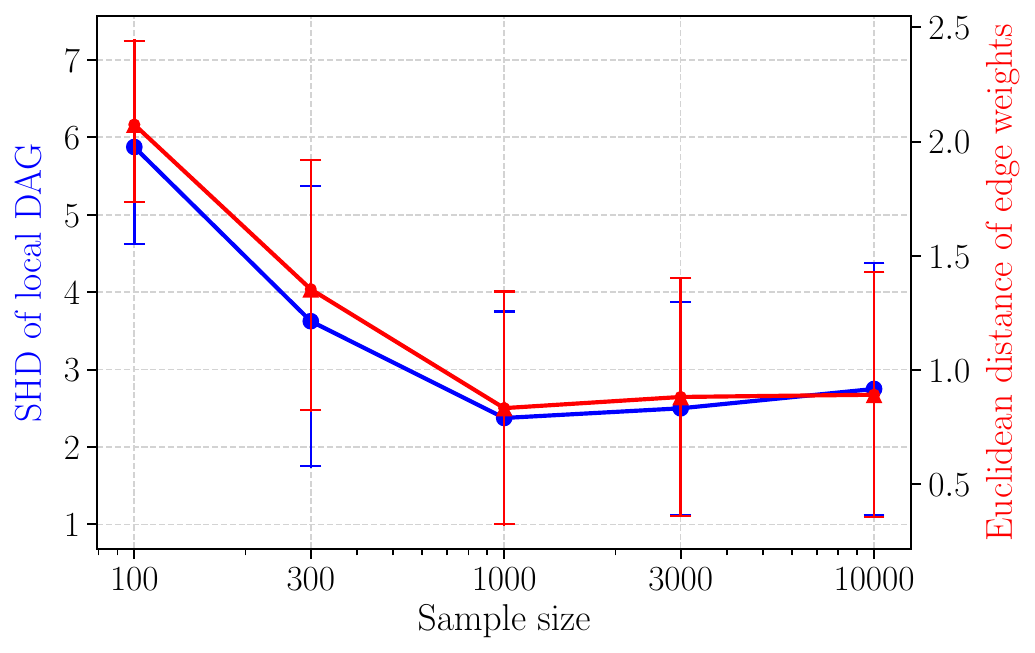}
\caption{{\small Local ISA-LiNG under oracle MB. X-axis is in log scale.}}
\label{fig:oracle_mb_different_sample_sizes}
\end{minipage}
\vspace{-0.6em}
\end{figure*}

\subsection{Acyclic Case}\label{sec:acyclic_experiments}
We consider the acyclic setting where the ground truths are DAGs.
In the acyclic case , we use a more efficient post-processing procedure for demixing matrix $\W$, described in Appendix \ref{sec:acyclic_postprocessing}. We simulate $50$-node Erd\"{o}s--R\'{e}nyi \citep{erdos1959random} DAGs, and $2000$ samples from the LiNG SEM in \cref{eq:linear_sem_adj} using the same setting (including edge weights and noise distributions) as that of \cref{sec:cyclic_experiments}.
To perform local causal discovery, we consider target $T$ from $50$-node DAGs with expected degrees of $3$ and $5$, leading to roughly $14$ and $20$ variables in the MB $\mb_\G(T)$, respectively. We report the SHD of local DAG and partially DAG (PDAG), explained in \cref{sec:metrics}.\looseness=-1

For degree of $3$, the SHDs of local DAG for the methods using estimated MB are shown in \cref{fig:lasso_mb_shd_degree_3_acyclic}, while the complete results using estimated MB and oracle MB are given in \cref{fig:lasso_mb_shd_degree_3_acyclic,fig:oracle_mb_shd_degree_3_acyclic} in \cref{sec:supplementary_experiment_figures}, respectively, due to space limit. We provide the results for degree of $5$ in \cref{fig:lasso_mb_shd_degree_5_acyclic} in \cref{sec:supplementary_experiment_figures}. Similar to the cyclic case, our method achieves much lower SHD for both local DAG and PDAG as compared to the baselines. One also observes that GSBN and Local A* performs better than CMB and LDECC.

\subsection{Analysis of Different Sample Sizes}\label{sec:analysis_sample_sizes}
We provide an analysis of the proposed method across sample sizes $n\in\{100,300,1000,3000,10000\}$, following the data generating procedure in \cref{sec:acyclic_experiments}. We report the SHD of local DAG and the Euclidean distance between the estimated edge weights and the true ones. The results using oracle MB is shown in \cref{fig:oracle_mb_different_sample_sizes}, while those using estimated MB are given in \cref{fig:lasso_mb_different_sample_sizes} in Appendix \ref{sec:supplementary_experiment_figures}. As the sample size increases, both metrics decrease to small values close to zero, which help validate the asymptotic correctness of our method in terms of both structure and parameter estimation. This also demonstrates the possibility of reliable estimation even when the sample size is rather limited.\looseness=-1

Moreover, we provide the scatter plots of the estimated and true edge weights in Figures \ref{fig:oracle_mb_edge_weights} and \ref{fig:lasso_mb_edge_weights} in \cref{sec:supplementary_experiment_figures}. For larger sample sizes, the data points are increasingly grouped onto the main diagonal, showing that the estimated weights become more accurate.

\vspace{-0.2em}
\subsection{Real Data}\label{sec:real_data}
\vspace{-0.1em}
We compare our method with GSBN and Local A$^{*}$ on a standard real-world dataset that collects continuous expression levels of proteins and phospholipids within human immunological cells \citep{sachs_data}, characterized by $853$ observational samples and a ground truth DAG with $11$ variables and $17$ edges.
Here, we select PIP2, PIP3, and Akt as target variables, and compute the SHD of local DAG  obtained by all three methods. As shown in the Table \ref{table:acc_comprs}, our method achieves lower SHD in most cases. A detailed comparison of ground-truth and estimated local causal structures can be found in Figure \ref{fig:sachs_exp} in Appendix \ref{sec:supplementary_experiment_figures}.
\begin{table}[!h]

\centering
\caption{SHD of different local causal discovery methods on real data by \citet{sachs_data}.
}
\vspace{-0.1em}
{\small
\begin{tabular}{cccc}
\toprule
\textbf{Target} & Ours& GSBN& Local A$^*$\\
\midrule
{PIP2}& $\bf{1}$& 1.5& 3.6\\
{PIP3}& $\bf{1}$& $\bf{1}$ & 4\\
{Akt}& $\bf{1}$& 1.3& 1.3\\
\bottomrule
\end{tabular}
}
\label{table:acc_comprs}
\end{table}
\vspace{-0.3em}
\section{CONCLUSION}\label{sec:conclusion}
\vspace{-0.1em}
We have expanded local causal discovery to include cyclic scenarios by generalizing the classic LiNGAM-based methods. Notably, while previous local search methods based on conditional independence tests or likelihood-based scores often fail to determine the direction of certain edges, our method leverages non-Gaussianity to enable more precise edge orientations. This leads to a more comprehensive representation of the causal graph, even in cyclic contexts. Additionally, we have established identifiability guarantees for all our proposed methods. These theoretical findings have been validated using various datasets in both synthetic and real-world settings. Future work includes characterizing the number of possible structures in the cyclic equivalence class estimated by our method.

\section*{Acknowledgements}
The authors would like to thank reviewers for their helpful comments. This material is based upon work supported by the AI Research Institutes Program funded by the National Science Foundation under AI Institute for Societal Decision Making (AI-SDM), Award No. 2229881.  The project is also partially supported by the National Institutes of Health (NIH) under Contract R01HL159805, and grants from Apple Inc., KDDI Research Inc., Quris AI, and Infinite Brain Technology.

\clearpage
\appendix

\onecolumn 
{\hsize\textwidth
    \linewidth\hsize \toptitlebar {\centering
        {\Large\bfseries Local Causal Discovery with Linear non-Gaussian Cyclic Models: Supplementary Materials \par}}
    \bottomtitlebar}

\section{MARKOV BLANKET DISCOVERY FOR CYCLIC GRAPHS}\label{sec:markov_blanket}
The local causal discovery procedures presented in Sections \ref{sec:isa_ling} and \ref{sec:inv_direct_lingam} rely on knowledge about the MB of the target variable $T$ i.e., its parents, children, and spouses. To the best of our knowledge, many existing MB estimation methods, such as those based on nonparametric conditional independence test, e.g., GSMB \citep{margaritis1999bayesian}, IAMB \citep{tsamardinos2003algorithms}, and MMMB \citep{tsamardinos2006mmhc}, focus on the Bayesian network (i.e., acyclic) setting. That is, it may not be immediately clear how their estimated MB relates to the true one $\mb_\G(T)$ in the presence of cycles, partly owing to the extra complications involved when handling cycles with conditional independence tests \citep{spirtes1994conditional}. In this section, we provide a method to estimate the MB of a variable from a linear cyclic SEM. Specifically, we build upon the method proposed by \citet{loh2014high} that, similar to methods based on conditional independence tests, makes the acyclicity assumption, and further generalize it to handle cycles.

We first define the moral graph of a directed cyclic graph the same way as that of a DAG. Specifically, the moral graph of directed graph $\G$ is an undirected graph that contains an edge between two nodes if (1) they are adjacent in $\G$, or (2) they share the same children. Clearly, the MB of a variable is simply its neighbors in the moral graph of $\G$. Here, we provide a method to estimate such moral graph, which informs us about $\mb_\G(T)$. Considering the linear SEM in \cref{eq:linear_sem_adj}, the inverse covariance matrix of the distribution of variables $\mathbf{\X}$ is given by $\mathbf{\Theta}=(\mathbf{I}-\mathbf{\A})\mathbf{\Omega}^{-1}(\mathbf{I}-\mathbf{\A})^\intercal$, where $\mathbf{\Omega}\coloneqq\operatorname{diag}(\sigma_1^2,\dots,\sigma_d^2)\coloneqq\operatorname{cov}(\mathbf{\E})$ is the covariance matrix of exogenous noise components $\mathbf{\E}$. Inspired by \citet[Assumption~1]{loh2014high} in the acyclic case, we make the following assumption in the cyclic case.
\begin{assumption}\label{assumption:inv_cov}
Let $\mathbf{\A}$ and $\mathbf{\Omega}$ be the weighted adjacency matrix and noise covariance matrix, respectively, of the linear SEM in \cref{eq:linear_sem_adj}. For every $j< i$, we have
\begin{equation}\label{eq:assumption_inv_cov}
-\sigma_j^{-2}\mathbf{\A}_{i,j}-\sigma_i^{-2}\mathbf{\A}_{j,i} + \sum_{\mathclap{\ell\neq j,i}}\sigma_\ell^{-2}\mathbf{\A}_{j,\ell}\mathbf{\A}_{i,\ell}= 0,
\end{equation}
only if $\mathbf{\A}_{i,j}=\mathbf{\A}_{j,i}= 0$ and $\mathbf{\A}_{j,\ell}\mathbf{\A}_{i,\ell}= 0$ for all $\ell\neq j,i$.
\end{assumption}
As we will show in the proof, the LHS of \cref{eq:assumption_inv_cov} is equal to $\mathbf{\Theta}_{j,i}$. It is worth noting that if the nonzero coefficients of $\mathbf{\A}$ are randomly drawn from a distribution that is absolutely continuous with respect to Lebesgue measure, then the above assumption is only violated for a set of matrices $\mathbf{\A}$ with zero Lebesgue measure. We then have the following proposition, with a proof given in Appendix \ref{proof:proposition_markov_blanket}. Note that the proposition and its proof are built upon \citet[Theorem~2]{loh2014high} in the acyclic case, which we generalize to the cyclic case.
\begin{restatable}{proposition}{PROPMARKOVBLANKET}\label{proposition:markov_blanket}
Suppose $\mathbf{X}$ follows the linear SEM in \cref{eq:linear_sem_adj} with directed cyclic graph $\G$ and inverse covariance matrix $\mathbf{\Theta}$. Under Assumption \ref{assumption:inv_cov}, the structure defined by the support of $\mathbf{\Theta}$ is the same as the moral graph of $\G$.
\end{restatable}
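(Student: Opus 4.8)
The plan is to reduce \cref{proposition:markov_blanket} to the zero/non‑zero pattern of a single closed‑form expression for the off‑diagonal entries of $\mathbf{\Theta}$, then read off each of the two moralization conditions directly from that expression, invoking \cref{assumption:inv_cov} only for one of the two inclusions.

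\textbf{Step 1: the entrywise formula, valid with cycles.} Since $\A$ has no eigenvalue equal to $1$, $\I-\A$ is invertible; from $\X=\B\E$ with $\B=(\I-\A)^{-1}$ and $\operatorname{cov}(\E)=\mathbf{\Omega}=\operatorname{diag}(\sigma_1^2,\dots,\sigma_d^2)$ positive definite, $\operatorname{cov}(\X)=\B\mathbf{\Omega}\B^{\intercal}$ is positive definite, so $\mathbf{\Theta}\coloneqq\operatorname{cov}(\X)^{-1}=(\I-\A)\mathbf{\Omega}^{-1}(\I-\A)^{\intercal}$ is well defined, as stated in the text. The point to stress is that this derivation uses only invertibility of $\I-\A$ and mutual independence of the noises, never a topological ordering, so it holds verbatim when $\G$ has cycles; this is where the generalization of \citet[Theorem~2]{loh2014high} enters at the level of the formula. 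Expanding with $(\I-\A)_{a,\ell}=\delta_{a\ell}-\A_{a,\ell}$ and $\mathbf{\Omega}^{-1}=\operatorname{diag}(\sigma_1^{-2},\dots,\sigma_d^{-2})$, and isolating the $\ell=i$ and $\ell=j$ terms, yields for $j\neq i$
$$\mathbf{\Theta}_{j,i}=-\sigma_j^{-2}\A_{i,j}-\sigma_i^{-2}\A_{j,i}+\sum_{\ell\neq j,i}\sigma_\ell^{-2}\A_{j,\ell}\A_{i,\ell},$$
which is exactly the left‑hand side of \cref{eq:assumption_inv_cov}. For completeness, $\mathbf{\Theta}_{i,i}=\sigma_i^{-2}+\sum_{\ell\neq i}\sigma_\ell^{-2}\A_{i,\ell}^{2}>0$ always, so the diagonal is automatically full and carries no structural information; only the off‑diagonal supports need to be matched.

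\textbf{Step 2: matching the moral graph.} By definition, $\{i,j\}$ is an edge of the moral graph of $\G$ iff $i$ and $j$ are adjacent in $\G$ (equivalently $\A_{i,j}\neq0$ or $\A_{j,i}\neq0$) or $i$ and $j$ have a common child (equivalently $\A_{j,\ell}\A_{i,\ell}\neq0$ for some $\ell\neq i,j$). These are precisely the three families of summands of $\mathbf{\Theta}_{j,i}$ above. For $\operatorname{supp}(\mathbf{\Theta})\subseteq$ moral graph: if $\mathbf{\Theta}_{j,i}\neq0$, then, since every $\sigma_\ell^{-2}>0$, some summand is nonzero, hence $\A_{i,j}\neq0$ or $\A_{j,i}\neq0$ or $\A_{j,\ell}\A_{i,\ell}\neq0$ for some $\ell$, each of which makes $\{i,j\}$ a moral‑graph edge; this direction uses no assumption. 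For the reverse: if $\{i,j\}$ is a moral‑graph edge, then the statement ``$\A_{i,j}=\A_{j,i}=0$ and $\A_{j,\ell}\A_{i,\ell}=0$ for all $\ell\neq j,i$'' fails, so by the contrapositive of \cref{assumption:inv_cov} the left‑hand side of \cref{eq:assumption_inv_cov}, equal to $\mathbf{\Theta}_{j,i}$ by Step~1, is nonzero. Combining the two inclusions, the support of $\mathbf{\Theta}$ and the moral graph of $\G$ coincide, which is the claim.

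\textbf{Main obstacle.} The argument has no genuinely hard step once the closed form of $\mathbf{\Theta}_{j,i}$ is in hand --- the rest is bookkeeping. What truly needs care, and what constitutes the extension of \citet{loh2014high} to the cyclic setting, is twofold: first, verifying that neither the identity $\mathbf{\Theta}=(\I-\A)\mathbf{\Omega}^{-1}(\I-\A)^{\intercal}$ nor its entrywise expansion ever appeals to a topological ordering of $\G$, so both survive the presence of (possibly overlapping) cycles; and second, recognizing that \cref{assumption:inv_cov} is exactly the faithfulness‑type non‑cancellation hypothesis --- the cyclic counterpart of Assumption~1 of \citet{loh2014high} --- required to keep $\mathbf{\Theta}_{j,i}$ from vanishing when a moral‑graph edge is present. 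It is also worth confirming explicitly that the whole argument runs on the zero pattern of $\mathbf{\Theta}$ alone and never on a conditional‑independence / faithfulness statement, the latter being the delicate part for cyclic graphs.
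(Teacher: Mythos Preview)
Your proposal is correct and follows essentially the same approach as the paper's proof. The paper packages Step~1 as a separate lemma (the entrywise formula for $\mathbf{\Theta}$, restated from \citet{ng2021reliable,loh2014high} and noted to hold without acyclicity) and the ``$\operatorname{supp}(\mathbf{\Theta})\subseteq$ moral graph'' direction as another lemma, then invokes \cref{assumption:inv_cov} for the reverse inclusion exactly as you do; your write-up simply inlines those lemmas.
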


Asymptotically speaking, the true inverse covariance $\mathbf{\Theta}$ can be estimated by computing the inverse of empirical covariance matrix. For finite samples, \citet{ravikumar2011high} established high dimensional guarantee for estimating the support of $\mathbf{\Theta}$ using graphical Lasso \citep{friedman2008sparse}. An alternative approach \citep{meinshausen2006high} is to perform nodewise regression with Lasso \citep{robert1996lasso}, which we adopt in this work. That is, we regress the target $T$ on the other variables $[d]\setminus\{T\}$ with Lasso, from which the nonzero coefficients determine the MB of $T$.

\section{Illustrative Examples}\label{sec:examples}

\begin{figure}[H]
\centering
\includegraphics[width=0.45\textwidth]{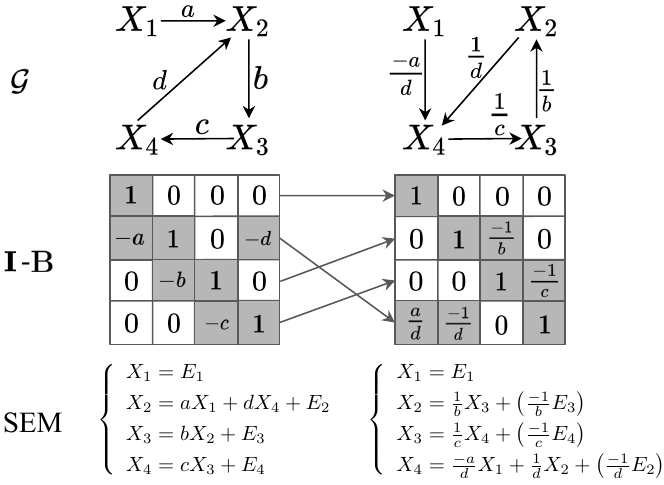}
\caption{\small Example of two equivalent cyclic LiNG models.\looseness=-1}
\label{fig:cyclic_intro_examples}
\end{figure}

This is an illustrative example of the global LiNG equivalence class $\mathcal{B}$ defined in~\cref{def:LiNG_equiv_class} of \cref{subsec:global_cyclic_ling}.
\section{POST-PROCESSING FOR LOCAL ISA-LING}\label{sec:acyclic_postprocessing}
In this section, we provide an alternative post-processing procedure to obtain the estimated structures and edge weights from the ISA solution $\W$, described in Algorithm \ref{alg:post_processing_isa}. This procedure assumes that none of $T$ and $\ch_{\G}(T)$ is part of any cycles in $\G$ (including the case where $\G$ is acyclic). The overall idea is similar to that of the regression-based approach described in Algorithm \ref{alg:inverse_direct_lingam}. That is, Algorithm \ref{alg:post_processing_isa} iteratively finds the ``sink'' node from the ISA solution that is not an ancestor of the other nodes in the remaining vertex set.
\begin{algorithm}[!h] %
	\caption{Alternative post-processing procedure of ISA solution}
	\label{alg:post_processing_isa}
	\hspace*{0.02in} {\bf Input:}
	A target vertex $T\in\mathcal{V}$, its oracle MB $\mb_\G(T)$, and ISA solution $\W$\\
	\hspace*{0.02in} {\bf Output:}
	A set of directed edges with weights

 \begin{algorithmic}[1]
	\STATE Initialize the remaining vertex set $\U_1,\U_2\coloneqq \{T\}\cup \mb_\G(T)$, and the output edge set $\K \coloneqq \emptyset$;
	\WHILE{$\U_1 \neq \emptyset$}
      \STATE \textbf{Assert} $\exists j\in \U_1$ s.t. $\|\W_{\U_2,j}\|_0=1$;
      \STATE Set $j$ as any one found in line 7;
      \STATE Let $k\in\U_2$ be s.t. $\W_{k,j}=1$;
      \IF{$\W_{k,T} \neq 0$}
        \STATE Add to $\K$ an edge $i\rightarrow j$ with weight $\W_{k,i}$ for each $i\in\U_1\backslash\{j\}$ with $\W_{k,i} \neq 0$;
      \ENDIF
      \STATE \textbf{break} if $j=T$; Otherwise set $\U_1\coloneqq \U_1\backslash\{j\}$ and $\U_2\coloneqq \U_2\backslash\{k\}$;
    \ENDWHILE
	\STATE {\bfseries Return} $\K$;
	\end{algorithmic}
\end{algorithm}

\section{PROOFS OF MAIN RESULTS}\label{sec:proofs}

\subsection{Proof of~\cref{thm:BSS_inv_is_ISA}}\label{proof:thm_BSS_inv_is_ISA}
\THMONECHARACTERIZATIONOFISAINLING*

\newpage
\begin{proof}
    For convenience denote $\W\coloneqq\B_{\Sb,\Sb}^{-1}$. Write the ISA demixed subspaces as exogenous noise combinations:\looseness=-1\vspace{2em}
\begin{align*}
\W\X_\Sb &= 
    \begin{bNiceArray}{w{c}{1cm}}[margin] \Block{3-1}{\W} \\ \\ \\ \end{bNiceArray} \ \ \cdot \ \
    \begin{bNiceArray}{w{c}{3.4cm}}[first-col,margin] \Block{3-1}{\Sb\hspace*{3mm}} & \Block{3-1}{\B_{\Sb,:}}\\ \\ \\ 
    \CodeAfter
    \OverBrace[yshift=2mm]{1-1}{1-last}{\mathcal{V}\coloneqq[d]}
    \SubMatrix{\{}{1-1}{3-1}{.}[left-xshift=1.7cm]
    \end{bNiceArray}\cdot\E \\[3em]
&=\begin{bNiceArray}{w{c}{1cm}}[margin] \Block{3-1}{\B_{\Sb,\Sb}^{-1}} \\ \\ \\ \end{bNiceArray} \ \ \cdot \ \
\begin{bNiceArray}{w{c}{1cm}|w{c}{2.2cm}}[first-col,margin]\Block{3-1}{\Sb\hspace*{2mm}} & \Block{3-1}{\B_{\Sb,\Sb}} & \Block{3-1}{\B_{\Sb,\bar{\Sb}}} \\ \\ \\ 
\CodeAfter
    \OverBrace[yshift=2mm]{1-1}{1-1}{\Sb}
    \OverBrace[yshift=2mm]{1-2}{1-last}{\bar{\Sb}\coloneqq\mathcal{V}\backslash\Sb\vspace{-0.25em}}
    \SubMatrix{\{}{1-1}{3-1}{.}[left-xshift=0.4cm]
    \end{bNiceArray}\cdot\E \\[3em]
&=\begin{bNiceArray}{w{c}{1cm}|w{c}{2.2cm}}[first-col,margin]\Block{3-1}{\Sb\hspace*{3mm}} & \Block{3-1}{\I} & \Block{3-1}{\cdots} \\ \\ \\ 
\CodeAfter
    \OverBrace[yshift=2mm]{1-1}{1-1}{\Sb}
    \OverBrace[yshift=2mm]{1-2}{1-last}{\bar{\Sb}\coloneqq\mathcal{V}\backslash\Sb\vspace{-0.25em}}
    \SubMatrix{\{}{1-1}{3-1}{.}[left-xshift=0.7cm]
    \end{bNiceArray}\cdot\E
\end{align*}
where $\E=(E_1^\intercal,\cdots,E_d^\intercal)^\intercal$ are the mutually independent exogenous non-Gaussian noise components.

To show that $\W$ is an ISA of $\X_\Sb$, we want to show that for any subspace $\Z_i \in  (\Z^\intercal_1, \dots, \Z^\intercal_k)^\intercal  =  \W\X_\Sb$ with $m\coloneqq|\Z_i|\geq 2$ (otherwise it's already a single component; $|\cdot|$ denotes dimension or cardinality), $\Z_i$ is irreducible (\cref{def:irreducibility}), i.e., there exists no invertible matrix $\Hb \in Gl(m)$ s.t. $\Hb \Z_i$ produces two or more independent subspaces (random vectors). For convenience, we denote the row indices corresponding to the row-submatrix of $\W$ that produces the subspace $\Z_i$ as $\M$ ($\M\subset \Sb$), i.e., $\Z_i = \W_{\M,:}\X_\Sb$. Similarly, rewrite it to noise combinations:\vspace{2em}
\begin{align}
\W\X_\Sb &= \ 
    \begin{bNiceArray}{w{c}{1cm}}[first-col,margin] \Block{2-1}{\M\hspace*{1mm}} & \Block{2-1}{\W_{\M,:}}\\ \\ 
    \CodeAfter
    \OverBrace[yshift=2mm]{1-1}{1-last}{\Sb}
    \SubMatrix{\{}{1-1}{2-1}{.}[left-xshift=0.3cm]
    \end{bNiceArray} \ \ \cdot \ \
    \begin{bNiceArray}{w{c}{3.4cm}}[first-col,margin] \Block{3-1}{\Sb\hspace*{3mm}} & \Block{3-1}{\B_{\Sb,:}}\\ \\ \\
    \CodeAfter
    \OverBrace[yshift=2mm]{1-1}{1-last}{\mathcal{V}\coloneqq[d]}
    \SubMatrix{\{}{1-1}{last-1}{.}[left-xshift=1.7cm]
    \end{bNiceArray}\cdot\E  \nonumber \\[3em]
&= \ \begin{bNiceArray}{w{c}{0.67cm}|w{c}{2.73cm}}[first-col,margin]\Block{2-1}{\M\hspace*{1mm}} & \Block{2-1}{\I} & \Block{2-1}{\cdots} \\ \\ 
\CodeAfter
    \OverBrace[yshift=2mm]{1-1}{1-1}{\M}
    \OverBrace[yshift=2mm]{1-2}{1-last}{\bar{\M}\coloneqq\mathcal{V}\backslash\M\vspace{-0.25em}}
    \SubMatrix{\{}{1-1}{last-1}{.}[left-xshift=0.5cm]
    \end{bNiceArray}\cdot\E \label{proof_eq:eq9}\\[2em]
    & \eqqcolon \C_\M \cdot \E,\nonumber
\end{align}
where we denote the $m\times d$ rectangle mixing submatrix in~\cref{proof_eq:eq9} as $\C_\M$. We do not use letter $\B$ for distinguishment, as it is multiplied by $\W$, and is different from submatrix from the original $\B$ mixing matrix.

Suppose for contradiction that $\Z_i$ is irreducible, i.e., there exists an invertible matrix $\Hb \in Gl(m)$ s.t. $\Hb \Z_i$ produces at least two independent subspaces, then, there must exist a partition $\Pb_1, \Pb_2$ of $[d]$ s.t.,
\begin{align}
\Hb\Z_i &= \ \Hb \cdot  \C_\M \cdot \E \nonumber\\[2em] 
&=
    \begin{bNiceArray}{w{c}{0.67cm}}[margin]
    \Block{1-1}{\Hb_1}\\ 
    \hline
    \Block{1-1}{\Hb_2} \\ 
    \end{bNiceArray} \ \ \cdot \ \
    \begin{bNiceArray}{w{c}{1.4cm}|w{c}{2cm}}[first-col,margin]\Block{2-1}{\M\hspace*{1mm}} & \Block{2-1}{\C_{\M,\Pb_1}} & \Block{2-1}{\C_{\M,\Pb_2}} \\ \\ 
\CodeAfter
    \OverBrace[yshift=2mm]{1-1}{1-1}{\Pb_1}
    \OverBrace[yshift=2mm]{1-2}{1-2}{\Pb_2=\M\backslash\Pb_1}
    \SubMatrix{\{}{1-1}{last-1}{.}[left-xshift=0.4cm]
    \end{bNiceArray}\cdot\E \label{proof_eq:eq12}\\[3em]
    &= \begin{bNiceArray}{w{c}{1.4cm}|w{c}{2cm}}[margin] \mathbf{0} & \cdots \\ 
    \hline \cdots & \mathbf{0}\\ 
\CodeAfter
    \OverBrace[yshift=2mm]{1-1}{1-1}{\Pb_1}
    \OverBrace[yshift=2mm]{1-2}{1-2}{\Pb_2=\M\backslash\Pb_1}
    \end{bNiceArray},\label{proof_eq:eq13}
\end{align}
i.e., $\Hb_1 \Z_i$ and $\Hb_2 \Z_i$ are linear combinations of disjoint sets of exogenous noise components, and thus by the Darmois-Skitovitch theorem~\citep{darmois1953analyse,skitovitch1953property}, they are mutually independent.

By~\cref{proof_eq:eq12,proof_eq:eq13} we have that row vectors of $\Hb_1$ lie in $\operatorname{nullspace}(\C_{\M,\Pb_1}^\intercal)$, and row vectors of $\Hb_2$ lie in $\operatorname{nullspace}(\C_{\M,\Pb_2}^\intercal)$. Also, since $\C_{\M,\M}=\I$, $\operatorname{rank}(\C_\M) = m$ (i.e., full row rank), so,
\begin{align*}
    \operatorname{rank}(\C_{\M,\Pb_1}) + \operatorname{rank}(\C_{\M,\Pb_2}) \geq \operatorname{rank}(\C_{\M,\Pb_1} | \C_{\M,\Pb_2}) = m,
\end{align*}
and thus
\begin{align*}
    & m - \operatorname{nullity}(\C_{\M,\Pb_1}^\intercal) + m - \operatorname{nullity}(\C_{\M,\Pb_2}^\intercal) \geq = m, \\
     \text{i.e., } &\operatorname{nullity}(\C_{\M,\Pb_1}^\intercal) + \operatorname{nullity}(\C_{\M,\Pb_2}^\intercal) \leq m
\end{align*}

Consider the following two cases:

\begin{enumerate}%
    \item[$1^\circ$] When $\operatorname{nullity}(\C_{\M,\Pb_1}^\intercal) + \operatorname{nullity}(\C_{\M,\Pb_2}^\intercal) < m$, even when these two nullspaces are linearly independent, the number of their supports is less than $m$ and there are not enough number of linearly independent row vectors to fill into $\Hb_1$ and $\Hb_2$ to form an invertible $\Hb$. Contradicted with our hypothesis.
    \item[$2^\circ$] When $\operatorname{nullity}(\C_{\M,\Pb_1}^\intercal) + \operatorname{nullity}(\C_{\M,\Pb_2}^\intercal) = m$, the above independence condition $\Hb_1\Z_i \indep \Hb_2\Z_i$ is nontrivial (i.e., both are still random vectors with covariance, instead of a collapsing constant zero) only when:
    \begin{align*}
\left\{ \begin{array}{l}
\operatorname{nullity}(\C_{\M,\Pb_1}^\intercal)>0 \\
\operatorname{nullity}(\C_{\M,\Pb_2}^\intercal)>0
\end{array}\right.
\end{align*}
    However, this is impossible:

    Suppose for contradiction that $0 < \operatorname{rank}(\C_{\M,\Pb_1}), \operatorname{rank}(\C_{\M,\Pb_2}) < m$, then at least the $\C_{\M,\M}=\I$ part must be separated, i.e., $\biggl\{\begin{array}{l}
\M\not\subset\Pb_1 \\
\M\not\subset\Pb_2
\end{array}$. Then, there must be a partition of $\M$ into into smaller respective subsets $(\M_u,\M_v)$ (we do not use $\M_1,\M_2$ to distinguish from the row indices for $\Hb_1,\Hb_2$) s.t. $\biggl\{\begin{array}{l}
\M_u\subset\Pb_1 \\
\M_v\subset\Pb_2
\end{array}$, then, since $\operatorname{rank}(\C_{\M,\Pb_1}) = |\M_u|$ and $\C_{\M_u,\M_u}=\I$, $\C_{\M_v,\M_u}$ must be all zeros. Further, since the $\M_v$ rows are linear combinations of the $\M_u$ rows, $\C_{\M_v,\Pb_1 \backslash \M_u}$ must also be all zeros. Same applies to $\C_{\M,\Pb_2}$. We have:
\vspace{2em}
\begin{equation*}
    \C_{\M,\Pb_1} = \begin{bNiceArray}{w{c}{0.33cm}|w{c}{1.07cm}}[first-col,margin]{\M_u\hspace*{1mm}} & {\I} & {\cdots} \\
    \hline
    {\M_v\hspace*{1mm}} & {\mathbf{0}} & {\mathbf{0}}\\ 
\CodeAfter
    \OverBrace[yshift=2mm]{1-1}{1-1}{\M_u}
    \OverBrace[yshift=2mm]{1-2}{1-2}{\Pb_1\backslash\M_u}
    \SubMatrix{\{}{1-1}{1-1}{.}[left-xshift=0.35cm]
    \SubMatrix{\{}{2-1}{2-1}{.}[left-xshift=0.35cm]
    \end{bNiceArray} \text{\quad and\quad}
    \C_{\M,\Pb_2} = \begin{bNiceArray}{w{c}{0.33cm}|w{c}{1.67cm}}[first-col,margin]{\M_u\hspace*{1mm}} & {\mathbf{0}} & {\mathbf{0}} \\
    \hline
    {\M_v\hspace*{1mm}} & {\I} & {\cdots} \\ 
\CodeAfter
    \OverBrace[yshift=2mm]{1-1}{1-1}{\M_v}
    \OverBrace[yshift=2mm]{1-2}{1-2}{\Pb_2\backslash\M_v}
    \SubMatrix{\{}{1-1}{1-1}{.}[left-xshift=0.35cm]
    \SubMatrix{\{}{2-1}{2-1}{.}[left-xshift=0.35cm]
    \end{bNiceArray},
\end{equation*}
However, in this case, $\C_{\M_u,:}\E \indep \C_{\M_v,:}\E$, as they share disjoint non-Gaussian $\E$ components. This contradicts with the initial hypothesis on a nontrivial subspace $\Z_i$, as $\M_u$ and $\M_v$ in $\Sb$ will not be mixed in $\M$, but rather produce two independence subspaces at the very beginning.
\end{enumerate}

From the above contradiction, every $\Z_i$ must be irreducible. So $\B_{\Sb,\Sb}^{-1}$ is an ISA.
\end{proof}

Note that while we are not the first to use ISA in linear non-Gaussian models with latent variables, this work is, to the best of our knowledge, the first with an identifiability guarantee. As shown in~\cref{example:hidden_confounder_ica,example:BSS_inv_vs_ASS,example:BSS_inv_no_diagonal_ones,example:permutation_diagonal_nonzeros_on_isa}, the characterization and post-processing of ISA are highly nontrivial. Some prior works~\citep{sanchez2019estimating} simply treated ISA solutions like ICA solutions and applied the same post-processing, resulting in inaccuracies. Other works used ISA mainly for downstream steps e.g., OICA~\citep{hoyer2008estimation} or independence tests~\citep{dai2022independence}, but not directly for the LiNG model identification. We believe that the generalized characterization of ISA solutions provided in~\cref{thm:BSS_inv_is_ISA} can be helpful for future works on causal discovery with latent variables.

\subsection{Proof of~\cref{lemma:when_parents_are_in_ISA}}\label{proof:lemma_when_parents_are_in_ISA}
\LEMMAWHENPARENTSAREINISA*
\begin{proof}
For any vertex $i$ and vertex set $\Sb$ of $\mathcal{V}$ with $i\in\Sb$ and $\pa_\G(i) \subset \Sb$, we can write the variable $X_i$ as
    \begin{align}
        X_i &= \B_i \E  \label{proof_eq:lem1eq1} \\
            &= \A_{i,\pa_\G(i)}\X_{\pa_\G(i)} + E_i \label{proof_eq:lem1eq12}\\
            &= \A_{i,\Sb}\X_{\Sb} + E_i \label{proof_eq:lem1eq13} \\
            &= \A_{i,\Sb}\B_{\Sb}\E + E_i \label{proof_eq:lem1eq2}
    \end{align}    
    where subscripts of index of indices denote the corresponding row/column submatrices. \cref{proof_eq:lem1eq12} to~\cref{proof_eq:lem1eq13} is trivial because $i$ has no parents from outside of $\Sb$, i.e., $\A_{i,\Sb\backslash\pa_\G(i)} = \mathbf{0}$.

    By \cref{proof_eq:lem1eq1}=\cref{proof_eq:lem1eq2}, we have
    \begin{align}
        \B_i = \A_{i,\Sb}\B_\Sb + \mathbbm{1}_{i}^{|\Sb|}, \label{proof_eq:lem1eq3}
    \end{align}
    where $\mathbbm{1}_{i}^{|\Sb|}$ denotes the row vector of dimension $|\Sb|$ with only the $i$-th indexed entry being one, and elsewhere zeros. \cref{proof_eq:lem1eq3} tells that all noise components (``ancestors'') coming into $X_i$, except for the $E_i$ itself, must go through $\pa_G(i)$. 

    Keep only the columns of $\Sb$ on \cref{proof_eq:lem1eq3}, we have
    \begin{align}
        \B_{i,\Sb} = \A_{i,\Sb}\B_{\Sb,\Sb} + \mathbbm{1}_{i}^{|\Sb|}, \label{proof_eq:lem1eq4}
    \end{align}

    By \cref{proof_eq:lem1eq4} we have
    \begin{align}
        (\B_{i,\Sb} - \mathbbm{1}_{i}^{|\Sb|}) \B_{\Sb,\Sb}^{-1} = \A_{i,\Sb}, \label{proof_eq:lem1eq5}
    \end{align}

    where note that $\B_{\Sb,\Sb}^{-1}$ is exactly the ISA characterization (\cref{subsec:BMM_inv}) for $\X_\Sb$. Expand \cref{proof_eq:lem1eq5} we have
    \begin{align}
        & \B_{i,\Sb}\B_{\Sb,\Sb}^{-1} - \mathbbm{1}_{i}^{|\Sb|}\B_{\Sb,\Sb}^{-1} = \A_{i,\Sb},\text{ i.e.,} \nonumber\\
        & \mathbbm{1}_{i}^{|\Sb|} - (\B_{\Sb,\Sb}^{-1})_{i} = \A_{i,\Sb},\label{proof_eq:lem1eq6}
    \end{align}
    \cref{proof_eq:lem1eq6} tells that the $i$-th row of the ISA characterization $\B_{\Sb,\Sb}^{-1}$ is exactly $\mathbbm{1}_{i}^{|\Sb|} - \A_{i,\Sb}$, i.e., the $i$-th row of $\I - \A_{\Sb,\Sb}$. In other words, $(\B_{\Sb,\Sb}^{-1})_{i} \B_\Sb = \mathbbm{1}_{i}^{|\Sb|}$. Then, substitute \cref{proof_eq:lem1eq6} into the demixed subspaces,
    \begin{align*}
        (\B_{\Sb,\Sb}^{-1})_{i} \X_\Sb = X_i - \A_{i,\Sb}\X_\Sb = E_i,
    \end{align*}
    i.e., the indpendent component (1-dim subspace) of $E_i$ is exactly recovered.

    Finally, with the subspace-wise permutation and scaling indeterminacies of ISA (\cref{thm:indeterminacies_of_isa}), there must be a row in any ISA solution $\W$ being proportional to $(\I-\B_{\Sb,\Sb})_i$, and the decomposed component also.
\end{proof}

\subsection{Proof of~\cref{lemma:T_column_as_T_and_children}}\label{proof:lemma_T_column_as_T_and_children}
\LEMMATCOLUMNASTANDCHILDREN*
The proof is apparent and is almost the same as the above for~\cref{lemma:when_parents_are_in_ISA}: since all of $T$'s children is included (as the ``all parents'' in that of~\cref{lemma:when_parents_are_in_ISA}), all the weights outgoing from $T$ can also be correctly estimated. This can also be seen from the expression of $(\B_{\Sb,\Sb}^{-1})_{i,T}$ entries in~\cref{eq:BSS_inv_outside}, that ISA has indeterminacies of subspace-wise permutations and scalings, and that rows permutations of $(\B_{\Sb,\Sb}^{-1})_{i,T}$ with nonzero diagonal entries directly correspond to each of that on $\B_{\Sb,\Sb}^{-1}$ (the equivalence class $\mathcal{B}$ in~\cref{def:LiNG_equiv_class}).

\subsection{Proof of~\cref{thm:correct_isa_ling}}\label{proof:thm_correct_isa_ling}
\THMCORRECTNESSOFLOCALISALING*

To show the correctness of~\cref{alg:local_isa_ling}, we first show the correctness of the ``admissible'' block permutations defined in~\cref{def:admissible_block_permutations}. To put it formally, we have the following lemma:
\begin{lemma}\label{lemma:groupwise_permutation_equivalent}
    Let $\C$ be an arbitrary $m\times m$ invertible matrix, $\PRT$ be an arbitrary partition of $[m]$.
    
    Denote by $\Pi_\C$ as the set of all the rows permutations of $\C$ that result in nonzero diagonal entries, i.e.,
    $$\Pi_\C \coloneqq \{\pi : \Pb_\pi\C \text{ has all the nonzero diagonal entries.}\},$$
    Denote by $\Pi_{\C;\PRT}$ as all the rows permutations that result in invertible diagonal blocks on general scaled $\C$, i.e.,
    $$\Pi_{\C;\PRT} \coloneqq \{\pi : \exists \D_\PRT, \forall \Sb \in \PRT, \operatorname{ rank}((\Pb_\pi\D_\PRT\C)_{\pi[\Sb],\pi[\Sb]})=|\Sb|\},$$
    where $\D_\PRT$ is any general scaling matrix (defined in~\cref{subsec:isa_definition}) consistent with $\PRT$.
    
    For any two permutations $\pi$ and $\tau$ of $[m]$, we say they are groupwise equivalent regarding a partition $\PRT$ of $[m]$, denoted by $\pi\sim_\PRT\tau$, if and only if $\forall \Sb \in \PRT,$ $\pi[\Sb]$ and $\tau[\Sb]$ have exactly the same elements. Then, $\Pi_\C$ and $\Pi_{\C;\PRT}$ are equivalent up to groupwise permutations, i.e.,

    \begin{enumerate}
        \item $\forall\tau\in \Pi_{\C;\PRT}, \ \exists \pi\in\Pi_\C, \ \text{s.t. } \pi\sim_\PRT\tau$;
        \item $\forall\pi\in \Pi_{\C},$ if $\forall \Sb\in\PRT$, $(\Pb_\pi\C)_{\pi[\Sb],\pi[\Sb]}$ is invertible, then $\exists \tau\in\Pi_{\C;\PRT}, \ \text{s.t. } \pi\sim_\PRT\tau$.
    \end{enumerate}
\end{lemma}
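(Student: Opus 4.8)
The plan is to reduce the whole statement to a single elementary fact: a nonsingular square matrix over a field admits a column permutation producing a nowhere-zero diagonal (equivalently, its Leibniz expansion of the determinant has a nonzero term). First I would unwind the index notation. Since row $i$ of $\Pb_\pi\C$ is row $\pi_i$ of $\C$, one has $(\Pb_\pi\C)_{i,i}=\C_{\pi_i,i}$, so $\pi\in\Pi_\C$ is equivalent to $\C_{r,\pi[r]}\neq 0$ for every $r\in[m]$; likewise, for a block $\Sb\in\PRT$, the rows of $\Pb_\pi\C$ sitting at the positions $\pi[\Sb]$ are exactly the rows $\Sb$ of $\C$, so $(\Pb_\pi\C)_{\pi[\Sb],\pi[\Sb]}=\C_{\Sb,\pi[\Sb]}$, and ``$(\Pb_\pi\C)_{\pi[\Sb],\pi[\Sb]}$ invertible'' means ``$\C_{\Sb,\pi[\Sb]}$ invertible''. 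Finally, $\pi\sim_\PRT\tau$ says $\pi[\Sb]$ and $\tau[\Sb]$ are the same subset of $[m]$ for each $\Sb$; note that since $\PRT$ partitions the row indices and $\pi[\cdot]$ is a bijection, the sets $\{\pi[\Sb]\}_{\Sb\in\PRT}$ always partition the column index set $[m]$.

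With this dictionary, the second claim is essentially immediate: the identity matrix $\I$ is itself a general scaling matrix consistent with any $\PRT$ (block diagonal, each diagonal block $\I_{\Sb,\Sb}$ of full rank, off-block entries zero). Hence any $\pi\in\Pi_\C$ whose diagonal blocks $(\Pb_\pi\C)_{\pi[\Sb],\pi[\Sb]}=\C_{\Sb,\pi[\Sb]}$ are invertible already lies in $\Pi_{\C;\PRT}$, witnessed by $\D=\I$, so one may simply take $\tau:=\pi$, for which $\pi\sim_\PRT\tau$ holds trivially.

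The substance is in the first claim. Take $\tau\in\Pi_{\C;\PRT}$, witnessed by a general scaling matrix $\D$ consistent with $\PRT$, so $(\Pb_\tau\D\C)_{\tau[\Sb],\tau[\Sb]}$ has rank $|\Sb|$ for every $\Sb$. The same row and column bookkeeping gives $(\Pb_\tau\D\C)_{\tau[\Sb],\tau[\Sb]}=(\D\C)_{\Sb,\tau[\Sb]}$. Now I would use that $\D$ is block diagonal: for $r\in\Sb$ the only columns $s$ with $\D_{r,s}\neq 0$ lie in $\Sb$, hence $(\D\C)_{\Sb,\tau[\Sb]}=\D_{\Sb,\Sb}\,\C_{\Sb,\tau[\Sb]}$ with $\D_{\Sb,\Sb}$ invertible; therefore $\C_{\Sb,\tau[\Sb]}$ is a nonsingular $|\Sb|\times|\Sb|$ matrix. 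By the Leibniz expansion of its nonzero determinant, there is a bijection $\phi_\Sb:\Sb\to\tau[\Sb]$ with $\C_{r,\phi_\Sb(r)}\neq 0$ for all $r\in\Sb$. Doing this for every block and patching the $\phi_\Sb$ together defines $\pi$ via $\pi[r]:=\phi_\Sb(r)$ for $r\in\Sb$; this is well defined and bijective precisely because the blocks $\Sb$ partition the rows while the images $\tau[\Sb]$ partition the columns. Then $\C_{r,\pi[r]}\neq 0$ for all $r$, so $\pi\in\Pi_\C$, and $\pi[\Sb]=\tau[\Sb]$ for every $\Sb$, so $\pi\sim_\PRT\tau$; as a byproduct $\C_{\Sb,\pi[\Sb]}=\C_{\Sb,\tau[\Sb]}$ is invertible, so the $\pi$ thus produced moreover has invertible diagonal blocks.

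I do not expect a genuine obstacle. The only place requiring care is keeping straight the three meanings of a permutation --- its list of elements $\pi_i$, the position map $\pi[j]$, and the induced submatrix indexing $(\Pb_\pi\C)_{\pi[\Sb],\pi[\Sb]}$ --- so that the two reductions $(\Pb_\pi\C)_{\pi[\Sb],\pi[\Sb]}=\C_{\Sb,\pi[\Sb]}$ and $(\Pb_\tau\D\C)_{\tau[\Sb],\tau[\Sb]}=\D_{\Sb,\Sb}\C_{\Sb,\tau[\Sb]}$ become transparent; once these are in hand, both directions fall out of the determinant-expansion fact applied block by block.
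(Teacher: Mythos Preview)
Your proposal is correct and follows essentially the same route as the paper: both arguments reduce each block condition to the invertibility of $\C_{\Sb,\tau[\Sb]}$ via the factorization $(\Pb_\tau\D\C)_{\tau[\Sb],\tau[\Sb]}=(\Pb_\tau)_{\tau[\Sb],\Sb}\,\D_{\Sb,\Sb}\,\C_{\Sb,\tau[\Sb]}$, then invoke the Leibniz expansion blockwise to manufacture $\pi$. Your treatment of part~2 is slightly cleaner---you take $\D=\I$ and $\tau:=\pi$ directly, whereas the paper verifies the block-rank condition for \emph{every} admissible $\D_\PRT$ (which is more than the $\exists\D_\PRT$ in the definition requires)---but this is a cosmetic difference, not a different strategy.
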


\cref{lemma:groupwise_permutation_equivalent} tells that all permutations that can result in nonzero diagonal entries on an invertible matrix are groupwise equivalent to all permutations that in result in invertible diagonal blocks on the same matrix corresponding to a given partition of the row indices. Note that $1.$ is universally true, while $2.$ needs an additional mild assumption that the partition and nonzero-diagonal permutation itself result in invertible diagonal blocks. Consider a counterexample: $\C = \begin{bNiceArray}{ccc}
   1 & 1  & 2 \\
   1  & 1 & 3 \\
   1&0&1
\end{bNiceArray}$ is invertible, and a partition of row indices $\PRT = \{(1,2), (3,)\}$. Clearly the identity $\pi$ (i.e., $\Pb_\pi = \I$) is in $\Pi_\C$, with $\C$ already having nonzero diagonal entries. However, we cannot find any $\tau\in\Pi_{\C;\PRT}$ with $\pi\sim_\PRT\tau$:
\[\D_\PRT\C = \begin{bNiceArray}{ccc}
   a & b  & 0 \\
   c  & d & 0 \\
   0&0&e
\end{bNiceArray}\begin{bNiceArray}{ccc}
   1 & 1  & 2 \\
   1  & 1 & 3 \\
   1&0&1
\end{bNiceArray}=\begin{bNiceArray}{ccc}
   a+b & a+b  & 2a+3b \\
   c+d  & c+d & 2c+3d \\
   e&0&e
\end{bNiceArray},\]
either $\tau = (1,2,3)$ or $(2,1,3)$ is not in $\Pi_{\C;\PRT}$, because $\begin{bNiceArray}{cc}
   a+b & a+b \\
   c+d  & c+d
\end{bNiceArray}$ is already not invertible itself. Therefore, we make an additional assumption for $2.$, which is, as we can see later, trivially satisfied for our choice of invertible $\C$.\looseness=-1

Now we prove the correctness of~\cref{lemma:groupwise_permutation_equivalent}:
\begin{proof}
    First, $\Pi_\C$ is nonempty, because $\C$ is invertible, $\operatorname{det}(\C) = \sum_\pi \operatorname{sgn}(\pi)\Pi_{i=1}^m{\C_{i,\pi_i}} \neq 0$, then at least there is one $\pi$ s.t. $\forall i=1,\cdots,m$, $\C_{i,\pi_i}\neq 0$, and so the inverse of this $\pi$ will do a rows permutation with nonzero diagonals. The nonemptiness of $\Pi_{\C;\PRT}$ can be proved using a similar idea (i.e., $\forall \PRT, \forall \C, \exists \tau$ s.t. $\Pb_\tau\D_\PRT\C$ has invertible diagonal blocks), but using group decomposition of the determinant expression, called ``Generalized Laplacian expansion''~\citep{janjic2008proof}.

    To prove $1.$, for any $\tau\in\Pi_{\C;\PRT}$, initialize a new empty $\pi$. For any group $\Sb\in\PRT$, by definition, the block $(\Pb_\tau\D_\PRT\C)_{\tau[\Sb],\tau[\Sb]}$ is invertible. Note that $(\Pb_\tau\D_\PRT\C)_{\tau[\Sb],\tau[\Sb]} = (\Pb_\tau)_{\tau[\Sb],\Sb} \cdot (\D_\PRT)_{\Sb,\Sb} \cdot \C_{\Sb,\tau[\Sb]}$, so $\C_{\Sb,\tau[\Sb]}$ must also be invertible. By above nonemptiness, we know that  $\C_{\Sb,\tau[\Sb]}$ can be row permutated to one with nonzero diagonal entries. Then, we set the corresponding indices as this permutation, i.e., set $(\Pb_\pi)_{\tau[\Sb],\Sb}$ as this permutation submatrix. Then $\Pb_\pi\C$ has nonzero diagonals, and for any $\Sb$, $\pi[\Sb]$ and $\tau[\Sb]$ have the same elements (row indices).\looseness=-1

    To prove $2.$, it sufficies to show that for any $\pi\in\Pi_\C$, there is also $\pi\in\Pi_{\C;\PRT}$. For any $\Sb\in\PRT$, consider the principal submatrix $(\Pb_\pi\C)_{\pi[\Sb],\pi[\Sb]}$, which is assumed to be invertible. Note that $(\Pb_\pi\C)_{\pi[\Sb],\pi[\Sb]} = (\Pb_\pi)_{\pi[\Sb],\Sb}\cdot\C_{\Sb,\pi[\Sb]}$, and so $\C_{\Sb,\pi[\Sb]}$ is invertible. For any $\D_\PRT$, we have $(\Pb_\pi\D_\PRT\C)_{\pi[\Sb],\pi[\Sb]} = (\Pb_\pi)_{\pi[\Sb],\Sb} \cdot (\D_\PRT)_{\Sb,\Sb} \cdot \C_{\Sb,\pi[\Sb]}$, where each factor is invertible, so $(\Pb_\pi\D_\PRT\C)_{\pi[\Sb],\pi[\Sb]}$ is also invertible. Then, $\pi\sim_\PRT\pi$ trivially.
\end{proof}

By setting the invertible matrix $\C$ as the ISA characterization $\B_{\Sb,\Sb}^{-1}$ (\cref{subsec:BMM_inv}), we know that the admissible post-processing of rows permutation on any general ISA solution matrices $\W$ will make all subspaces at the correct location. The additional assumption for 2. that diagonal blocks of $\C$ are invertible echoes the `weak stability' assumption mentioned in~\citep{hyttinen2012learning}. For correctness, specifically, the $1$-dim subspaces (independent components), including the $T$ and $T$'s children that we are interested in, can be identified at the correct location (for each LiNG model in the equivalence class $\mathcal{B}$). The last step left is to associate the $\B_{\Sb,\Sb}^{-1}$ to the local adjacencies $\A_{\Sb,\Sb}$. By~\cref{eq:BSS_inv_outside}, though in general $\B_{\Sb,\Sb}^{-1}\neq \A_{\Sb,\Sb}$, they must be equal on the rows of $T$ and $T$'s children, as their parents are all included in $\Sb$. Finally, the correctness of~\cref{alg:local_isa_ling} is proved.

\subsection{Proof of~\cref{coro:local_stable}}\label{proof:coro_local_stable}
\COROLOCALSTABLE*
\begin{proof}
By the indeterminacy of ISA (\cref{thm:indeterminacies_of_isa}), the LiNG's ISA characterization (\cref{thm:BSS_inv_is_ISA}), and the characterization of the LiNG global equivalence class (\cref{def:LiNG_equiv_class}), we know that for each admissible $\A'$ at the line 8 of~\cref{alg:local_isa_ling}, there exists a ground-truth model $\A \in \mathcal{B}$ with the corresponding permutation, s.t. $\A' = \I - \D (((\I - \A)^{-1})_{\Sb,\Sb})^{-1}$, where $\Sb = \{T\}\cup\mb_\G(T)$, and $\D$ is the general scaling matrix for diagonal ones.

When the cycles in $\G$ are disjoint, all graphs in the LiNG equivalence class have disjoint cycles, and the unique global stable model $\A^*$ is the one where all cycles' products have absolute values less than one. As here stability is determined merely by the cycle products, to prove~\cref{coro:local_stable}, we only need to show the following statement:

Consider a LiNG equivalence class $\mathcal{B}$ where cycles are disjoint. For any adjacency matrix $\A\in \mathcal{B}$ and its corresponding graph $\G$ and mixing matrix $\B \coloneqq (\I - \A)^{-1}$, for any $\Sb \subset \mathcal{V}$ (not necessarily a vertex and its Markov blanket), we initialize a local graph termed $\G^{(\Sb)}$ over $\Sb$ from the local adjacency matrix termed $\A^{(\Sb)} \coloneqq \I - \B_{\Sb,\Sb}^{-1}$. From~\cref{example:BSS_inv_no_diagonal_ones} we know that $\B_{\Sb,\Sb}^{-1}$ does not necessarily have all diagonals as ones, so we further remove all self-loops on $\G^{(\Sb)}$. Then, all cycles in this local $\G^{(\Sb)}$, if any, must be disjoint. Further, for each local cycle in $\G^{(\Sb)}$ consisting of vertices $\mathbf{C}^{(\Sb)} \subset \Sb$, we have the followings:
\begin{enumerate}
    \item All vertices on the local cycle need no row scalings, i.e., $\forall i \in \mathbf{C}^{(\Sb)}, \ (\B_{\Sb,\Sb}^{-1})_{i,j} = 1$, and
    \item There exists a global cycle in $\G$ consisting of vertices $\mathbf{C}$ with $\mathbf{C}^{(\Sb)} \subset \mathbf{C}$ (in a consistent ordering), and,
    \item The cycle product of this local cycle $\mathbf{C}^{(\Sb)}$ on $\G^{(\Sb)}$ equals the cycle product of the global cycle $\mathbf{C}$ on $\G$.
\end{enumerate}

The above statement follows from~\cref{eq:BSS_inv_outside}: the $(i,j)$-th entry of $\I - \B_{\Sb,\Sb}^{-1}$ corresponds not only to the direct causal effect from $j$ to $i$, but also the total causal effect from $j$ to $i$ through all other variables outside of $\Sb$. For the above point 1., a vertex $i$ has non-unit diagonal $\B_{\Sb,\Sb}^{-1} \neq 1$ only when $i$ is involved in a cycle where all the remaining vertices in this cycle is not in $\Sb$ (see~\cref{example:BSS_inv_no_diagonal_ones}), so that this cycle appear as a self-loop on $i$ relative to $\Sb$. As cycles are disjoint, $i$ cannot belong to any cycle in $\G^{(\Sb)}$. Point 1 shows that whenever a cycle can appear locally, the edge weights on this cycle must follow exactly from $\B_{\Sb,\Sb}^{-1}$, without any scalings. Then, using the characterization in~\cref{eq:BSS_inv_outside}, points 2 and 3 show that the cycle products can be preserved locally. A local stable model (with disjoint cycles and abs(cycle products)$<1$) must correspond to a global model with those stable cycles, which, in~\cref{alg:local_isa_ling}'s case, implies the correct local stable model among $\{T\} \cup \ch_{\G^*}(T)$.
\end{proof}

\begin{wrapfigure}[10]{r}{0.18\textwidth}
\begin{tikzpicture}[inner sep=0.5pt, >=stealth]
    \node (X1a) {$X_1$};
    \node (X2a) [right=15pt of X1a] {$X_2$};
    \node (X3a) [right=15pt of X2a] {$X_3$};
    \path [->] (X1a) edge[bend left=30] node[above=2pt]{\scriptsize{$a$}} (X2a);
    \path [->] (X2a) edge[bend left=30] node[below=1pt]{\scriptsize{$b$}} (X1a);
    \path [->] (X2a) edge[bend left=30] node[above=2pt]{\scriptsize{$c$}} (X3a);
    \path [->] (X3a) edge[bend left=30] node[below=1pt]{\scriptsize{$d$}} (X2a);
    \node [right=-1pt of X3a, yshift=-10pt] {\small{\textit{(i)}}};

    \node (X1c) [below=25pt of X1a] {$X_1$};
    \node (X2c) [right=15pt of X1c] {$X_2$};
    \node (X3c) [right=15pt of X2c] {$X_3$};
    \path [->] (X3c) edge[bend right=30] node[above=1pt]{\scriptsize{$1/c$}} (X2c);
    \path [->] (X2c) edge[bend right=30] node[below=1pt]{\scriptsize{$1/d$}} (X3c);
    \path [->] (X2c) edge[bend left=30] node[above=2pt]{\scriptsize{$b$}} (X1c);
    \path [->] (X1c) edge[bend right=70] node[pos=0.3,left=3pt]{\scriptsize{$-a/d$}} (X3c);
    \node [right=-1pt of X3c, yshift=-10pt] {\small{\textit{(ii)}}};

    \node (X1b) [below=30pt of X1c] {$X_1$};
    \node (X2b) [right=15pt of X1b] {$X_2$};
    \node (X3b) [right=15pt of X2b] {$X_3$};
    \path [->] (X2b) edge[bend right=30] node[above=1pt]{\scriptsize{$1/a$}} (X1b);
    \path [->] (X1b) edge[bend right=30] node[below=1pt]{\scriptsize{$1/b$}} (X2b);
    \path [->] (X2b) edge[bend left=30] node[above=2pt]{\scriptsize{$c$}} (X3b);
    \path [->] (X3b) edge[bend left=70] node[pos=0.3,right=2pt]{\scriptsize{$-d/a$}} (X1b);
    \node [right=-1pt of X3b, yshift=-10pt] {\small{\textit{(iii)}}};
    
\end{tikzpicture}
\label{fig:example_3_nodes_2_cycles}
\end{wrapfigure}
\textbf{Remark.} \ \  
However, note that the above proof relies on the assumption that the cycles in $\G$ are disjoint. However, when some cycles in $\mathcal{G}$ intersect, things become more complex. The figure to the right shows an example of three cyclic graphs in a LiNG equivalence class with intersected cycles. Globally, there may be none (e.g., when $a=b=c=d=0.8$) or multiple (e.g., both \textit{(i)} and \textit{(ii)} when $a=b=c=0.8$, $d=-2$) global stable models. In this case, while our method can still identify local correspondings of all equivalent models (\cref{thm:correct_isa_ling}), and some unstable solutions may be partially eliminated (using the local stability constraint), we show that the exact identification of the global stable solutions from local variables alone becomes inherently impossible, because intuitively, external cycles appear as self-loops on the local variables. Note that when cycles intersect, the simple cycles' products cannot be related to the stability directly anymore: a LiNG model can be stable with some cycle products larger than one, and a LiNG model with all abs(cycle products) less than one can also be unstable. Even if we force to use cycle products to define stability (as some papers~\citep{rothenhausler2015backshift} do), the abovementioned unidentifiability issue of global stable solutions from local variables still remains.

\subsection{Proof of~\cref{lemma:independent_noise}}\label{proof:lemma_independent_noise}
\LEMMAINDEPENDENTNOISE*
The proof can be referred to~\citep{shimizu2011directlingam}, by using Darmois-Skitovitch theorem~\citep{darmois1953analyse,skitovitch1953property}.

\subsection{Proof of~\cref{lemma:correct_regression_coefs}}\label{proof:lemma_correct_regression_coefs}
\LEMMACORRECTREGRESSIONCOEFS*
The proof follows naturally from~\cref{lemma:when_parents_are_in_ISA} (in the acyclic graph case).

\subsection{Proof of~\cref{thm:inv_direct_lingam_correct}}\label{proof:thm_inv_direct_lingam_correct}
\THMCORRECTNESSOFINVDIRECTLINGAM*
\begin{proof}
    At every iteration of~\cref{alg:inverse_direct_lingam}, consider a `last' remaining vertex $j\in\U$ s.t. there exists no other $j'\in\U$ as $j$'s descendant on $\G$. $1^\circ$ if $j$ is $T$ or $T$'s child, since $\pa_\G(j)\subset \mb_\G(T)$, and since none of $\pa_\G(j)$ can be removed earlier, we have $\pa_\G(j)\subset \U$. With all the parents in $\U$ and no descendants in $\U$, regress $j$ on $\U\backslash\{j\}$ will produce independent residual, and the nonzero coefficients correspond to the true direct parents with true weights, i.e., $\beta_{\U\backslash\{j\}\rightarrow j}^i = \A_{j,i}$; $2^\circ$ if $j$ is $T$'s spouse, since $j$ is `last', none of $j$ and $T$'s common children and descendants are in $\U$. Also since $\pa_\G(T)\subset \U$, every confounding path between $j$ and $T$ is blocked, and thus $\beta_{\U\backslash\{j\}\rightarrow j}^T = 0$; $3^\circ$ it is impossible for $j$ to be $T$'s parents, since when $T$ pops out, the program breaks.
\end{proof}

\subsection{Proof of~\cref{proposition:markov_blanket}}\label{proof:proposition_markov_blanket}

We first state the following lemmas from \citet[Lemmas 1 \& 2]{ng2021reliable} (which were based on \citet{loh2014high}). The original lemmas in \citet{ng2021reliable} focus on linear acyclic SEM, but their proofs do not make use of the acyclicity constraint. Therefore, we restate the lemmas here for cyclic graphs, whose proofs are similar to the acyclic case and omitted.
\begin{lemma}\label{lemma:theta_entries}
Suppose $\mathbf{X}$ follows the linear SEM in \cref{eq:linear_sem_adj} with directed cyclic graph $\G$ and inverse covariance matrix $\mathbf{\Theta}$.
The entries of $\Theta$ are given by
\begin{alignat*}{3}
\mathbf{\Theta}_{j,i} &= -\sigma_j^{-2}\A_{i,j}-\sigma_i^{-2}\A_{j,i} + \sum_{\mathclap{\ell\neq j,i}}\sigma_\ell^{-2}\A_{j,\ell}\A_{i,\ell}, \qquad&& \forall j\neq k, \\
\mathbf{\Theta}_{j,j} &= \sigma_j^{-2} + \sum_{\ell\neq j}\sigma_\ell^{-2}\A_{j,\ell}^2, \qquad&& \forall j. \\
\end{alignat*}
\end{lemma}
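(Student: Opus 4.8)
The plan is to establish the factorization of the precision matrix and then read off its entries by a direct expansion, following the same line of reasoning as the acyclic arguments of \citet{loh2014high} and \citet{ng2021reliable}. The crucial observation — and the reason the cyclic case needs no new idea — is that the whole computation uses only the invertibility of $\I-\A$, which the LiNG assumptions guarantee (no eigenvalue of $\A$ equals one), and never the acyclicity of $\G$. First I would record the factorization $\mathbf{\Theta} = (\I-\A)\mathbf{\Omega}^{-1}(\I-\A)^\intercal$ already stated in the text: it follows from the reduced form $\X=(\I-\A)^{-1}\E$ together with the mutual independence of the noises, which makes $\operatorname{cov}(\E)=\mathbf{\Omega}$ diagonal, by the same computation as in the acyclic setting.

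Next I would expand this matrix product entrywise. Writing $(\I-\A)_{j,\ell}=\delta_{j\ell}-\A_{j,\ell}$ and using that $\mathbf{\Omega}^{-1}=\operatorname{diag}(\sigma_1^{-2},\dots,\sigma_d^{-2})$ is diagonal collapses the double sum into a single one,
\[
\mathbf{\Theta}_{j,i}=\sum_{\ell}\sigma_\ell^{-2}\,(\delta_{j\ell}-\A_{j,\ell})(\delta_{i\ell}-\A_{i,\ell}).
\]
For the off-diagonal case $j\neq i$ the term $\delta_{j\ell}\delta_{i\ell}$ vanishes identically, the two cross terms collapse through the Kronecker deltas to $-\sigma_j^{-2}\A_{i,j}$ and $-\sigma_i^{-2}\A_{j,i}$, and the remaining quadratic contribution is $\sum_\ell\sigma_\ell^{-2}\A_{j,\ell}\A_{i,\ell}$. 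For the diagonal case $j=i$ the delta term gives $\sigma_j^{-2}$, the linear cross terms give $-2\sigma_j^{-2}\A_{j,j}$, and the quadratic term is $\sum_\ell\sigma_\ell^{-2}\A_{j,\ell}^2$.

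The last step is a simplification using the no-self-loop assumption $\operatorname{diag}(\A)=\mathbf{0}$ of the LiNG model. In the off-diagonal quadratic sum the summands at $\ell=j$ and $\ell=i$ carry the factor $\A_{j,j}$ or $\A_{i,i}$ and hence drop out, reducing the range to $\ell\neq j,i$ and yielding exactly $\sum_{\ell\neq j,i}\sigma_\ell^{-2}\A_{j,\ell}\A_{i,\ell}$; in the diagonal case the linear term $-2\sigma_j^{-2}\A_{j,j}$ disappears and the $\ell=j$ summand of the quadratic term drops, leaving $\sigma_j^{-2}+\sum_{\ell\neq j}\sigma_\ell^{-2}\A_{j,\ell}^2$. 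These are precisely the two expressions claimed, which also shows that the left-hand side of \cref{eq:assumption_inv_cov} is exactly $\mathbf{\Theta}_{j,i}$. I expect no genuine obstacle here: the argument is a routine matrix computation, and the only point worth stressing is that acyclicity is never invoked — this is exactly what lets the acyclic formula of \citet{loh2014high} carry over verbatim to directed cyclic graphs.
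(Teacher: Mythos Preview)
Your proposal is correct and matches the paper's approach exactly: the paper does not give a detailed proof but simply remarks that the argument of \citet{loh2014high} and \citet[Lemmas~1~\&~2]{ng2021reliable} carries over unchanged because acyclicity is never invoked, which is precisely the content and emphasis of your write-up. Your explicit entrywise expansion of $(\I-\A)\mathbf{\Omega}^{-1}(\I-\A)^\intercal$ and the use of $\operatorname{diag}(\A)=\mathbf{0}$ to trim the sums fill in exactly the omitted computation.
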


\begin{lemma}\label{lemma:theta_subgraph}
Suppose $\mathbf{X}$ follows the linear SEM in \cref{eq:linear_sem_adj} with directed cyclic graph $\G$ and inverse covariance matrix $\mathbf{\Theta}$. Then, the structure defined by the support of $\mathbf{\Theta}$ is a subgraph of the moral graph of $\G$.
\end{lemma}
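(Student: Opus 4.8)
The plan is to prove the stated equality by establishing the two inclusions between the support of $\mathbf{\Theta}$ and the edge set of the moral graph of $\G$ separately, invoking \cref{lemma:theta_subgraph} for one direction and \cref{assumption:inv_cov} (through the entry formula of \cref{lemma:theta_entries}) for the other. Since \cref{proposition:markov_blanket} asserts an equality, both inclusions are genuinely needed, and only the second one uses \cref{assumption:inv_cov}.

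For the inclusion that the support of $\mathbf{\Theta}$ is contained in the moral graph, I would simply cite \cref{lemma:theta_subgraph}, which already establishes that the support of $\mathbf{\Theta}$ is a subgraph of the moral graph of $\G$. Concretely, by the entry formula of \cref{lemma:theta_entries}, a nonzero off-diagonal entry $\mathbf{\Theta}_{j,i}$ forces at least one of $\mathbf{\A}_{i,j}$, $\mathbf{\A}_{j,i}$, or a product $\mathbf{\A}_{j,\ell}\mathbf{\A}_{i,\ell}$ to be nonzero, and each of these witnesses a moral-graph adjacency, i.e.\ either a direct edge between $i$ and $j$ or a shared child. As the excerpt notes that this lemma's proof does not rely on acyclicity, this direction transfers unchanged to the cyclic setting and needs no further argument.

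The substance of the proposition lies in the reverse inclusion, which I would obtain by contraposition from \cref{assumption:inv_cov}. Fix indices $j<i$ and assume $\{i,j\}$ is an edge of the moral graph. By definition, either $i$ and $j$ are adjacent in $\G$, so $\mathbf{\A}_{i,j}\neq 0$ or $\mathbf{\A}_{j,i}\neq 0$, or they share a common child, so $\mathbf{\A}_{j,\ell}\mathbf{\A}_{i,\ell}\neq 0$ for some $\ell\neq i,j$. In every case the right-hand condition of \cref{assumption:inv_cov} --- namely $\mathbf{\A}_{i,j}=\mathbf{\A}_{j,i}=0$ together with $\mathbf{\A}_{j,\ell}\mathbf{\A}_{i,\ell}=0$ for all $\ell\neq j,i$ --- is violated. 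Because \cref{lemma:theta_entries} identifies the left-hand side of \cref{eq:assumption_inv_cov} with $\mathbf{\Theta}_{j,i}$, the contrapositive of \cref{assumption:inv_cov} yields $\mathbf{\Theta}_{j,i}\neq 0$, so $\{i,j\}$ belongs to the support of $\mathbf{\Theta}$. Combining this with the first inclusion gives the equality asserted in \cref{proposition:markov_blanket}.

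The step I expect to demand the most care is aligning the graph-theoretic disjuncts defining a moral-graph edge with the algebraic disjuncts that negate the right-hand side of \cref{assumption:inv_cov}, so that the contrapositive applies exactly and no moral edge is missed. The essential role of \cref{assumption:inv_cov} is precisely to forbid the accidental cancellations in which individually nonzero direct-effect and common-child terms sum to zero; without it a genuine moral edge could vanish from $\mathbf{\Theta}$, breaking the reverse inclusion while leaving \cref{lemma:theta_subgraph} intact. As the excerpt remarks, such cancellations occur only on a set of coefficient matrices of Lebesgue measure zero, so the assumption holds generically. No computation beyond the bookkeeping already supplied by \cref{lemma:theta_entries} is required, since the cyclic generalization is fully absorbed into that lemma.
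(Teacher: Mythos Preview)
Your proposal targets the wrong statement. The lemma you are asked to prove, \cref{lemma:theta_subgraph}, asserts only \emph{one} inclusion: that the support of $\mathbf{\Theta}$ is a subgraph of the moral graph of $\G$. You instead set out to prove the two-sided equality of \cref{proposition:markov_blanket}, and in doing so you invoke \cref{lemma:theta_subgraph} itself to dispatch the forward inclusion. That is circular: the lemma you are citing is precisely the claim you were asked to establish.

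The correct argument for \cref{lemma:theta_subgraph} is, however, already present in your second paragraph as a side remark. By the entry formula of \cref{lemma:theta_entries}, if $\mathbf{\Theta}_{j,i}\neq 0$ for $j\neq i$ then at least one of $\mathbf{A}_{i,j}$, $\mathbf{A}_{j,i}$, or some product $\mathbf{A}_{j,\ell}\mathbf{A}_{i,\ell}$ with $\ell\neq i,j$ is nonzero; each of these exhibits either a directed edge between $i$ and $j$ in $\G$ or a common child $\ell$, hence an edge in the moral graph. That single contrapositive step is the entire proof of \cref{lemma:theta_subgraph} and requires no appeal to \cref{assumption:inv_cov}. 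This matches the paper's treatment, which simply omits the proof and notes it carries over unchanged from the acyclic case. Everything in your third and fourth paragraphs---the reverse inclusion via the contrapositive of \cref{assumption:inv_cov}---belongs to the proof of \cref{proposition:markov_blanket}, not to \cref{lemma:theta_subgraph}, and should be deleted here.
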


We then provide the proof for the following proposition.
\PROPMARKOVBLANKET*
\begin{proof}
By Lemma \ref{lemma:theta_subgraph}, the structure defined by the support of $\mathbf{\Theta}$ is a subgraph of the moral graph of $\G$. By Assumption \ref{assumption:inv_cov} and Lemma \ref{lemma:theta_entries}, if $\mathbf{\Theta}_{j,i}=0$, then we have $\A_{i,j}=\A_{j,i}= 0$ and $\A_{j,\ell}\A_{i,\ell}= 0$ for all $\ell\neq j,i$, which, by definition, indicates that $i$ and $j$ are not adjacent in the moral graph of $\G$. This indicates that the moral graph of $\G$ is a subgraph of the structure defined by the support of $\mathbf{\Theta}$.
\end{proof}

\section{SUPPLEMENTARY EXPERIMENTS DETAILS}\label{sec:supplementary_experiment_details}
We provide additional details for the experiments conducted in \cref{sec:experiments}. Specifically, we provide the implementation details of our method and the baselines in \cref{sec:implementation_details_ours,sec:implementation_details_baselines}, respectively. We then describe in \cref{sec:metrics} the performance metrics used in our experiments.

\subsection{Implementation Details of Local ISA-LiNG}\label{sec:implementation_details_ours}
The proposed Local ISA-LiNG method involves estimating the demixing matrix with ISA (see \cref{alg:local_isa_ling}). One could use the ISA procedure developed by \citet{theis2006general}. In this work, we use ICA to first estimate the  components, and then use independence test, i.e. the Hilbert-Schmidt independence criterion (HSIC) \citep{gretton2007kernel}, to identify the subspaces from the components estimated by ICA. Such a procedure is found to work well in practice. For the HSIC test in our experiments, we use a significance level of $0.05$. Furthermore, as suggested by \citet{lacerda2008discovering}, we adopt ICA with sparse connection \citep{zhang2006ica} for the specific ICA procedure. In the acyclic case , we use a more efficient post-processing procedure to identify the edges from the demixing matrix $\W$, described in Appendix \ref{sec:acyclic_postprocessing}.

We perform nodewise regression with Lasso \citep{meinshausen2006high} to estimate the MB of each target variable (see \cref{sec:markov_blanket} for more details). Moreover, when applying \cref{alg:local_isa_ling,alg:post_processing_isa}
to identify the edges from the estimated demixing matrix, we use a threshold of $0.05$ to set the entries with small absolute values to zero. All experiments are conducted on $2$ CPUs with $4$GB of memory. The code is available at \url{https://github.com/MarkDana/local-ling-discovery}.

\subsection{Implementation Details of Existing Methods}\label{sec:implementation_details_baselines}
We provide the implementation details of several existing methods considered in our experiments. All experiments are conducted on $2$ CPUs with $4$GB of memory. For Local A* and GSBN, we perform nodewise regression with Lasso \citep{meinshausen2006high} to estimate the MB of each target variable, similar to our Local ISA-LiNG method. For CMB and LDECC, we use their default method for MB estimation. In the following, we describe the implementation details of each method.

\textbf{ICA-LiNG.} \ \  
As suggested by \citet{lacerda2008discovering}, we use ICA with sparse connection \citep{zhang2006ica} for the specific ICA procedure, and a threshold of $0.05$ for the demixing matrix, similar to our method.

\textbf{Local A*.} \ \  
The original Local A* method \citep{ng2021reliable} applies exact search strategy like A* \citep{yuan2013learning} on the target $T$, its MB $\mb_\G(T)$, and MB of each variable in $\mb_\G(T)$. In the estimated structure, the method then identifies all (1) undirected edges involving $T$ and (2) v-structures involving $T$ to be the final local structure around $T$. In our modification, we apply A* on only $T$ and its MB $\mb_\G(T)$. In this case, the modified method identifies all (1) undirected edges involving $T$ and (2) v-structures of which $T$ is not the collider to be the final estimated local structure around $T$. The resulting algorithm performs local discovery on only $T$ and its MB $\mb_\G(T)$. Here, we use BIC score \citep{schwarz1978estimating} for the A* search.

\textbf{GSBN.} \ \  
The original GSBN method \citep{margaritis1999bayesian} applies certain rules based on conditional independence tests to identify all (1) undirected edges involving $T$ and (2) v-structures of which $T$ is the collider. The latter requires information of two-step MBs, i.e., $\mb_\G(T)$ and MB of each variable in $\mb_\G(T)$. In our modification, we adopt different but similar rules to identify (1) undirected edges involving $T$ and (2) v-structures of which $T$ is not the collider:
\begin{enumerate}
\item For each $X\in\operatorname{mb}_{\mathcal{G}}(T)$, determine $X$ to be a (direct) neighbor of $T$ if $T\not\indep X | S$ for all $S\subseteq \mb_{\mathcal{G}}(T)\setminus\{Y\}$.
\item Given the neighbors of target $T$ identified in the first step, we use the following rule to identify the v-structures: for each $Z\in\operatorname{mb}_{\mathcal{G}}(T)$ and $Y$ being a neighbor of $T$, determine $T\rightarrow Y \leftarrow Z$ to be a v-structure if $T\not\indep Z | S\cup\{Y\}$ for all $S\subseteq \mb_{\mathcal{G}}(T)\setminus\{Y,Z\}$.
\end{enumerate}
The resulting algorithm performs local discovery on only $T$ and its MB $\mb_\G(T)$. Here, we use Fisher Z test with significance level of $0.05$ for identifying conditional independence relations.

\textbf{CMB.} \ \ 
For the CMB method \citep{gao2015local}, we use an implementation through the pyCausalFS package.

\textbf{LDECC.} \ \ 
We use the default implementation provided by \citet{gupta2023local}.

\subsection{Performance Metrics}\label{sec:metrics}
In the acyclic case, we report two performance matrics, namely SHD of local DAG and PDAG. In the cyclic case, we report the SHD of local DCG. These metrics are explained in details below. For each setting, the metrics are calculated over $8$ random simulations.

\textbf{SHD of local DAG.} \ \  
For this metric, the ground-truth and estimated structures contain all incoming directed edges of $T$ and its children. We then compute the SHD between the ground-truth and estimated local structures. Such a metric is used to validate our method (see Theorem \ref{thm:correct_isa_ling}) that can estimate more edges and directions than the baselines. Note that the estimated output by Local A* and GSBN may contain undirected edges; therefore, we enumerate all possible combinations of directed edges from these undirected ones, and compute the final averaged SHD for these two methods.

\textbf{SHD of local PDAG.} \ \  
Since Local A* and GSBN return PDAG around the target $T$, we design this metric specifically for these baselines. In particular, the ground-truth and estimated local structures contain (1) undirected edges involving $T$ and (2) v-structures of which $T$ is not the collider. We then compute the SHD between the ground-truth and estimated local structures. Note that since our method returns DAG around the target $T$ that contains additional edges, we convert it into the same format of PDAG as well.

\textbf{SHD of local DCG.} \ \ 
This metric is similar to the SHD of local DAG explained above, except that the ground truth may contain cycles.

\section{SUPPLEMENTARY EXPERIMENTS RESULTS}\label{sec:supplementary_experiment_results}

\subsection{Running Time}\label{sec:running_time}
We report the running times for different sizes of MBs. Specifically, we follow the data generating procedure in \cref{sec:acyclic_experiments}. We generate random DAGs with expected degrees of $3$, $5$, and $7$, and select target with a relatively large MB. The running times of different methods, including those with and without oracle MBs, are shown in \cref{fig:running_time}. Note that, for Local A*, instances with MBs exceeding $19$ variables are omitted due to the long running time. It is observed that our method has a longer running time than that of LDECC and CMB. Furthermore, when size of MB increases, the running time of our method is shorter and increases much slowly compared to GSBN and Local A*.
\begin{figure}[!h]
\centering
\includegraphics[width=0.43\textwidth]{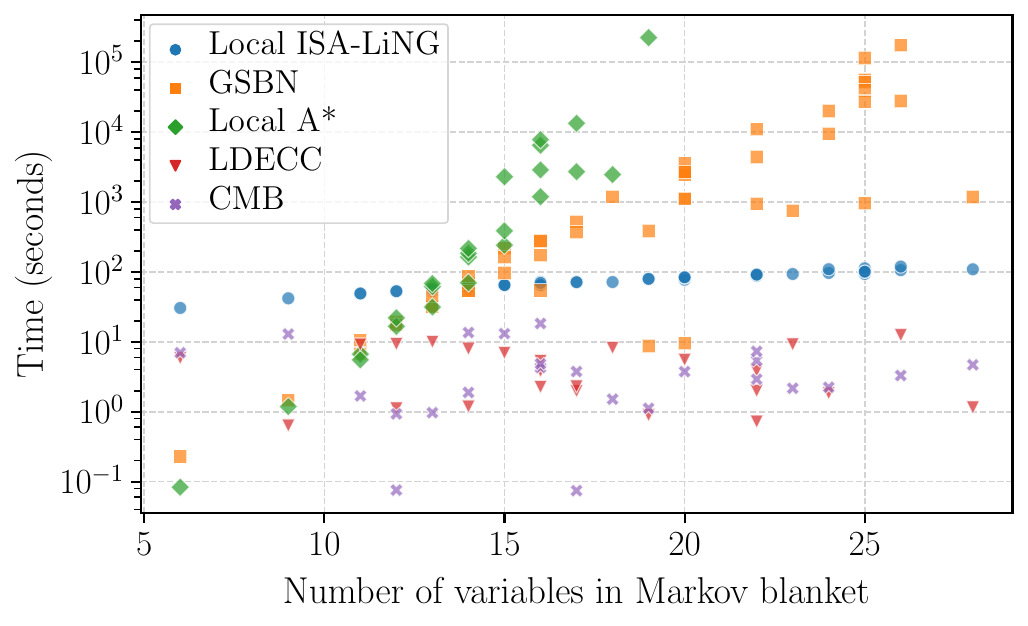}
\caption{Running time of different methods. Y-axis is in log scale.}
\label{fig:running_time}
\end{figure}

\subsection{Additional Figures}\label{sec:supplementary_experiment_figures}
This section provides additional figures for \cref{sec:experiments}, namely Figures \ref{fig:cyclic_graph_outputs}, \ref{fig:oracle_mb_dag_shd_cyclic}, \ref{fig:lasso_mb_shd_degree_3_acyclic}, \ref{fig:oracle_mb_shd_degree_3_acyclic}, \ref{fig:lasso_mb_shd_degree_5_acyclic}, \ref{fig:lasso_mb_different_sample_sizes}, \ref{fig:oracle_mb_edge_weights}, \ref{fig:lasso_mb_edge_weights}, and \ref{fig:sachs_exp}.

\begin{figure*}[!h]
\centering  
\subfloat[Ground-truth cyclic graph.]{
    \includegraphics[width=0.24\textwidth]{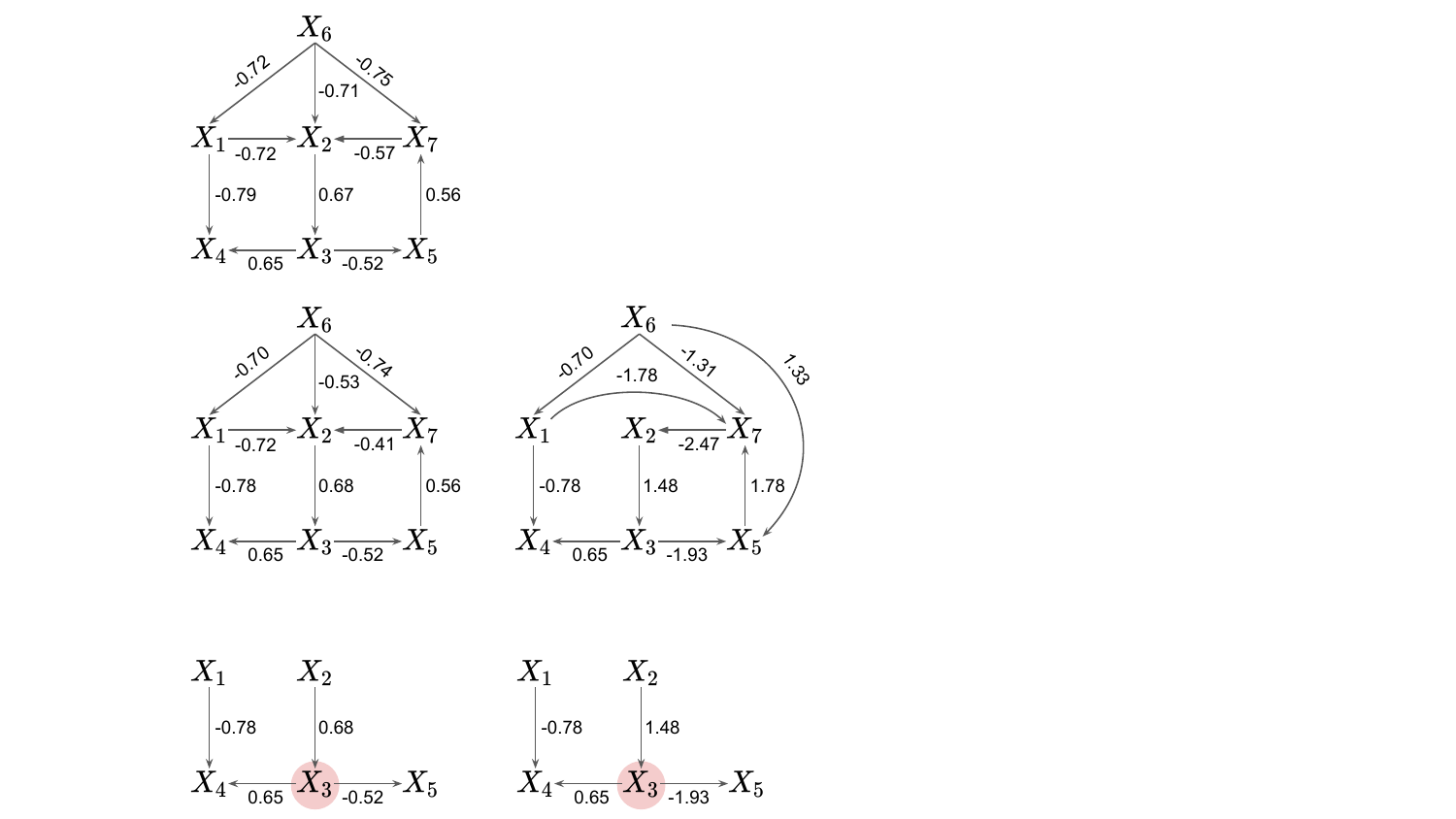}
}\\
\subfloat[Estimated graphs and edge weights by ICA-LiNG.]{
    \includegraphics[width=0.52\textwidth]{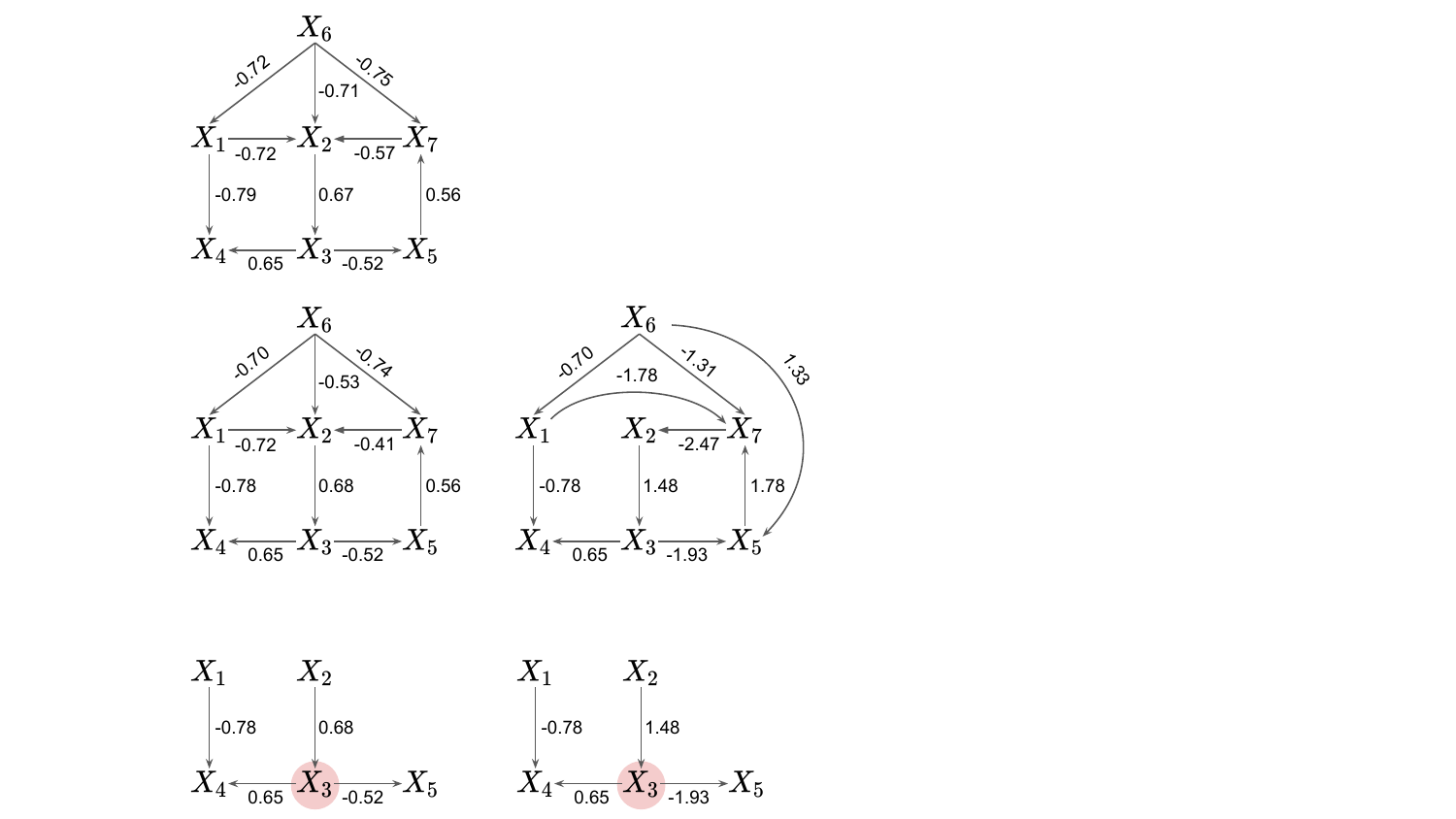}
}\\
\subfloat[Estimated graphs and edge weights by Local ISA-LiNG using $T=3$ as target.]{
    \includegraphics[width=0.52\textwidth]{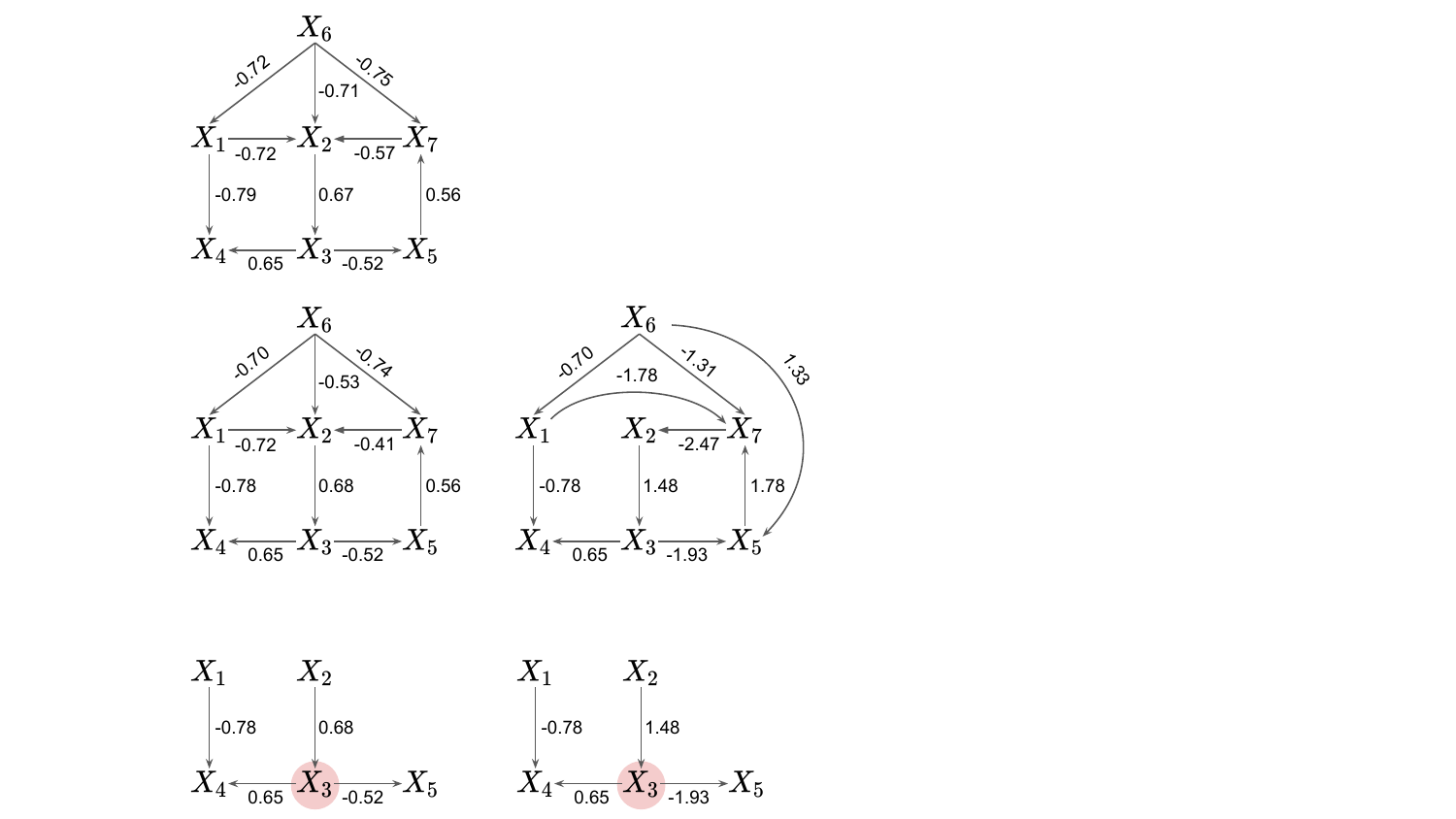}
}
\caption{Ground truth and estimated cyclic graphs and edge weights with $2000$ samples.}
\label{fig:cyclic_graph_outputs}
\end{figure*}

\begin{figure}[!h]
\centering
\includegraphics[width=0.35\textwidth]{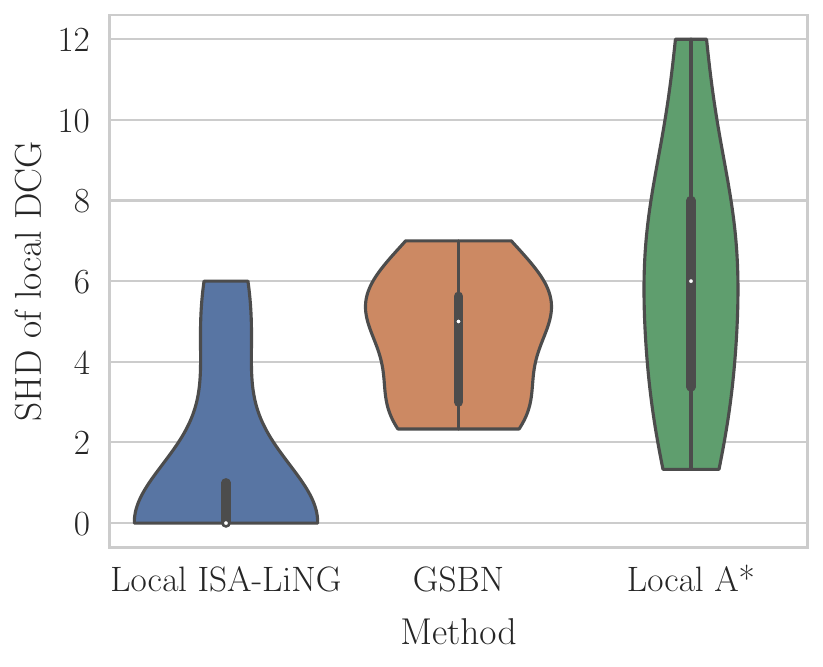}
\vspace{-0.4em}
\caption{SHD of local DCG under oracle MB.}
\label{fig:oracle_mb_dag_shd_cyclic}
\end{figure}

\begin{figure*}[!h]
\centering
\subfloat[SHD of local DAG.]{
    \includegraphics[width=0.4\textwidth]{figures/comparison_with_baselines/acyclic_case/degree_3/lasso_mb_comprehensive_dag_shd.pdf}
}\hspace{1em}
\subfloat[SHD of local PDAG.]{
    \includegraphics[width=0.4\textwidth]{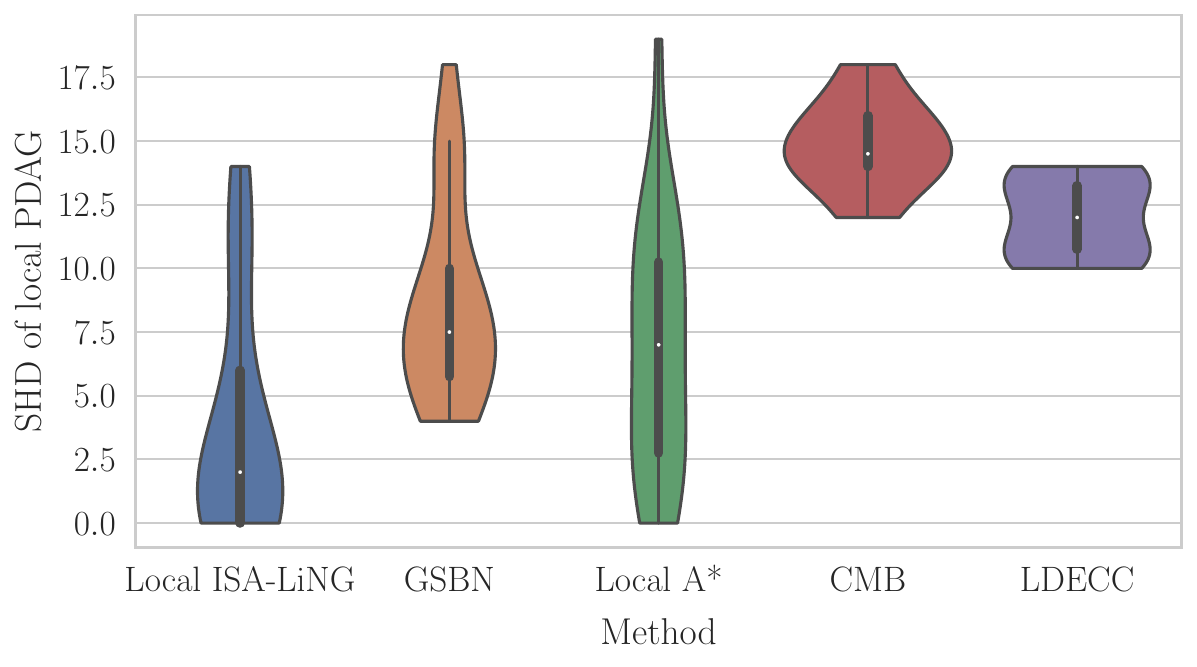}
    \label{fig:lasso_mb_pdag_shd_degree_3_acyclic}
}
\caption{Results of local causal discovery with $2000$ samples and degree of $3$ under estimated MB.}
\label{fig:lasso_mb_shd_degree_3_acyclic}
\end{figure*}

\begin{figure*}[!h]
\centering
\subfloat[SHD of local DAG.]{
    \includegraphics[width=0.35\textwidth]{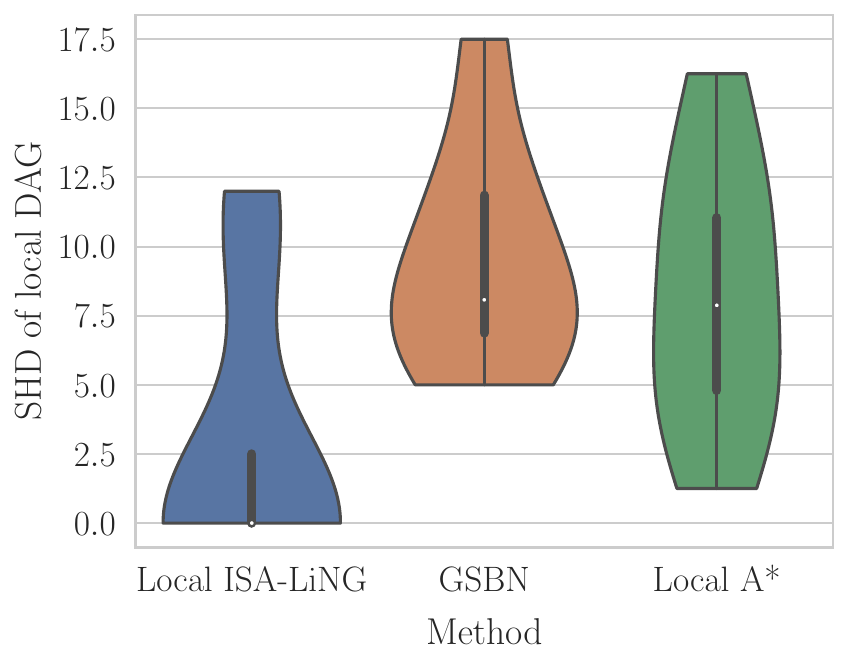}
    \label{fig:oracle_mb_dag_shd_degree_3_acyclic}
}\hspace{1em}
\subfloat[SHD of local PDAG.]{
    \includegraphics[width=0.35\textwidth]{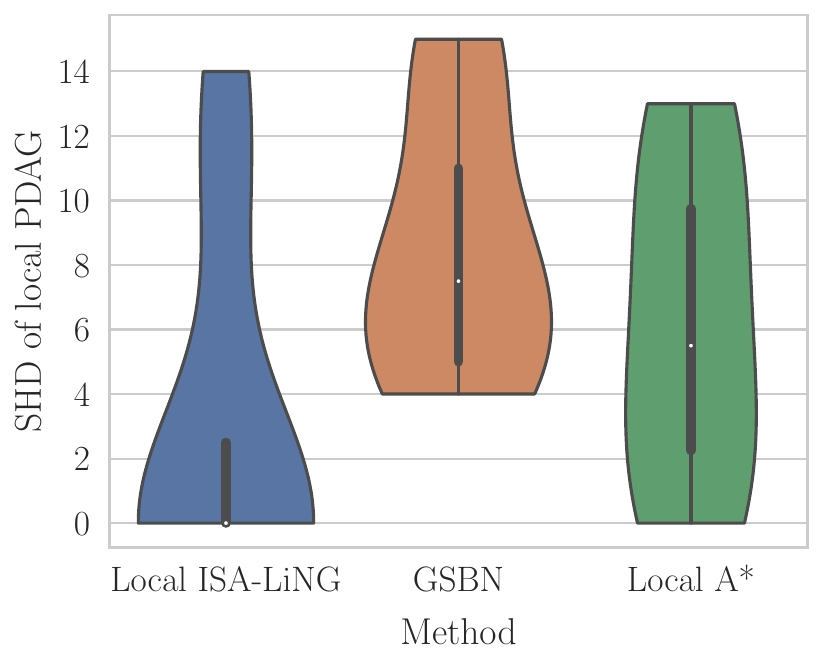}
    \label{fig:oracle_mb_pdag_shd_degree_3_acyclic}
}
\caption{Results of local causal discovery with $2000$ samples and degree of $3$ under oracle MB.}
\label{fig:oracle_mb_shd_degree_3_acyclic}
\end{figure*}

\begin{figure*}[!h]
\centering
\subfloat[SHD of local DAG.]{
    \includegraphics[width=0.4\textwidth]{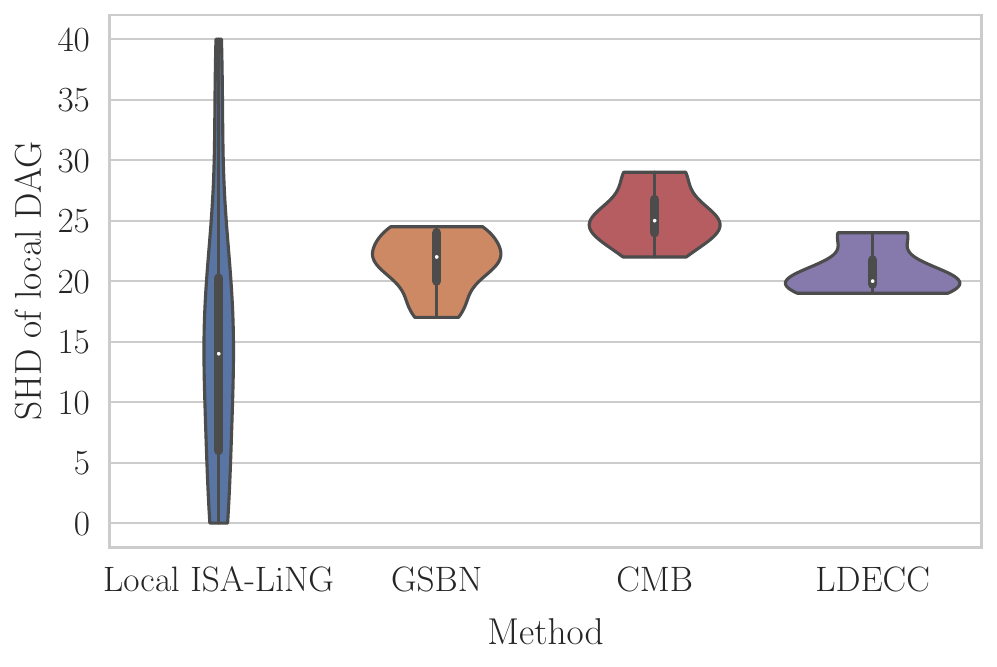}
    \label{fig:oracle_mb_dag_shd_degree_5_acyclic}
}\hspace{1em}
\subfloat[SHD of local PDAG.]{
    \includegraphics[width=0.4\textwidth]{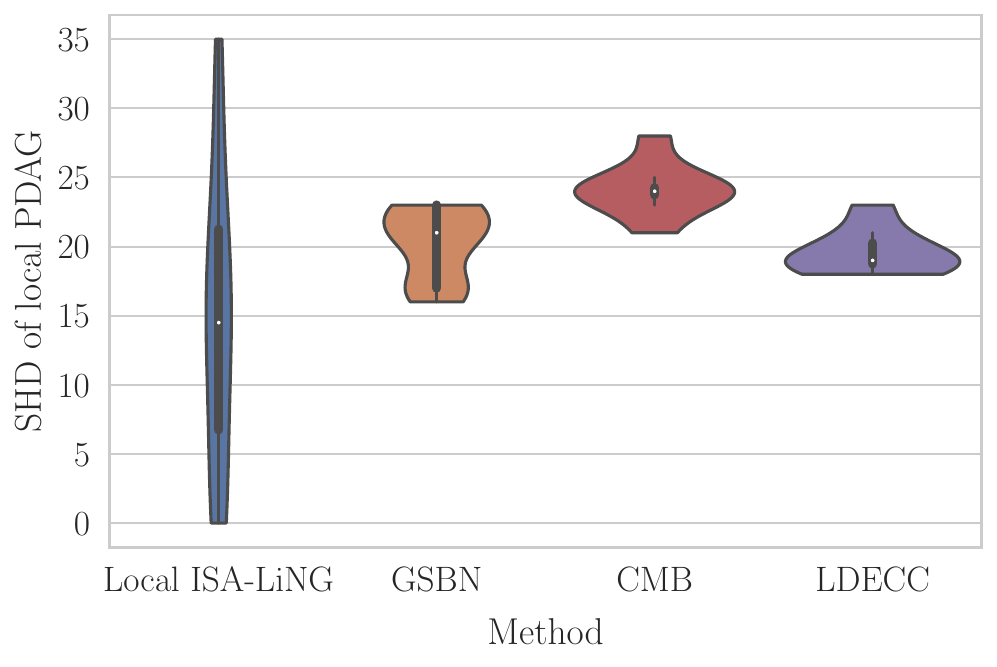}
    \label{fig:oracle_mb_pdag_shd_degree_5_acyclic}
}
\caption{Results of local causal discovery with $2000$ samples and degree of $5$ under estimated MB.}
\label{fig:lasso_mb_shd_degree_5_acyclic}
\end{figure*}

\begin{figure}[!h]
\centering
\includegraphics[width=0.4\textwidth]{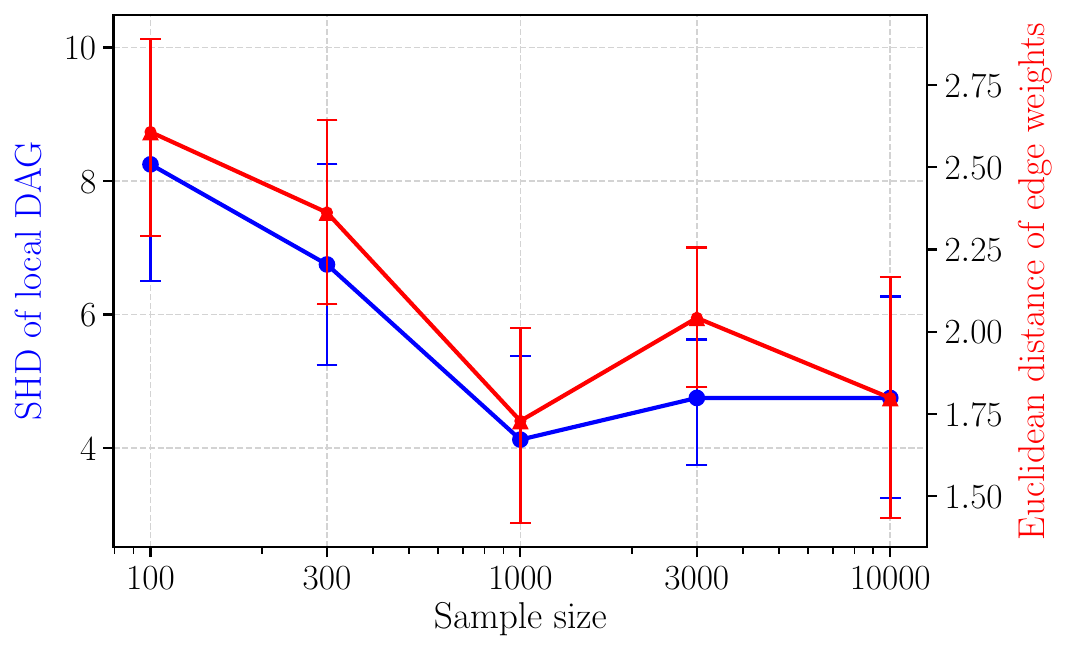}
\caption{Results of Local ISA-LiNG with MB estimated by Lasso. X-axis is visualized in log scale.}
\label{fig:lasso_mb_different_sample_sizes}
\end{figure}

\begin{figure*}[!t]
\centering  
\subfloat[$100$ samples.]{
    \includegraphics[width=0.3\textwidth]{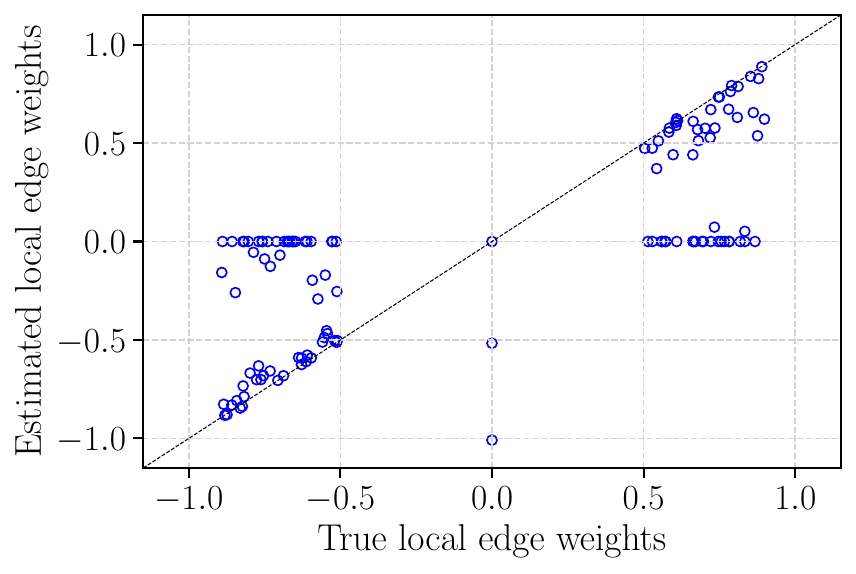}
    \label{fig:oracle_mb_edge_weights_100_samples}
}
\subfloat[$300$ samples.]{
    \includegraphics[width=0.3\textwidth]{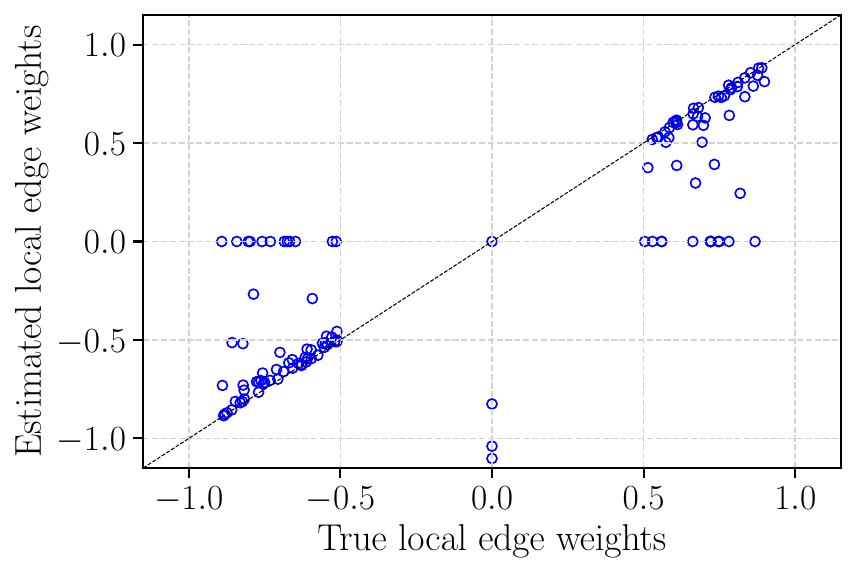}
    \label{fig:oracle_mb_edge_weights_300_samples}
}
\subfloat[$1000$ samples.]{
    \includegraphics[width=0.3\textwidth]{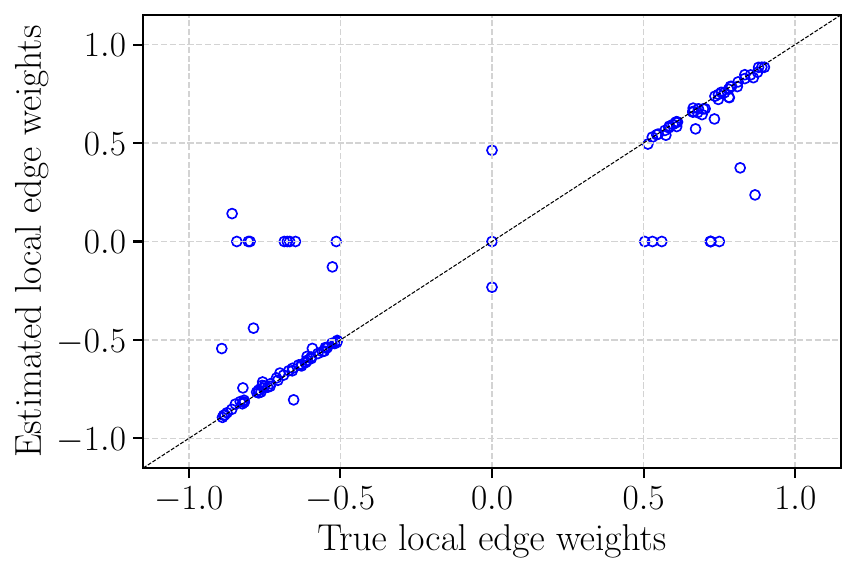}
    \label{fig:oracle_mb_edge_weights_1000_samples}
}
\caption{Edge weights estimated by Local ISA-LiNG under oracle MB.}
\label{fig:oracle_mb_edge_weights}
\end{figure*}

\begin{figure*}[!h]
\centering  
\subfloat[$100$ samples.]{
    \includegraphics[width=0.3\textwidth]{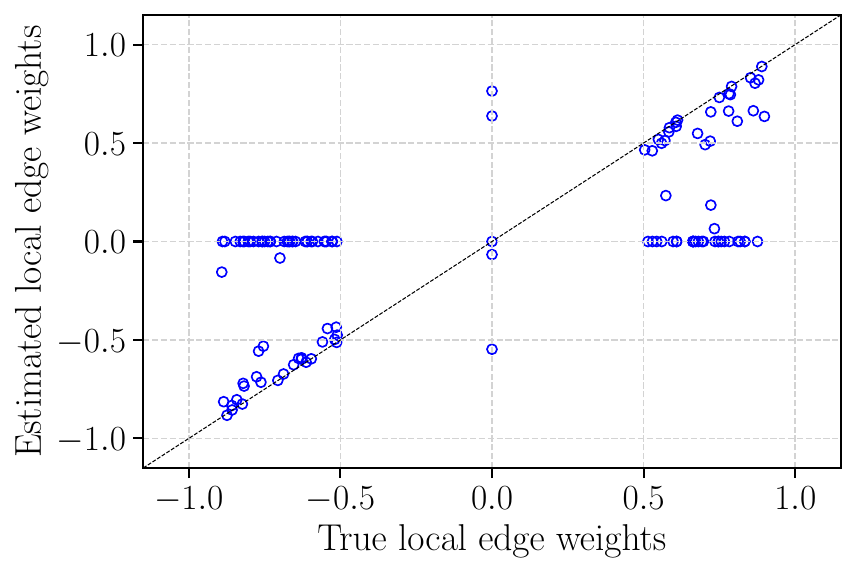}
    \label{fig:lasso_mb_edge_weights_100_samples}
}
\subfloat[$300$ samples.]{
    \includegraphics[width=0.3\textwidth]{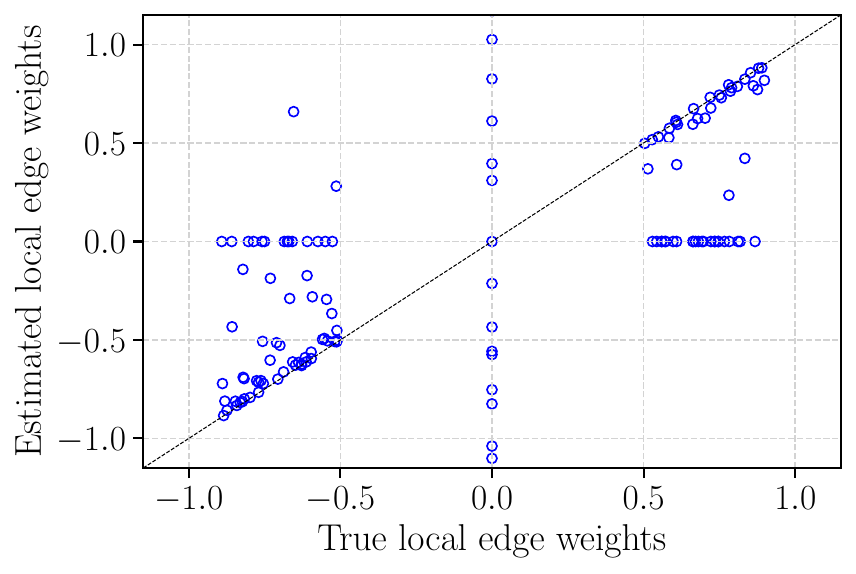}
    \label{fig:lasso_mb_edge_weights_300_samples}
}
\subfloat[$1000$ samples.]{
    \includegraphics[width=0.3\textwidth]{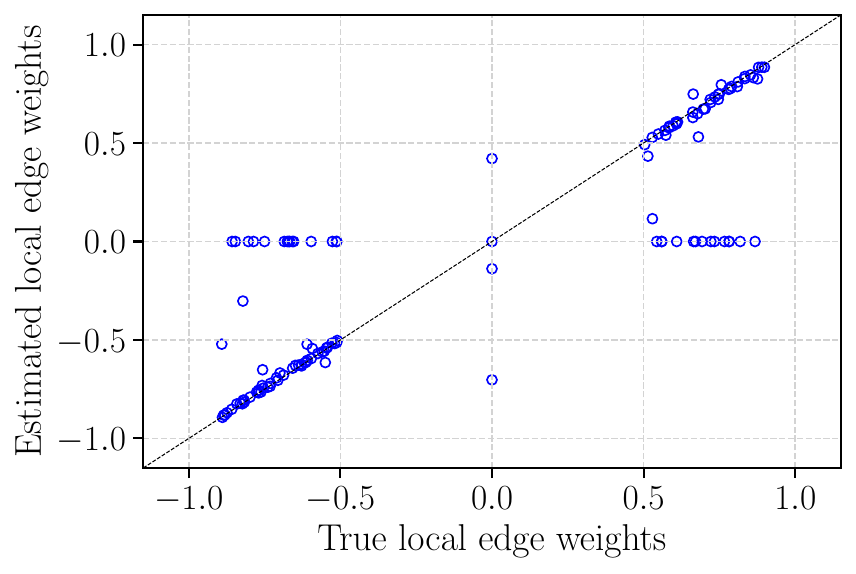}
    \label{fig:lasso_mb_edge_weights_1000_samples}
}
\caption{Edge weights estimated by Local ISA-LiNG under estimated MB.}
\label{fig:lasso_mb_edge_weights}
\end{figure*}

\begin{figure*}
    \centering
    \includegraphics[width=16cm]{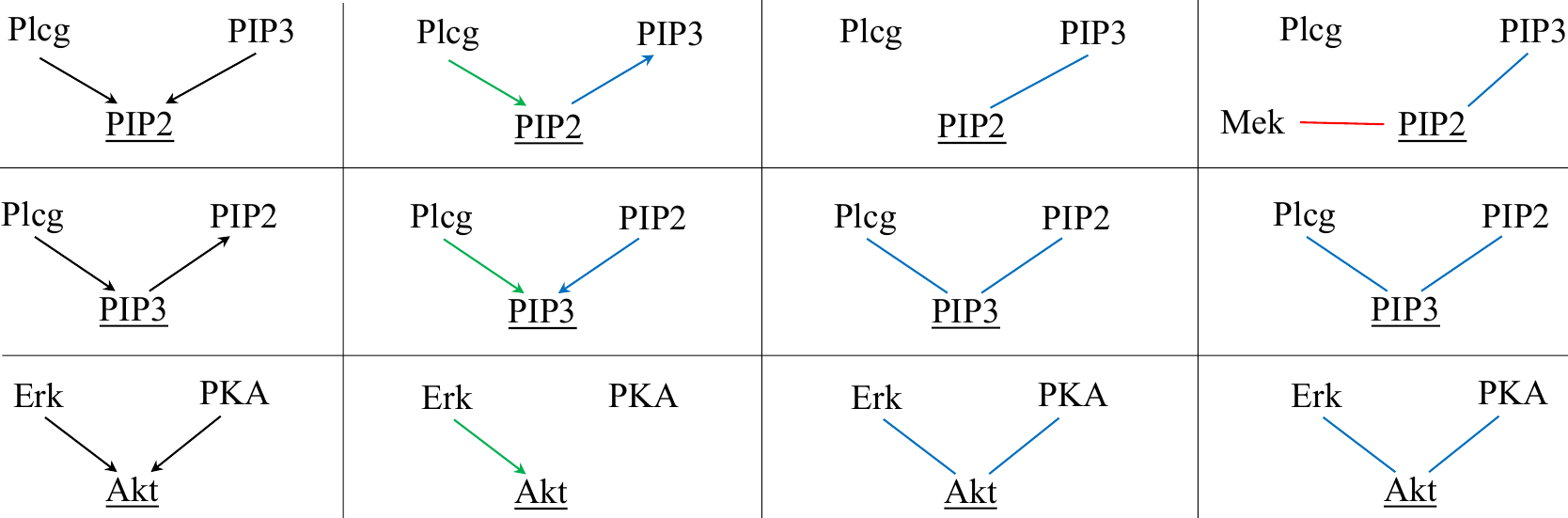}
    \caption{Result of local causal discovery on real-world dataset by \citet{sachs_data}. The first column showcases the ground-truth local causal graphs. From the second to the last column, each column corresponds to the local causal graphs recovered by (1) Local ISA-LiNG, (2) Local A$^{*}$, and (3) GSBN, respectively. We use underlined vertices to denote target variables. For the second column, green and blue arrows denote correct directed edges and wrong directed edges discovered by our method, respectively. For the third and fourth columns, blue lines denote correct undirected edges discovered by the baselines.}
    \label{fig:sachs_exp}
\end{figure*}

\end{document}